\documentclass{article}
\usepackage{fullpage,graphicx,psfrag,amsmath,amsthm,amsfonts,amssymb,verbatim,algpseudocode,comment,mathtools}
\usepackage{bbm}
\usepackage[linesnumbered,ruled,vlined]{algorithm2e}

\usepackage{natbib} 
\usepackage{graphicx}
\usepackage{xcolor}
\usepackage{enumerate}
\usepackage{textcomp}
\usepackage{caption}
\usepackage{subcaption}
\usepackage{mathrsfs}
\usepackage[citecolor=blue]{hyperref} 
\usepackage{thmtools,thm-restate}                             

\newtheorem{theorem}{Theorem}
\newtheorem{lemma}{Lemma}

\newtheorem{corollary}{Corollary}

\newtheorem{assumption}{Assumption}

\theoremstyle{definition}

\usepackage{epigraph}
\usepackage{etoolbox}

\def\regret{\mathcal{R}}
\def\environment{\mathcal{E}}

\def\pseudoenvironment{\tilde{\environment}}

\def\target{\chi}
\def\pseudotarget{\tilde{\target}}

\def\actions{\mathcal{A}}

\def\histories{\mathcal{H}}

\def\regret{\mathcal{R}}
\def\KL{\mathbf{d}_{\mathrm{KL}}}
\def\TV{\mathbf{d}_{\mathrm{TV}}}

\def\diffentropy{\bf h}

\def\proxytheta{\hat{\theta}}

\def\E{\mathbb{E}}

\def\H{\mathbb{H}}
\def\diffentropy{\mathbf{h}}
\def\I{\mathbb{I}}
\def\Pr{\mathbb{P}}
\def\R{\mathbb{R}}
\def\1{\mathbf{1}}
\def\0{\mathbf{0}}

\def\Var{\mathrm{Var}}

\DeclareMathOperator*{\argmax}{arg\,max}
\DeclareMathOperator*{\argmin}{arg\,min}

\newcommand{\Oc}{\mathcal{O}}
\newcommand\numberthis{\addtocounter{equation}{1}\tag{\theequation}} 

\def\nln{\nonumber \\}
\def\p#1{\left( #1 \right) }
\def\sb#1{\left[ #1 \right] }

\def\bbar {\, \big | \,}
\def\bdbar {\, \big \| \,}
\def\sk#1#2{\stackrel{(#1)}{#2}}

\usepackage{hyperref}

\usepackage[mathscr]{euscript}
\hypersetup{
    colorlinks=true,
    linkcolor=blue,
    filecolor=magenta,      
    urlcolor=purple,
}
\usepackage{xcolor}

\newcount\Comments
\Comments=1 
\newcommand{\kibitz}[2]{\ifnum\Comments=1{\textcolor{#1}{\textsf{\footnotesize #2}}}\fi}
\usepackage{color}
\definecolor{darkred}{rgb}{0.7,0,0}
\definecolor{darkgreen}{rgb}{0.0,0.5,0.0}
\definecolor{darkblue}{rgb}{0.0,0.0,0.5}
\definecolor{teal}{rgb}{0.0,0.5,0.5}

\title{Gaussian Imagination in Bandit Learning}
\date{Draft Version: \today}
\author{Yueyang Liu 
\\ {\normalsize{yueyl@stanford.edu}}
\and Adithya M. Devraj
\\ {\normalsize{adevraj@stanford.edu}} 
\and Benjamin Van Roy
\\ {\normalsize{bvr@stanford.edu}}
\and Kuang Xu
\\ {\normalsize{kuangxu@stanford.edu}}
}
\date{
\large
Stanford University, Stanford, CA 94305
%\\[1em]
% \today
}

\begin{document}

\maketitle

\abstract{Assuming distributions are Gaussian often facilitates computations that are otherwise intractable. We study the performance of an agent that attains a bounded information ratio with respect to a bandit environment with a Gaussian prior distribution and a Gaussian likelihood function when applied instead to a Bernoulli bandit. Relative to an information-theoretic bound on the Bayesian regret the agent would incur when interacting with the Gaussian bandit, we bound the increase in regret when the agent interacts with the Bernoulli bandit. If the Gaussian prior distribution and likelihood function are sufficiently diffuse, this increase grows at a rate which is at most linear in the square-root of the time horizon, and thus the per-timestep increase vanishes. Our results formalize the folklore that so-called {\it Bayesian agents} remain effective when instantiated with diffuse misspecified distributions.
\\
\emph{Keywords}: misspecification, information ratio. 
}
\section{Introduction}
Early in the nineteenth century, Carl Friedrich Gauss studied astronomical observations through the lens of his namesake distribution.  While not perfectly capturing nuances of errors he intended to model, the use of a Gaussian distribution expedited calculations.  Indeed, Gauss was able to determine the orbit of a comet in a single hour where Leonhard Euler required three days \citep{Hall1970,Hermann2006}.  Ever since, pretending that random variables are Gaussian and designing solutions suited to that have served as common and effective practices in modeling and analysis.  In this paper, we study 
implications of these practices---which we refer to as {\it Gaussian imagination}---in the context of bandit learning.

Pretending that distributions are Gaussian can greatly facilitate computations carried out by a bandit learning agent.  For example, so-called {\it Bayesian agents}---such as Thompson sampling \citep{thompson1933likelihood}, information-directed sampling \citep{russo2018learning}, Bayes-UCB \citep{kaufmann2012bayesian}, and the knowledge gradient algorithm \citep{ryzhov2012knowledge}---can be implemented by computing at each time a posterior distribution over mean rewards and then selecting the next action based on this distribution.  In general, the associated computational requirements can be onerous, but they become manageable if relevant distributions -- the prior distribution and likelihood function -- are Gaussian.  A question that arises is whether an agent designed under these distributional assumptions ought to perform well---in terms of accumulating rewards, not just computational efficiency---when these assumptions are inconsistent with beliefs about the true environment.

We assess agent performance in terms of Bayesian regret.
One way of bounding Bayesian regret is through first bounding an agent's information ratio, which is a statistic that quantifies how the agent trades off between regret and information.
Various versions of the information ratio have been proposed and studied over the past decade \citep{RussoIDSNeurips2014,bubdekkorper15,russo2016information,bubeck2016multi,russo2018learning,Dong2018TS,russo2018satisficing,nikkirberkra18,zimlat19,latsze19,luvan19,bubsel,latsze20e,latgyo20,kirlatkra20,lu2021reinforcement,lattimore2021bandit,devraj2021bit}.  Each depends on beliefs about the environment, as expressed by a prior distribution and likelihood function.  Previous results establish that if an agent attains an attractive information ratio with respect to true beliefs, it also attains some level of effectiveness in accumulating rewards.  Consider an agent designed for a Gaussian bandit in the sense that it attains a low information ratio with respect to a Gaussian prior distribution and a Gaussian likelihood function. Will such an agent remain effective when true beliefs about the environment are characterized by different distributions?  Our analysis address this question when the true environment is a Bernoulli bandit. 

We establish a general Bayesian regret bound that applies to {\it any} agent.  In particular, we bound the amount by which Bayesian regret in the Bernoulli bandit can exceed an information-theoretic bound, one that applies when true beliefs are Gaussian.  Interestingly, we establish that this excess grows at a rate which is at most linear in the square-root of the time horizon and therefore represents a vanishing per-timestep difference.  

Our finding represents a dramatic improvement over what existing bounds suggest regarding the cost of misspecification.  For instance, \cite{simchowitz2021bayesian} and \cite{russo2014learning} establish bounds that grow quadratically in time horizon and exponentially in the number of actions, respectively.  To further explore its implications, we specialize our general Bayesian regret bound to %examples including 
Thompson sampling and information-directed sampling, each with computations carried out using imaginary Gaussian distributions.  This leads to $\Oc(\actions \sqrt{T \log T})$ bounds on the Bayesian regret incurred by these agents when applied to a Bernoulli bandit with suitably chosen imaginary Gaussian distributions, where $\actions$ and $T$ denote the number of actions and the time horizon.  The optimal bound for the Bernoulli bandit in terms of $\actions$ and $T$ is known to be $\Oc(\sqrt{\actions T})$ \citep{bubeck2013priorfree,latsze19}, which represents a factor of $\Oc(\sqrt{\actions \log T})$ difference.  It remains to be determined whether this difference is fundamental to use of misspecified Gaussian distributions or introduced due to our method of analysis.

A key assumption underlying our analysis is that the misspecified Gaussian distributions are sufficiently diffuse.  The importance of diffuseness has also been highlighted in work on frequentist analysis of Thompson sampling and KL-UCB applied to bandits with independent arms \citep{honda2013optimality, wager2021diffusion, fan2021fragility}.  In contrast, our results apply to {\it any} agent and allow for generalization across arms.  Further, our lens of Bayesian regret draws focus to a {\it true} prior, which is a concept missing from frequentist analysis, allowing us to characterize the impact of prior misspecification.  As such, our results formalize the folklore that Bayesian agents remain effective with misspecified distributions that are sufficiently diffuse.

Our analysis leverages properties of the Gaussian distribution that afford a level of robustness.  In particular, we exploit the fact that posterior covariances evolve in a manner that does not depend on realized rewards except through their influence on subsequent actions.  Covariances encode the agent's uncertainty, and this data-agnostic aspect of their updating ensures that the imaginary learning process reduces uncertainty regardless of the true reward distribution.  Uncertainty guides exploration, and without this reduction, an agent can engage in costly over-exploration.  It is noteworthy that our analysis resides more in the domain of Gauss than Laplace in the sense that it relies on the algebraic properties rather than the ability of the Gaussian distribution to represent asymptotic behavior of random phenomena. Whether Gaussian imagination has deeper connections to the latter remains an interesting question.

\section{Bandit Environments}
In this section, we introduce a general formulation for bandit environments and the concept of regret.  We also establish an information-theoretic regret bound. In particular, we show that if an algorithm satisfies an information ratio bound with respect to the true beliefs, we can derive an upper bound on the regret.  We will subsequently specialize this bandit environment formulation to Bernoulli and Gaussian bandits and study the implications of the regret bound.

The probability framework based on which we develop our analysis is introduced in Appendix \ref{sec:probability}. We also introduce information-theoretic concepts and notations, together with some useful relations in Appendix \ref{sec:information}. 

\subsection{Formulation}

Let $\actions = \{1,\ldots,|\actions|\}$ be a finite set of actions. Let $(R_t: t \in \mathbb{Z}_{++})$ be a random sequence of reward vectors, each taking values in $\R^\actions$. Note that we often use $\actions$ as shorthand for the set cardinality $|\actions|$. We sometimes denote the sequence of reward vectors by $R_{1:\infty}$ and, more generally, a sub-sequence $(R_t, R_{t+1}, \ldots, R_{t'})$ by $R_{t:t'}$. 

We think of rewards as generated by an environment. Formally, an environment $\environment$ is a random probability measure such that $R_{1:\infty}$ is i.i.d.~conditioned on $\environment$, and $\Pr(R_{1} \in \cdot | \environment) = \environment(\cdot)$. More precisely, $\environment$ is a probability measure-valued random variable that
takes on values in the set of all probability measures on $(\R^{\actions}, \mathcal{B})$, where $\mathcal{B}$ is the Borel $\sigma$-algebra on $\R^{\actions}$.
Let $\theta = \E[R_{1} | \environment]$ denote the vector of the mean rewards.
Let $A_* \sim \mathrm{unif}(\argmax_{a \in \actions} \theta_a)$ denote an optimal action. Let $\histories_t$ denote the set comprised of all sequences consisting of $t$ action-reward pairs. Let $\histories = \cup_{t=0}^\infty \histories_t$.  We refer to elements of $\histories$ as {\it histories}.  

We consider an agent that executes an {\it agent policy} $\pi_{\text{agent}}$, which assigns, for each realization of history $ h \in \mathcal{H}$, a probability $\pi_{\text{agent}}(a|h)$ of choosing an action $a$, for all $a \in \actions$. Fixing an arbitrary policy $\pi$, define $H^{\pi}_0$ as the empty history and $H^{\pi}_t = (A^{\pi}_0, R_{1,A^{\pi}_0}, \ldots, A^{\pi}_{t-1}, R_{t,A^{\pi}_{t-1}})$, where $\Pr(A^{\pi}_t = \cdot |H^{\pi}_t) =  \pi(\cdot|H^{\pi}_t)$ for each time $t \in \mathbb{Z}_{+}$. Note that $H^{\pi}_t$ represents the history generated as an agent executes policy $\pi$ by sampling each action $A^{\pi}_t$ from $\pi(\cdot|H^{\pi}_t)$ and receives the resulting reward $R_{t+1,A^{\pi}_t}$.  We denote the infinite sequence of actions and rewards by $H^{\pi}_\infty = (A^{\pi}_0, R_{1,A^{\pi}_0}, \ldots)$. Much of the paper studies properties of interactions under a specific policy $\pi_{\text{agent}}$. When it is clear from context, we suppress superscripts that indicate this. For example, we use $A_t = A^{\pi_{\text{agent}}}_t$, $H_t = H^{\pi_{\text{agent}}}_t$ for all $t \in \mathbb{Z}_+$, and $H_{\infty} = H^{\pi_{\text{agent}}}_{\infty}$ through much of the paper.

Over a horizon $T \in \mathbb{Z}_{+}$, the agent accumulates reward $\sum_{t=0}^{T-1} R_{t+1,A_t}$. We denote the maximum mean reward across actions by $R_* = \max_{a \in \actions} \theta_a$. We study an agent's performance in terms of the regret, short for the cumulative Bayesian regret:
\begin{equation}
\label{eq:regret}
\regret(T) = \E\left[\sum_{t=0}^{T-1} (R_* - R_{t+1,A_t})\right].
\end{equation}

\subsection{Information Ratio}
\label{sec:IR}

Our information-theoretic analysis of bandit learning centers around the concept of an information ratio. A basic version of the information ratio is defined by
$$\Gamma_{\environment} = \sup_{t \in \mathbb{Z}_+, h \in \histories_t} \frac{\E[R_* - R_{t+1, A_t} | H_t = h]^2}{\I(\environment; A_t, R_{t+1, A_t} | H_t = h)}.$$
This ratio represents a tradeoff between the expected regret $\E[R_* - R_{t+1, A_t} | H_t = h]$ incurred over a single timestep and the information $\I(\environment; A_t, R_{t+1, A_t} | H_t = h)$ gained about the environment. A small information ratio reflects the agent's ability to either maintain a low level of regret, gain a large amount of information, or both.

We now consider a more general version of the information ratio, defined with respect to a {\it learning target} $\target$, which is a random variable for which $\target \perp H_\infty | \environment$. This information ratio is defined by
$$\Gamma_\target = \sup_{t \in \mathbb{Z}_+, h \in \histories_t} \frac{\E[R_* - R_{t+1, A_t} | H_t = h]^2}{\I(\target; A_t, R_{t+1, A_t} | H_t = h)}.$$
Intuitively, a learning target represents a collection of statistics about the environment that the agent might aim to learn. As illustrated in Figure \ref{fig:target}, the requirement that $\target \perp H_\infty | \environment$ ensures that all information about the learning target that helps in predicting rewards is present in the environment. The new denominator represents information gained about the learning target $\chi$ rather than the environment $\environment$. It is worth mentioning that the learning target serves as a means to quantify the amount of information the agent accumulates, and is not necessarily the purpose of learning. That is, an agent may or may not aim to explicitly maximize learning with respect to the learning target. 

\begin{figure}[!ht]
\centering
\includegraphics[scale=0.237]{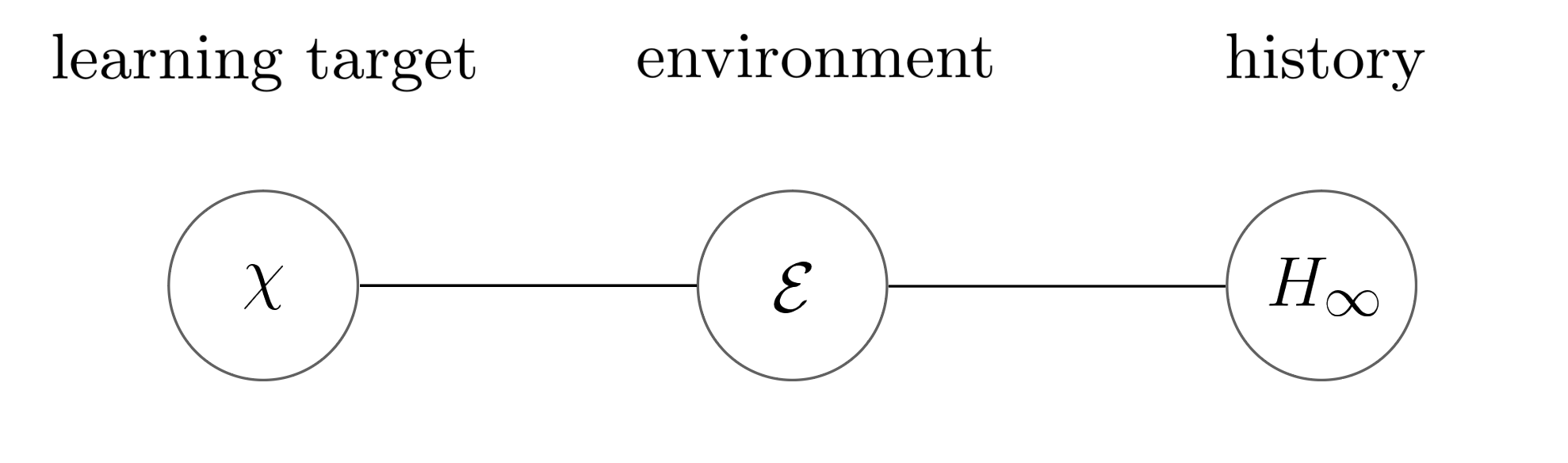}
\caption{Bayesian network expressing dependencies among random variables of interest.} 
\label{fig:target}
\end{figure}

The environment $\environment$ itself represents a possible choice of learning target.  However, the amount of information required to identify the environment, roughly captured by the entropy $\H(\environment)$,
may be intractably large or even infinite.
As such, it is useful to consider alternative learning targets.  A useful learning target is one that can be identified with modest, $ \I(\target; \environment) $ nats of information, where $\I(\target; \environment) \ll \H(\environment)$, while still rich enough to differentiate the mean rewards so as to enable effective action selection.

Over specific realizations of actions and rewards, an agent that learns a particular target $\target$ might not converge on the optimal action $A_*$ and thus on optimal per-timestep reward.  To allow for a meaningful definition of information ratio, we introduce a more general form that incorporates dependence on a tolerance $\epsilon \in \R_+$:
\begin{align*}
\Gamma_{\target,\epsilon} = \sup_{t \in \mathbb{Z}_+, h \in \histories_t} \frac{\E[R_* - R_{t+1, A_t} - \epsilon | H_t = h]_+^2}{\I(\target; A_t, R_{t+1, A_t} | H_t = h)}.
\end{align*}

\subsection{Regret Bound}
\label{sec:gen_reg_bd}

We introduce a result that bounds the regret in terms of the information ratio $\Gamma_{\target, \epsilon}$, the mutual information $\I(\target; \environment)$ between the learning target and the environment, the tolerance $\epsilon$, and the time horizon $T$.
The result is formally stated in the following theorem.
Note that this theorem is not the main result of the paper--- it serves as a foundation based on which we develop our analysis and a baseline against which we compare our main result. Similar information-theoretic bounds have been established in previous studies, a closely related one being \citep{lu2021reinforcement}. 

\begin{restatable}{theorem}{generalregretbound}
\label{th:general-regret-bound}
For all learning target $\target$, tolerance $\epsilon \in \R_+$, 
and time horizon $T \in \mathbb{Z}_{++}$, 
\begin{align*}
\regret(T) \leq \sqrt{\I(\target; \environment) \Gamma_{\target,\epsilon} T} + \epsilon T.
\end{align*}
\end{restatable}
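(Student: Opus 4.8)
The plan is to run the standard information-ratio argument, converting the per-timestep trade-off encoded in $\Gamma_{\target,\epsilon}$ into a global $\sqrt{T}$ bound via Jensen, Cauchy--Schwarz, and the chain rule for mutual information. First I would peel off the tolerance term and condition on the agent's history: writing $\regret(T) = \epsilon T + \E\big[\sum_{t=0}^{T-1}(R_* - R_{t+1,A_t} - \epsilon)\big]$ and applying the tower property, the inner sum equals $\sum_{t=0}^{T-1}\E\big[\,\E[R_* - R_{t+1,A_t} - \epsilon \mid H_t]\,\big]$. Since $x \le x_+$ pointwise, each summand is at most $\E\big[\big(\E[R_* - R_{t+1,A_t} - \epsilon \mid H_t]\big)_+\big]$, which is precisely the quantity whose square sits in the numerator of $\Gamma_{\target,\epsilon}$.

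Next I would invoke the definition of the information ratio: for every $t \in \mathbb{Z}_+$ and every realized history $h \in \histories_t$,
$$\big(\E[R_* - R_{t+1,A_t} - \epsilon \mid H_t = h]\big)_+^2 \;\le\; \Gamma_{\target,\epsilon}\,\I(\target; A_t, R_{t+1,A_t} \mid H_t = h).$$
Taking square roots, then expectations over $H_t$, and summing over $t$, this yields
$$\regret(T) - \epsilon T \;\le\; \sqrt{\Gamma_{\target,\epsilon}}\sum_{t=0}^{T-1}\E\Big[\sqrt{\I(\target; A_t, R_{t+1,A_t}\mid H_t)}\Big] \;\le\; \sqrt{\Gamma_{\target,\epsilon}}\,\sqrt{T\sum_{t=0}^{T-1}\E\big[\I(\target; A_t, R_{t+1,A_t}\mid H_t)\big]},$$
where the last step uses $\E\sqrt{X}\le\sqrt{\E X}$ (Jensen) followed by $\sum_t a_t \le \sqrt{T\sum_t a_t^2}$ (Cauchy--Schwarz).

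It then remains to show $\sum_{t=0}^{T-1}\E\big[\I(\target; A_t, R_{t+1,A_t}\mid H_t)\big] \le \I(\target;\environment)$. By the chain rule for mutual information this sum telescopes to $\I(\target; H_T)$, where $H_T = (A_0, R_{1,A_0}, \ldots, A_{T-1}, R_{T,A_{T-1}})$ is the length-$T$ history. Because the learning target satisfies $\target \perp H_\infty \mid \environment$ by assumption, the triple $\target \to \environment \to H_T$ forms a Markov chain, so the data-processing inequality gives $\I(\target; H_T) \le \I(\target;\environment)$. Substituting into the previous display produces $\regret(T) - \epsilon T \le \sqrt{\I(\target;\environment)\,\Gamma_{\target,\epsilon}\,T}$, which is the claim. (If $\Gamma_{\target,\epsilon} = \infty$ the bound is vacuous; and with the harmless convention $0/0=0$ in the definition of the information ratio, the first displayed inequality remains valid even at histories where the denominator vanishes.)

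I expect the only genuinely delicate point to be the bookkeeping around conditioning: $\Gamma_{\target,\epsilon}$ is defined through mutual information conditioned on a fixed realization $H_t = h$, so one must check that $\E\big[\I(\target; A_t, R_{t+1,A_t}\mid H_t)\big]$ — the average of these realization-wise quantities — is exactly what the chain rule delivers, and that the chain rule is applied to the observed pair $(A_t, R_{t+1,A_t})$ rather than to the full reward vector $R_{t+1}$, since only the played coordinate enters $H_T$. Everything else — the positive part, the $\epsilon$ term, Jensen, Cauchy--Schwarz, and data processing — is routine.
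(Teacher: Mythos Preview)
Your proposal is correct and follows essentially the same route as the paper's proof: peel off $\epsilon T$ and pass to the positive part, apply the pointwise information-ratio bound, then Jensen and Cauchy--Schwarz, and finally the chain rule plus data processing via $\target \perp H_\infty \mid \environment$ to bound $\sum_t \I(\target; A_t, R_{t+1,A_t}\mid H_t) = \I(\target; H_T) \le \I(\target; \environment)$. Your caveats about realization-wise conditioning and applying the chain rule to the observed pair $(A_t, R_{t+1,A_t})$ are exactly the right checks, and the paper handles them the same way.
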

A complete proof of the theorem is presented in Appendix~\ref{sec:general_regret_bound}. Here we provide an outline. 
The proof carries out in two steps. First, the regret is upper-bounded as follows: 
\begin{align*}
\regret(T)
\overset{}{\leq}& \sqrt{\sum_{t=0}^{T-1} \I(\target; A_t, R_{t+1, A_t}|H_t)} \sqrt{\Gamma_{\target, \epsilon} T} + \epsilon T. \numberthis
\label{eq:grb_eq_2_0}
\end{align*}

By the chain rule of mutual information (Lemma~\ref{lemma:chain_MI} in Appendix~\ref{sec:information}), we have
\begin{align*}\sum_{t=0}^{T-1} \I(\target; A_t, R_{t+1, A_t}|H_t) 
= \I(\target; H_T) 
\leq \I(\target; \environment), \numberthis
\label{eq:grb_eq_1_0}
\end{align*}
where the inequality follows from the data processing inequality of mutual information (Lemma~\ref{lemma:dp_MI} in Appendix~\ref{sec:information}) and the fact that $\target$ and $H_{\infty}$ are independent conditioned on $\environment$. 
We establish the desired bound by plugging \eqref{eq:grb_eq_1_0} into \eqref{eq:grb_eq_2_0}. 

The result established by Theorem~\ref{th:general-regret-bound} applies to any bandit environment and any agent policy. The bound depends on the agent policy through the information ratio $\Gamma_{\target, \epsilon}$. Intuitively, the information ratio captures how the agent trades off between regret and information acquired about the learning target. This is multiplied by the amount of information $\I(\target; \environment)$ that must be acquired in order to identify the learning target $\target$.  This product characterizes a decreasing portion of per-timestep regret. In particular, divided by time elapsed, the first term of the bound becomes $\sqrt{\I(\target; \environment) \Gamma_{\target,\epsilon} / T}$, which vanishes as $T$ grows.  The second term $\epsilon T$ reflects a per-timestep regret of at most $\epsilon$ incurred because the agent might only identify the learning target $\target$ rather than the full environment $\environment$.

 Although the regret bound introduced in this section is generally useful, it is not directly applicable to scenarios in which the agent has misspecified beliefs. 
The challenge is to bound the information ratio defined with respect to the true environment, when an agent is designed with misspecified beliefs in mind. In light of this observation, we focus in the next section on developing a new line of analysis that would allow us to bound the true regret, but using an information ratio computed with respect to an imaginary environment that is consistent with the agent's misspecified beliefs. 

\section{Gaussian Imagination}
\label{sec:gaus_ber}
To study misspecification, we specialize the bandit environment formulation introduced in Section 2 to Bernoulli and Gaussian bandit environments, and study the performance of an agent that attains a low information ratio with respect to a Gaussian bandit while interacting with a Bernoulli bandit. Recall that Theorem \ref{th:general-regret-bound} of Section~\ref{sec:gen_reg_bd} can be directly applied when the environment is truly Gaussian. In this section, we develop information-theoretic analysis based on a change-of-measure argument that bounds the amount of regret that exceeds the bound prescribed by Theorem \ref{th:general-regret-bound}.
We establish conditions under which this excess grows at a rate that is at most linear in $\sqrt{T}$, where $T$ is the time horizon. Parallel to Theorem~\ref{th:general-regret-bound}, our regret bound depends on the information ratio with respect to a Gaussian bandit.

\subsection{Formulation}
\label{sec:formulation_ber_gauss}
To distinguish the Bernoulli bandit with which the agent interacts and the Gaussian bandit with respect to which the agent attains a bounded information ratio, we refer to the former as the {\it real environment} $\environment$---or, simply, {\it environment}---and the latter as the {\it imaginary environment} $\pseudoenvironment$, as it exists in the agent's imagination.

Recall that an environment is a \emph{random} probability measure on $\R^\actions$. For a real environment, this probability measure is determined by a random mean reward vector $\theta$, which takes values in $[0,1]^\actions$, and is given by a $\mathrm{Bernoulli}(\theta)$ distribution.  This is a multivariate distribution for which the $a$th component is independently distributed according to $\mathrm{Bernoulli}(\theta_a)$, conditioned on $\theta$. An imaginary environment $\pseudoenvironment$ is similarly determined by a random mean reward vector $\tilde{\theta}$, though in this case the vector takes values in $\R^\actions$, and a positive real number $\sigma^2$. An imaginary environment $\pseudoenvironment$ is a random probability measure described by  $\mathcal{N}(\tilde{\theta}, \sigma^2 I)$, 
where the parameter $\sigma^2 \in \R_{++}$ is fixed, and $\tilde{\theta}$ is distributed according to $ \mathcal{N}(\mu_0, \Sigma_0)$, for some fixed $\mu_0 \in \R^{\actions}$ and $\Sigma_0 \in \mathcal{S}_{++}^{\actions}$, where $\mathcal{S}_{++}^{\actions}$ denotes the set of $\actions \times \actions$ positive definite symmetric matrices.

We denote per-timestep rewards by $R_t$ and $\tilde{R}_t$, referring to them as reward and imaginary reward, optimal actions by $A_*$ and $\tilde{A}_*$, and optimal mean rewards by $R_*$ and $\tilde{R}_*$, respectively. 

For all $t \in \mathbb{Z}_{+}$, we denote by $ H_t = (A_0, R_{1, A_0}, ... , A_{t-1}, R_{t, A_{t-1}})$ the real history generated over $ t $ timesteps by applying the agent policy $\pi_{\text{agent}}$ to the real environment with Bernoulli rewards , and by $\tilde{H}_t = (\tilde{A}_0, \tilde{R}_{1, \tilde{A}_0}, ... , \tilde{A}_{t-1}, \tilde{R}_{t, \tilde{A}_{t-1}})$ the imaginary history generated over $t$ timesteps by applying the agent policy to the imaginary environment with Gaussian rewards. In particular, for all $t \in \mathbb{Z}_{+}$, $\Pr(A_t \in \cdot | H_t) = \pi_{\text{agent}}(\cdot | H_t)$ and $\Pr(\tilde{A}_t \in \cdot | \tilde{H}_t) = \pi_{\text{agent}}(\cdot | \tilde{H}_t)$. Note that when applying $\pi_{\text{agent}}$ to Bernoulli bandits, only real histories are ever used as an input to the algorithm. Nevertheless, the imaginary histories serve as useful conceptual device that help articulate what the agent believes.

Let $\histories_t$ denote the set comprised of all sequences consisting of $t$ action-reward pairs, where the rewards are binary, and let $\histories = \cup_{t=0}^\infty \histories_t$. Let $\tilde{\histories}_t$ denote the set comprised of all sequences consisting of $t$ action-reward pairs, where the rewards are real-valued, and let $\tilde{\histories} = \cup_{t=0}^\infty \tilde{\histories}_t$.

\subsection{Change of Measure}
Our analysis relies on a change of measure. 
As discussed in Section~\ref{sec:formulation_ber_gauss}, for all $t \in \mathbb{Z}_+$, the agent observes a realization of $H_t$ while believing the observation is a realization of $\tilde{H}_t$. To formally describe such behavior and to study the regret under different probability measures, we now introduce some notation used in our analysis. 

Consider random variables $X$ and $Y$ with images $\mathcal{X}$ and $\mathcal{Y}$, respectively, and for all realization $y \in \mathcal{Y}$, a conditional distribution $\Pr(X \in \cdot | Y = y)$. Let $f(y) = \Pr(X \in \cdot |Y=y)$ for $y \in \mathcal{Y}$. For any random variable $Z$ whose image is a subset of $\mathcal{Y}$, $f(Z)$ is a well-defined random variable---it is a probability measure-valued random variable.
We denote by $\Pr(X \in \cdot |Y \leftarrow Z)$ the random variable $f(Z)$. Note that, in general, $\Pr(X \in \cdot |Y = Z) \neq \Pr \left(X \in \cdot \bbar Y \leftarrow Z \right)$ because the former conditions on an event $\{\omega: Y = Z\}$, whereas the latter represents a change of measure from $Y$ to $Z$.  We similarly write $\E[X|Y \leftarrow Z]$ to denote $g(Z)$ for a function $g: \mathcal{Y} \rightarrow \R$ defined by $g(y) = \E[X|Y=y]$ for all $y \in \mathcal{Y}$.  
This notation extends to our information-theoretic concepts.  
For example, if $\mathcal{X}$ and $\mathcal{Y}$ 
are finite, then
\begin{align*}
\H(X | Y \leftarrow Z) = - \sum_{x \in \mathcal{X}}  \Pr(X = x| Y \leftarrow Z) 
 \ln \Pr(X=x | Y \leftarrow Z).
\end{align*} 
Note that this is a random variable due to its dependence on $Z$. Let $U$ be a random variable. 
Analogously, with respect to conditional mutual information, we have 
$$\I(X; U|Y \leftarrow Z) = \H(X | Y \leftarrow Z) - \H(X | U, Y \leftarrow Z),$$
where
\begin{align*}
\H(X | U, Y \leftarrow Z) = 
- \sum_{x \in \mathcal{X}} \Pr(X = x| U, Y \leftarrow Z) 
 \ln \Pr(X=x | U, Y \leftarrow Z).
\end{align*}
Note that $\H(X|Y) = \E[H(X|Y\leftarrow Y)]$ and $\I(X; U|Y) = \E[\I(X; U | Y \leftarrow Y)]$.

In the rest of the paper, expressions involving $\tilde{H}_t \leftarrow H_t$ appear frequently, as the agent observes a realization of $H_t$ while practicing Gaussian imagination, pretending that the observation is sampled from the distribution of $\tilde{H}_t$.

\subsection{Main Theorem}
\label{sec:main_regret_bound}
This section states our main results. We begin by defining a notion of information ratio when the agent observes the true histories while taking them to be the imaginary ones. Following how we defined a learning target with respect to a general bandit environment in Section~\ref{sec:gen_reg_bd}, we define an imaginary learning target $\pseudotarget$ with respect to the imaginary environment $\pseudoenvironment$. We require $\pseudotarget$ be independent of $\tilde{H}_{1 : \infty}$ conditioned on $\pseudoenvironment$. We also define a tolerance $\epsilon \in \R_{+}$. 
Recall that $\tilde{H}_t$ is the imaginary history generated over $t$ timesteps by applying $\pi_{\text{agent}}$ to the imaginary environment, and that $\Pr(\tilde{A}_t \in \cdot | \tilde{H}_t) = \pi_{\text{agent}}(\cdot | \tilde{H}_t)$. 
Let $\Gamma_{\pseudotarget,\epsilon}$ be the information ratio of the agent computed with respect to the imaginary environment while interacting with the real environment:
\begin{align*}
	\Gamma_{\pseudotarget, \epsilon} = \sup_{t \in \mathbb{Z}_{+}, h \in \mathcal{H}_t} \frac{\E\left[\tilde{R}_{*} - \tilde{R}_{t+1, \tilde{A}_t} - \epsilon \bbar  \tilde{H}_t = h\right]^2_{+}}{\I\left(\pseudotarget; \tilde{A}_t, \tilde{R}_{t+1, \tilde{A}_t} \bbar  \tilde{H}_t = h\right)}.
\end{align*}
 Note that since $\histories_t \subset \tilde{\histories}_t$ for all $t \in \mathbb{Z}_+$, $\Gamma_{\pseudotarget, \epsilon} $ is trivially upper-bounded by the following information ratio $\tilde{\Gamma}_{\pseudotarget, \epsilon}$, where the maximization takes place over the set $  \tilde{\histories}_t $: 
\begin{align*}
    \tilde\Gamma_{\pseudotarget, \epsilon} = \sup_{t \in \mathbb{Z}_{+}, h \in \tilde{\mathcal{H}}_t} \frac{\E\left[\tilde{R}_{*} - \tilde{R}_{t+1, \tilde{A}_t} - \epsilon \bbar  \tilde{H}_t = h\right]^2_{+}}{\I\left(\pseudotarget; \tilde{A}_t, \tilde{R}_{t+1, \tilde{A}_t} \bbar  \tilde{H}_t = h\right)}.
\end{align*}
Note that $ \tilde\Gamma_{\pseudotarget, \epsilon} $ is simply the information ratio associated with the same agent, when interacting with a Gaussian bandit described by $\pseudoenvironment$. For this reason, we refer to this information ratio $    \tilde\Gamma_{\pseudotarget, \epsilon} $ as the  
\emph{Gaussian information ratio}. 
In this section, we establish an upper bound on the regret through upper-bounding the Gaussian information ratio.

Two assumptions form the basis of the main results. Assumption \ref{ass:optimism} (Optimism) requires the imaginary environment be such that the agent's imagined expected optimal rewards is greater than that of the actual, as such encouraging exploration. Crucially, as shown in an example in Section~\ref{sec:opt_ass}, with a large class of real environments, a sufficient condition for Assumption~\ref{ass:optimism} (Optimism) to hold is for the Gaussian prior distribution and likelihood function to be diffuse, i.e. to have sufficiently large variances.
\begin{assumption} 
	\label{ass:optimism}
	(Optimism) For all $t \in \mathbb{Z}_+$,
	 	\begin{equation*}
		\mathbb{E}\left[\mathbb{E}\left[\tilde{R}_{*} \bbar  \tilde{H}_t \leftarrow H_t\right]\right] \geq \mathbb{E}[R_*].
	\end{equation*}
\end{assumption}

 Assumption \ref{ass:gaussian} (Gaussianity) is a condition on the imaginary environment and the imaginary learning target. The assumption ensures that various key statistics in the imaginary environment admit Gaussian posterior distributions. This in turn allows us to analytically characterize the agent's learning progress in the imaginary environment, a key ingredient in the final regret bound.
 Section~\ref{sec:Sufficient_Conditions} discusses the assumptions in greater detail, and provides examples in which the assumptions hold.
 
\begin{assumption}
\label{ass:gaussian}
 (Gaussianity) The imaginary learning target $\pseudotarget$ and the imaginary mean reward $\tilde{\theta}$ are jointly Gaussian. 
\end{assumption}

Under the two assumptions, we establish the main result of the paper in the form of a regret bound.
\begin{restatable}{theorem}{bernoullitogaussian}
\label{th:bernoulli_to_Gaussian}
Let $\pseudoenvironment$ be an imaginary environment, 
$\pseudotarget$ an imaginary learning target, and $\epsilon \in \R_+$ a tolerance. 
Suppose Assumptions~\ref{ass:optimism} and \ref{ass:gaussian} hold. For all time horizon $T \in \mathbb{Z}_{++}$, the regret of an agent interacting with a Bernoulli bandit environment $\environment$ satisfies 
\begin{align*}\label{eq:key}
	\regret(T) \leq \sqrt{\I\left(\pseudotarget ; \pseudoenvironment \right) \tilde{\Gamma}_{\pseudotarget, \epsilon} T} + \epsilon T 
	+ \gamma \sqrt{2\KL \p{\Pr\Big(\theta \in \cdot\Big) \bdbar \Pr\left(\tilde{\theta} \in \cdot\right)} T},
\end{align*}
where 
\begin{equation}
    \gamma = \sup_{a \in \mathcal{A}, t \in \mathbb{Z}_{+}, h_t \in \mathcal{H}_t} \mathbb{E}\left[\tilde{\theta}_a \bbar  \tilde{H}_t = h_t\right].
    \nonumber 
\end{equation}
\end{restatable}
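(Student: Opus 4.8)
The plan is to control the true per‑timestep regret by inserting, through the change of measure $\tilde{H}_t \leftarrow H_t$, both the agent's \emph{imagined} optimal reward and its \emph{imagined} expected next reward, writing
\begin{align*}
\E[R_* - R_{t+1,A_t} \mid H_t]
&= \big(\E[R_* \mid H_t] - \E[\tilde{R}_* \mid \tilde{H}_t \leftarrow H_t]\big) \\
&\quad + \E[\tilde{R}_* - \tilde{R}_{t+1,\tilde{A}_t} \mid \tilde{H}_t \leftarrow H_t] \\
&\quad + \big(\E[\tilde{R}_{t+1,\tilde{A}_t} \mid \tilde{H}_t \leftarrow H_t] - \E[R_{t+1,A_t} \mid H_t]\big),
\end{align*}
where the identity uses $\E[\tilde{R}_{t+1,\tilde{A}_t}\mid\tilde{H}_t = h] = \sum_a \pi_{\text{agent}}(a\mid h)\,\E[\tilde{\theta}_a\mid\tilde{H}_t = h]$ and the analogous fact for the real reward. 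Call the three terms on the right (i) the optimism gap, (ii) the imagined per‑step regret, and (iii) the predictive mismatch. Summing over $t$ and taking expectations, I will show that (i) contributes at most $0$, (ii) at most $\sqrt{\I(\pseudotarget;\pseudoenvironment)\tilde{\Gamma}_{\pseudotarget,\epsilon}T} + \epsilon T$, and (iii) at most $\gamma\sqrt{2\KL(\Pr(\theta\in\cdot)\,\|\,\Pr(\tilde{\theta}\in\cdot))T}$.

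Term (i) sums to a nonpositive quantity: by the tower property $\sum_{t<T}\E[\E[R_*\mid H_t]] = T\E[R_*]$, while Assumption~\ref{ass:optimism} states $\E[\E[\tilde{R}_*\mid\tilde{H}_t\leftarrow H_t]] \geq \E[R_*]$ for each $t$. Term (ii) is the information‑ratio term. Since $\mathcal{H}_t\subset\tilde{\mathcal{H}}_t$, the supremum defining $\tilde{\Gamma}_{\pseudotarget,\epsilon}$ includes the realized histories, so $\text{(ii)}-\epsilon \leq \big(\tilde{\Gamma}_{\pseudotarget,\epsilon}\,\I(\pseudotarget;\tilde{A}_t,\tilde{R}_{t+1,\tilde{A}_t}\mid\tilde{H}_t\leftarrow H_t)\big)^{1/2}$; Cauchy--Schwarz over $t$ and Jensen then reduce the task to showing $\sum_{t<T}\E[\I(\pseudotarget;\tilde{A}_t,\tilde{R}_{t+1,\tilde{A}_t}\mid\tilde{H}_t\leftarrow H_t)] \leq \I(\pseudotarget;\pseudoenvironment)$. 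This is where Assumption~\ref{ass:gaussian} enters: with $(\pseudotarget,\tilde{\theta})$ jointly Gaussian and the imaginary likelihood Gaussian, the imaginary posterior covariance of $\pseudotarget$ after any history depends on that history only through its multiset of played actions, so $\H(\pseudotarget\mid\tilde{H}_t\leftarrow H_t)$ is a deterministic function $g$ of the action counts; hence $\I(\pseudotarget;\tilde{A}_t,\tilde{R}_{t+1,\tilde{A}_t}\mid\tilde{H}_t\leftarrow H_t) = g(N_t) - \E[g(N_{t+1})\mid H_t]$ with $N_t$ the count vector after $t$ steps, which telescopes to $\H(\pseudotarget) - \E[\H(\pseudotarget\mid\tilde{H}_T\leftarrow H_T)] \leq \H(\pseudotarget) - \H(\pseudotarget\mid\pseudoenvironment) = \I(\pseudotarget;\pseudoenvironment)$. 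This mirrors the proof of Theorem~\ref{th:general-regret-bound}, but the data‑agnostic evolution of Gaussian covariances makes it go through even though the histories are drawn from the Bernoulli environment rather than the Gaussian one.

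Term (iii) is the misspecification correction. Write $\hat{R}_t := \sum_a\pi_{\text{agent}}(a\mid H_t)\E[\tilde{\theta}_a\mid\tilde{H}_t\leftarrow H_t]$, the agent's imagined expected next reward given the realized history, and $\bar{R}_t := \E[R_{t+1,A_t}\mid H_t]$, the true one; then the sum of term (iii) is $\sum_{t<T}\E[\hat{R}_t - \bar{R}_t]$. Because $\hat{R}_t\leq\gamma$, Cauchy--Schwarz over $t$ and Jensen give $\sum_{t<T}\E[\hat{R}_t - \bar{R}_t] \leq \big(T\,\E[\sum_{t<T}(\hat{R}_t-\bar{R}_t)^2]\big)^{1/2}$, so it suffices to prove the cumulative‑predictive‑stability estimate $\E[\sum_{t<T}(\hat{R}_t-\bar{R}_t)^2] \leq 2\gamma^2\KL(\Pr(\theta\in\cdot)\,\|\,\Pr(\tilde{\theta}\in\cdot))$. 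Morally this should hold because $\hat{R}_t$ and $\bar{R}_t$ are two posterior‑predictive means computed from the \emph{same} observations, and the only persistent disagreement between them comes from the disagreement of the two priors---the difference in likelihood families does not accumulate, since both posterior means are pulled toward the empirical averages of the rewards. The intended mechanism is a change of measure between the imaginary posterior over $\tilde{\theta}$ and the real posterior over $\theta$ given $H_t$, with the factor $\gamma$ surfacing when a gap between posterior means is converted into a bound phrased through a probability divergence.

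The main obstacle is exactly this last estimate. A change of measure applied at the level of whole trajectories is unavailable: the law of the Bernoulli‑rewarded history is singular with respect to that of the Gaussian‑rewarded history (discrete versus continuous rewards), so the data‑processing inequality would only deliver an infinite bound, and nothing resembling $\KL(\Pr(\theta\in\cdot)\|\Pr(\tilde{\theta}\in\cdot))$ comes for free. The argument must therefore confine the change of measure to the parameter, so that only the prior divergence is incurred, routing the reward‑likelihood mismatch through the Gaussian‑specific cancellations afforded by Assumption~\ref{ass:gaussian} while carrying the uniform bound $\gamma$ on the imagined posterior means. Assembling the three contributions then yields $\regret(T) \leq \sqrt{\I(\pseudotarget;\pseudoenvironment)\tilde{\Gamma}_{\pseudotarget,\epsilon}T} + \epsilon T + \gamma\sqrt{2\KL(\Pr(\theta\in\cdot)\,\|\,\Pr(\tilde{\theta}\in\cdot))T}$.
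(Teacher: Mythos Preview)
Your three–term decomposition and your treatment of terms (i) and (ii) are essentially the paper's argument: Assumption~\ref{ass:optimism} kills the optimism gap, and for the imagined‑regret term you exploit exactly the right fact, namely that under Assumption~\ref{ass:gaussian} the posterior (differential) entropy of $\pseudotarget$ given an imaginary history depends only on the action counts, so the per‑step information gains telescope even though the outer expectation is under the Bernoulli law. The paper phrases this through $\I(\pseudotarget;\tilde{\theta}\mid\tilde{H}_t=\cdot)$ rather than directly through $\diffentropy(\pseudotarget\mid\cdot)$, but the content is the same.

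The gap is term (iii). You correctly diagnose the obstacle---$\Pr(H_T\in\cdot)$ is singular with respect to $\Pr(\tilde{H}_T\in\cdot)$, so a trajectory‑level change of measure gives infinity---but your proposed resolution (``confine the change of measure to the parameter \ldots routing the reward‑likelihood mismatch through Assumption~\ref{ass:gaussian}'') is not a proof, and in fact Assumption~\ref{ass:gaussian} plays no role in bounding this term. The estimate $\E\big[\sum_{t<T}(\hat{R}_t-\bar{R}_t)^2\big]\leq 2\gamma^2\KL$ that you say ``suffices'' is exactly the hard part, and the heuristic that both posterior means are pulled toward empirical averages does not deliver it.

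The missing idea is a concrete \emph{auxiliary reward} construction. The paper builds, on the same probability space, rewards $R^{\dagger}_{t,a}$ coupled to $(\tilde{\theta},\tilde{R})$ so that (a) $R^{\dagger}_{t,a}$ takes values in a bounded discrete set (in $[0,2\gamma]$, with mass on $\{0,1\}$), (b) whenever $\tilde{\theta}_a\in[0,1]$, $R^{\dagger}_{t,a}\mid\tilde{\theta}_a$ is $\mathrm{Bernoulli}(\tilde{\theta}_a)$, i.e.\ the \emph{same} likelihood as $R_{t,a}\mid\theta_a$, and (c) $\E[\tilde{R}_{t+1,\tilde{A}_t}\mid\tilde{H}_t\leftarrow H_t]\leq\E[R^{\dagger}_{t+1,A^{\dagger}_t}\mid H^{\dagger}_t\leftarrow H_t]$. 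Property (a) lets you bound the per‑step mismatch by $2\gamma\,\TV(\cdot\|\cdot)$ and then by $\gamma\sqrt{2\,\KL(\cdot\|\cdot)}$ via Pinsker; property (b) is what makes the data‑processing inequality legal at the parameter level, so that after Cauchy--Schwarz and the KL chain rule one gets
\[
\sum_{t<T}\E\!\left[\KL\!\left(\Pr(A_t,R_{t+1,A_t}\in\cdot\mid H_t)\,\big\|\,\Pr(A^{\dagger}_t,R^{\dagger}_{t+1,A^{\dagger}_t}\in\cdot\mid H^{\dagger}_t\leftarrow H_t)\right)\right]
=\KL\!\left(\Pr(H_T\in\cdot)\,\|\,\Pr(H^{\dagger}_T\in\cdot)\right)\leq\KL\!\left(\Pr(\theta\in\cdot)\,\|\,\Pr(\tilde{\theta}\in\cdot)\right).
\]
Without such an intermediary whose conditional law given $\tilde{\theta}$ matches the Bernoulli likelihood, there is no channel along which to push the data‑processing inequality back to the prior KL, and your term‑(iii) bound does not close.
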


The bound depends on the policy through the information ratio $\tilde{\Gamma}_{\pseudotarget, \epsilon}$ with respect to the Gaussian bandit. Recall that Theorem~\ref{th:general-regret-bound} of Section~\ref{sec:gen_reg_bd} suggests that the regret of any agent interacting with a Gaussian bandit can be bounded if the agent attains a small information ratio with respect to the Gaussian bandit, and that the regret bound established by Theorem~\ref{th:general-regret-bound} is exactly the sum of the first two terms in the regret bound established by Theorem~\ref{th:bernoulli_to_Gaussian}. So under Assumptions~\ref{ass:optimism} and \ref{ass:gaussian}, bounding the information ratio $\tilde{\Gamma}_{\pseudotarget, \epsilon}$ would simultaneously bound the agent's regret incurred in a Bernoulli and a Gaussian bandit, respectively. 

Notably, the third term in the regret bound established by Theorem~\ref{th:bernoulli_to_Gaussian} captures the excess regret as the result of the agent's beliefs differing from those of the real environment. This excess is determined by $\gamma$ and the KL-divergence between the mean rewards $\theta$ and the imaginary mean rewards $\tilde{\theta}$, and grows at a rate which is at most linear in $\sqrt{T}$. 
Section~\ref{sec:bounding_gamma} presents a sufficient condition under which $\gamma$ is favorably bounded.

\subsection{Examples Where the Assumptions Hold}
\label{sec:Sufficient_Conditions}
This section provides examples in which Assumption~\ref{ass:optimism} (Optimism) and Assumptions~\ref{ass:gaussian} (Gaussianity) hold, and $\gamma$, which appears in the regret bound established by Theorem \ref{th:bernoulli_to_Gaussian} of Section~\ref{sec:main_regret_bound}, is conveniently bounded.

\subsubsection{Assumption~\ref{ass:optimism} (Optimism)}
\label{sec:opt_ass}
Below is a sufficient condition for Assumption~\ref{ass:optimism} (Optimism): For all $ t \in \mathbb{Z}_+$, $h \in \mathcal{H}_t$,
 	\begin{align*}
 	\label{eq:opt-suff}
		\mathbb{E}\left[\tilde{R}_{*} \bbar  \tilde{H}_t = h\right] \geq \mathbb{E}[R_* | H_t = h]. 
	\end{align*}
This sufficient condition ensures that 
$\mathbb{E}[\tilde{R}_{*} \bbar  \tilde{H}_t \leftarrow H_t] \geq \mathbb{E}[R_* | H_t = H_t]$ a.s., which is much stronger than Assumption~\ref{ass:optimism} (Optimism) as it requires that the conditional expected optimal rewards in the agent's imagination to be larger than that of the actual on all sample paths. 

Following is an example, in the form of a lemma, in which the sufficient condition holds. The proof of the lemma centers around arguments developed in Section~6.5 in \citep{JMLR:v20:18-339} and is given in Appendix~\ref{sec:ass_opt}. Note that in this example, we require the real environment $\environment$ to satisfy certain conditions. 

\begin{restatable}{lemma}{gaussianbetadominance}
\label{lemma:gaussian_beta_dominance}
 Suppose the following hold:
(i) $\alpha_a + \beta_a \geq 3$ for all $a \in \actions$,
(ii) $\theta_a \sim \mathrm{Beta}(\alpha_a, \beta_a)$, independently across all $a \in \actions$. Let $\pseudoenvironment$ be an imaginary environment such that: 
(iii) ${\sigma}^2 \geq 3$,
(iv) $\Sigma_{0}$ is diagonal with $\Sigma_{0, a, a} \geq \frac{{\sigma}^2}{\alpha_a + \beta_a}$ for all $a \in \actions$,
(v) $\mu_{0, a} \geq \frac{\alpha_a}{\sigma^2} \Sigma_{0, a, a}$, for all $a \in \actions$. We have for all $ t \in \mathbb{Z}_+$ and $h \in \mathcal{H}_t$, 
 	\begin{align*}
		\mathbb{E}\left[\tilde{R}_{*} \bbar  \tilde{H}_t = h\right] \geq \mathbb{E}[R_* | H_t = h].
	\end{align*}
\end{restatable}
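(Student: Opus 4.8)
The plan is to reduce the claimed inequality to a collection of \emph{per-arm} comparisons between the posterior over $\theta_a$ (under the Bernoulli/Beta model) and the posterior over $\tilde\theta_a$ (under the Gaussian model) induced by the common history $h$, and then to aggregate those comparisons through a stochastic-optimism argument for the maximum of independent random variables. Fix $h\in\histories_t$; since its rewards are $\{0,1\}$-valued it is a legitimate argument for both models. For each $a\in\actions$ let $N_a$ be the number of times $a$ occurs in $h$ and $S_a$ the number of unit rewards recorded on $a$. Because the Beta prior on $\theta$ is a product and the $\mathrm{Bernoulli}(\theta)$ likelihood factorizes over arms (and the agent-policy factors in $\Pr(H_t=h\mid\theta)$ do not involve $\theta$, so they cancel in the normalization), conditioned on $H_t=h$ the components of $\theta$ are independent with $\theta_a\sim\mathrm{Beta}(\alpha_a+S_a,\beta_a+N_a-S_a)$. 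For the same reason, using that $\Sigma_0$ is diagonal and the $\mathcal{N}(\tilde\theta,\sigma^2 I)$ likelihood factorizes over arms, conditioned on $\tilde H_t=h$ the components of $\tilde\theta$ are independent with $\tilde\theta_a\sim\mathcal{N}(\tilde\mu_a,\tilde\sigma_a^2)$, where $\tilde\sigma_a^{-2}=\Sigma_{0,a,a}^{-1}+N_a/\sigma^2$ and $\tilde\mu_a=\tilde\sigma_a^2(\Sigma_{0,a,a}^{-1}\mu_{0,a}+S_a/\sigma^2)$. Consequently $\E[\tilde R_*\mid\tilde H_t=h]=\E[\max_a\tilde\theta_a]$ and $\E[R_*\mid H_t=h]=\E[\max_a\theta_a]$ under these product posteriors, so it suffices to prove $\E[\max_a\tilde\theta_a]\ge\E[\max_a\theta_a]$.

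The core step is a per-arm comparison: for every $a$, the law $\mathcal{N}(\tilde\mu_a,\tilde\sigma_a^2)$ is \emph{stochastically optimistic} with respect to $\mathrm{Beta}(\alpha_a+S_a,\beta_a+N_a-S_a)$, i.e.\ $\E[g(\tilde\theta_a)]\ge\E[g(\theta_a)]$ for every nondecreasing convex $g$. Two ingredients enter. For the means, assumption~(v) gives $\Sigma_{0,a,a}^{-1}\mu_{0,a}\ge\alpha_a/\sigma^2$ and assumption~(iv) gives $\Sigma_{0,a,a}^{-1}\le(\alpha_a+\beta_a)/\sigma^2$; substituting both into $\tilde\mu_a=(\Sigma_{0,a,a}^{-1}\mu_{0,a}+S_a/\sigma^2)/(\Sigma_{0,a,a}^{-1}+N_a/\sigma^2)$, and noting each substitution only decreases the ratio, yields $\tilde\mu_a\ge(\alpha_a+S_a)/(\alpha_a+\beta_a+N_a)$, which is exactly the mean of the Beta posterior. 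For the spread, assumptions~(iii)--(iv) give $\tilde\sigma_a^2\ge\sigma^2/(\alpha_a+\beta_a+N_a)\ge 3/(\alpha_a+\beta_a+N_a)$, while assumption~(i) guarantees $\alpha_a+\beta_a+N_a\ge3$; since a $\mathrm{Beta}$ distribution on $[0,1]$ with parameter sum $\alpha_a+\beta_a+N_a$ has standard deviation at most $\tfrac12(\alpha_a+\beta_a+N_a)^{-1/2}$, the Gaussian posterior's standard deviation comfortably exceeds it. I would then invoke, following the stochastic-optimism arguments of Section~6.5 of \citep{JMLR:v20:18-339}, the fact that a Gaussian whose mean is at least that of a $[0,1]$-supported law and whose variance is large enough relative to that law's spread stochastically optimizes it; concretely this reduces, through the increasing-convex-order characterization, to checking $\E[(\tilde\theta_a-c)^+]\ge\E[(\theta_a-c)^+]$ for every threshold $c$, for which one uses the chord bound $\E[(\theta_a-c)^+]\le(1-c)\,\E[\theta_a]$ on $0\le c\le1$ and mean domination for $c\le0$, with the mean and spread bounds above supplying the rest.

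To pass from the per-arm statements to the maximum, I would use a hybrid/swap argument. Both posteriors are products of independent marginals, so I would replace $\tilde\theta_a$ by $\theta_a$ one coordinate at a time; conditioning on the remaining coordinates, the $a$-th swap reduces to $\E[\max(\tilde\theta_a,M)]\ge\E[\max(\theta_a,M)]$ with $M=\max_{b\ne a}$ of the other (independent) coordinates, which is an instance of per-arm stochastic optimism applied to the nondecreasing convex map $x\mapsto\E[\max(x,M)]$. Chaining the $|\actions|$ swaps gives $\E[\max_a\tilde\theta_a]\ge\E[\max_a\theta_a]$, completing the proof.

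The main obstacle is the spread half of the per-arm comparison: formulating the exact sufficient condition under which a Gaussian stochastically optimizes a given $[0,1]$-supported distribution, and verifying that $\sigma^2\ge3$ together with $\alpha_a+\beta_a\ge3$ make it hold uniformly over all histories. The mean-domination half is a short manipulation of assumptions~(iv)--(v), but the spread half requires controlling $\E[(\tilde\theta_a-c)^+]$ against $\E[(\theta_a-c)^+]$ across all $c$ --- equivalently, showing the Gaussian and Beta CDFs cross exactly once and in the direction that yields the increasing convex order --- and the delicate regime is histories in which arm $a$ has been pulled many times with an extreme empirical success fraction, where both posterior variances are small and one must check that the Gaussian's shrinks no faster.
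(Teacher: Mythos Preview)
Your proposal is correct and follows essentially the same route as the paper: compute the per-arm Beta and Gaussian posteriors from the common binary history, verify the mean and variance inequalities $\tilde\mu_a\ge(\alpha_a+S_a)/(\alpha_a+\beta_a+N_a)$ and $\tilde\sigma_a^2\ge\sigma^2/(\alpha_a+\beta_a+N_a)$, establish per-arm stochastic optimism, and then aggregate to the maximum. The paper dispatches your ``main obstacle'' (the Gaussian--Beta comparison) and the aggregation step by citing Lemma~4 and Lemma~2 of \citep{JMLR:v20:18-339} directly rather than re-deriving them, so the spread half you flag as delicate is already a black-box result available under exactly the mean and variance conditions you verified together with $\sigma^2\ge3$, and your hybrid/swap argument is just an explicit proof of the cited closure-under-max lemma.
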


\subsubsection{Assumption~\ref{ass:gaussian} (Gaussianity)}
\label{sec:Gaussian-target}

We describe one particular imaginary learning target, where $\tilde{\theta}$ can be thought of as a noisy perturbation of the learning target, and this target satisfies Assumption~\ref{ass:gaussian} (Gaussianity). We also compute $\I(\pseudotarget, \pseudoenvironment)$, the amount of information in the imaginary environment needed to identify $\pseudotarget$. 

Recall that the imaginary Gaussian bandit environment is determined by an $\actions$-dimensional vector $\tilde{\theta}$.  We consider a learning target, another $\actions$-dimensional vector $\pseudotarget = \hat{\theta}$, \emph{defined with respect to $\tilde{\theta}$ and a perturbation variance $\delta^2 \in (0, 1)$}. Fix a $\delta^2 \in (0, 1)$, let $\hat{\theta}$ be defined such that
$\Pr(\tilde{\theta} - \hat{\theta} \in \cdot|\hat{\theta}) \sim \mathcal{N}\left(0,\delta^2 \Sigma_0 \right)$. We use $Z$ to denote $\tilde{\theta} - \hat{\theta}$. Note that, while $Z$ is independent of $\hat{\theta}$, it is generally not independent of $\theta$.  As such, we have constructed the learning target $\chi = \hat{\theta}$ such that the mean reward vector can be viewed as a noisy perturbation $\tilde{\theta} = \hat{\theta} + Z$ of the learning target. 

With this definition of the learning target, Assumption \ref{ass:gaussian} (Gaussianity) is trivially satisfied as stated in the following lemma. The proof is straightforward and is presented in Appendix~\ref{sec:gaussianity_condition}. 
\begin{restatable}{lemma}{gaussianitycondition}
\label{lemma:gaussianity_condition}
For all $\delta^2 \in (0, 1)$, 
if $\hat{\theta}$ is a learning target defined with respect to $\tilde{\theta}$ and perturbation variance $\delta^2$, then $\hat{\theta}$ and $\tilde{\theta}$
are jointly Gaussian. 
\end{restatable}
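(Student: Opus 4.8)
\emph{Proof proposal.} The plan is to recover the marginal law of the target $\hat\theta$ by a characteristic-function (deconvolution) argument, and then read off the joint law of $(\hat\theta,\tilde\theta)$ from the fact that $\hat\theta$ and the perturbation are independent Gaussians. Write $Z := \tilde\theta - \hat\theta$, so that $\tilde\theta \sim \mathcal{N}(\mu_0,\Sigma_0)$ and, by construction, $\Pr(Z \in \cdot \mid \hat\theta) = \mathcal{N}(0,\delta^2\Sigma_0)$.

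The first step is to observe that, since the conditional law of $Z$ given $\hat\theta$ does not depend on the conditioning value, $Z$ is independent of $\hat\theta$ and $Z \sim \mathcal{N}(0,\delta^2\Sigma_0)$ marginally. Because $\tilde\theta = \hat\theta + Z$ is a sum of independent vectors, the characteristic functions factor as $\varphi_{\tilde\theta} = \varphi_{\hat\theta}\,\varphi_Z$, so for every $u \in \R^\actions$ I would deduce
\begin{align*}
\varphi_{\hat\theta}(u) = \frac{\exp\!\big(i u^\top\mu_0 - \tfrac12 u^\top\Sigma_0 u\big)}{\exp\!\big(-\tfrac12\delta^2 u^\top\Sigma_0 u\big)} = \exp\!\big(i u^\top\mu_0 - \tfrac12(1-\delta^2) u^\top\Sigma_0 u\big).
\end{align*}
Since $\delta^2 \in (0,1)$ and $\Sigma_0 \in \mathcal{S}_{++}^{\actions}$, the matrix $(1-\delta^2)\Sigma_0$ is positive definite, so the right-hand side is the characteristic function of $\mathcal{N}(\mu_0,(1-\delta^2)\Sigma_0)$, and hence $\hat\theta \sim \mathcal{N}(\mu_0,(1-\delta^2)\Sigma_0)$. (The same computation also certifies existence of a coupling with the prescribed marginal and conditional: take $\hat\theta \sim \mathcal{N}(\mu_0,(1-\delta^2)\Sigma_0)$ independent of $Z \sim \mathcal{N}(0,\delta^2\Sigma_0)$ and set $\tilde\theta := \hat\theta + Z$.)

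The final step is to note that $(\hat\theta,Z)$ is jointly Gaussian, being the concatenation of two independent Gaussian vectors, and that $(\hat\theta,\tilde\theta) = (\hat\theta,\hat\theta + Z)$ is a fixed linear image of $(\hat\theta,Z)$, hence jointly Gaussian; one may record that
\begin{align*}
(\hat\theta,\tilde\theta) \sim \mathcal{N}\!\left(\begin{pmatrix}\mu_0\\\mu_0\end{pmatrix},\ \begin{pmatrix}(1-\delta^2)\Sigma_0 & (1-\delta^2)\Sigma_0\\ (1-\delta^2)\Sigma_0 & \Sigma_0\end{pmatrix}\right).
\end{align*}
I do not expect a genuinely hard step here: the only point requiring care is that the construction pins down the \emph{conditional} law of $Z$ given $\hat\theta$ together with the \emph{marginal} of $\tilde\theta$, but not the marginal of $\hat\theta$ directly, so the deconvolution above is the crux — and that is exactly where the hypothesis $\delta^2 \in (0,1)$ enters, namely to guarantee that $(1-\delta^2)\Sigma_0$ is a legitimate positive-definite covariance.
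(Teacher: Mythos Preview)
Your proof is correct and arrives at exactly the same joint law as the paper. The route differs slightly in emphasis: the paper computes $\E[\hat\theta]$, $\Var(\hat\theta)=\Var(\tilde\theta)-\Var(Z)$, and $\mathrm{Cov}(\hat\theta,\tilde\theta)=\Var(\hat\theta)$ directly from independence of $\hat\theta$ and $Z$, and then simply asserts the joint is multivariate Gaussian with those parameters. Your characteristic-function deconvolution is more explicit on the one point the paper glosses over, namely \emph{why} $\hat\theta$ itself is Gaussian given only the marginal of $\tilde\theta$ and the conditional of $Z\mid\hat\theta$; having established $\hat\theta\sim\mathcal{N}(\mu_0,(1-\delta^2)\Sigma_0)$, you then get joint Gaussianity cleanly as a linear image of the independent pair $(\hat\theta,Z)$. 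So your argument is a touch more careful, while the paper's is a touch more direct; both land on the same covariance block matrix.
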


The following result characterizes the mutual information between the imaginary learning target and the imaginary environment.
\begin{restatable}{lemma}{gaussianmutualinformation}
\label{lemma:gaussian_mutual_information}
For all $\delta^2 \in (0, 1)$, if $\hat{\theta}$ is a learning target defined with respect to $\tilde{\theta}$ and perturbation variance $\delta^2$, then the mutual information between $\hat{\theta}$ and the imaginary environment $\pseudoenvironment$ is given by
\vspace{-1em}
\begin{align*}
\I\left(\hat{\theta}; \pseudoenvironment \right) = \frac{\actions}{2}  \ln \left( \frac{1}{\delta^2 } \right).
\end{align*}
\label{le:mi_theta_hat_theta}
\end{restatable}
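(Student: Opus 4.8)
The plan is to reduce the claim to a short determinant computation for jointly Gaussian vectors. First I would note that the imaginary environment $\pseudoenvironment$ is the random probability measure $\normal(\tilde{\theta},\sigma^2 I)$ with $\sigma^2$ a fixed constant, so $\tilde{\theta}\mapsto\pseudoenvironment$ is an injective measurable map; hence $\pseudoenvironment$ and $\tilde{\theta}$ generate the same information and $\I(\hat{\theta};\pseudoenvironment)=\I(\hat{\theta};\tilde{\theta})$ (formally, apply the data-processing inequality of Lemma~\ref{lemma:dp_MI} in both directions). It therefore suffices to compute $\I(\hat{\theta};\tilde{\theta})$.

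Next I would pin down the joint law of $(\hat{\theta},\tilde{\theta})$. By construction $\tilde{\theta}=\hat{\theta}+Z$ with $Z\sim\normal(0,\delta^2\Sigma_0)$ independent of $\hat{\theta}$, and $\tilde{\theta}\sim\normal(\mu_0,\Sigma_0)$; matching means and covariances forces $\hat{\theta}\sim\normal(\mu_0,(1-\delta^2)\Sigma_0)$, which is a well-defined Gaussian since $\delta^2\in(0,1)$ and $\Sigma_0\in\Sc_{++}^{\actions}$. By Lemma~\ref{lemma:gaussianity_condition} the pair $(\hat{\theta},\tilde{\theta})$ is jointly Gaussian, and the conditional law of $\tilde{\theta}$ given $\hat{\theta}$ is $\normal(\hat{\theta},\delta^2\Sigma_0)$, with a covariance that does not depend on the conditioning value.

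Then I would evaluate the mutual information through differential entropy: $\I(\hat{\theta};\tilde{\theta})=\diffentropy(\tilde{\theta})-\diffentropy(\tilde{\theta}\mid\hat{\theta})$. Using the closed form $\diffentropy(\normal(m,S))=\tfrac12\ln\!\big((2\pi e)^{\actions}\det S\big)$ with $S=\Sigma_0$ for the marginal and $S=\delta^2\Sigma_0$ for the conditional, the factors $(2\pi e)^{\actions}$ and $\det\Sigma_0$ cancel, leaving $\I(\hat{\theta};\tilde{\theta})=\tfrac12\ln\!\big(\det\Sigma_0/\det(\delta^2\Sigma_0)\big)=\tfrac12\ln\!\big(1/(\delta^2)^{\actions}\big)=\tfrac{\actions}{2}\ln(1/\delta^2)$, as claimed.

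There is essentially no hard step here; the only points requiring care are (a) observing that $\pseudoenvironment$ carries exactly the same information as $\tilde{\theta}$, so one may work with $\tilde{\theta}$ directly, and (b) checking that the relevant covariances $\Sigma_0$, $\delta^2\Sigma_0$, and $(1-\delta^2)\Sigma_0$ are positive definite so that the Gaussian differential-entropy formula applies and all determinants are strictly positive — both are immediate from $\delta^2\in(0,1)$ and $\Sigma_0\succ 0$.
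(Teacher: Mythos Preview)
Your proposal is correct and follows essentially the same route as the paper: reduce $\I(\hat{\theta};\pseudoenvironment)$ to $\I(\hat{\theta};\tilde{\theta})$, then compute the latter as $\diffentropy(\tilde{\theta})-\diffentropy(\tilde{\theta}\mid\hat{\theta})$ using the Gaussian differential-entropy formula with covariances $\Sigma_0$ and $\delta^2\Sigma_0$. The paper writes $\diffentropy(\tilde{\theta}\mid\hat{\theta})=\diffentropy(Z)$ directly, but this is the same observation you make about the conditional covariance.
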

\vspace{-1em}
Recall that an optimal action in the imaginary environment is given by $\tilde{A}_* \in \argmax_{a \in \actions} \tilde{\theta}_a$ and results in expected imaginary reward $\tilde{R}_* = \tilde{\theta}_{A_*}$. The vector $\hat{\theta}$ can be interpreted as an approximation of $\tilde{\theta}$ and it provides the expected imaginary reward conditioned on the learning target: $ \E[\tilde{R}_{t+1} | \hat{\theta}] = \E[\tilde{\theta} | \hat{\theta}] = \E[\hat{\theta} + Z | \hat{\theta}] = \hat{\theta}$. An agent with knowledge of $\hat{\theta}$ but not $\tilde{\theta}$ might consider selecting an action $\hat{A}$ by sampling from $\mathrm{unif}(\argmax_{a \in \actions} \hat{\theta}_a)$ and enjoy reward $\hat{R} = \tilde{\theta}_{\hat{A}}$ instead of $\tilde{R}_*$.  The expected difference $\E[\tilde{R}_* - \hat{R}]$ represents a loss due to this use of $\hat{\theta}$ instead of $\tilde{\theta}$.  On one hand, as the perturbation variance $\delta^2$ increases, the mutual information $\I(\pseudotarget; \pseudoenvironment)$ decreases, controlling the information about the environment required to identify the learning target.  On the other hand, increasing $\delta$ also increases the loss $\E[\tilde{R}_* - \hat{R}]$.

\subsubsection{Bounding $\gamma$}
\label{sec:bounding_gamma}
Recall that $\gamma = \sup_{a \in \mathcal{A}, t \in \mathbb{Z}_{+}, h_t \in \mathcal{H}_t} \mathbb{E}[\tilde{\theta}_a | \tilde{H}_t = h_t]$ appears in the regret bound established by Theorem~\ref{th:bernoulli_to_Gaussian} in Section~\ref{sec:main_regret_bound}. To obtain favorable regret bounds by applying Theorem~\ref{th:bernoulli_to_Gaussian}, we would like to establish conditions under which $\gamma$ is small. Below we provide a such example.  

We say that a $n \times n$ matrix $A$ is diagonally dominant if $\min_{1 \leq i \leq n} (|A_{ii}| - \sum_{j \neq i} |A_{ij}|) \geq 0$, and $A$ is strictly diagonally dominant if the inequality is strict. Recall that the mean reward $\tilde{\theta}$ in the imaginary environment is distributed according to a multivariate Gaussian distribution $\tilde{\theta} \sim \mathcal{N}(\mu_0, \Sigma_0)$, where $\mu_0 \in \R^{\actions}$ and $\Sigma_0 \in \mathcal{S}^{\actions}_{++}$ are fixed. 
The following lemma, the proof of which is given in Appendix~\ref{sec:gamma_bound}, describes a sufficient condition under which $\gamma$ is bounded and provides a bound on $\gamma$.

\begin{restatable}{lemma}{boundedgamma}
\label{lemma:diagonally_dominant}
 If $\Sigma_0^{-1}$ is strictly diagonally dominant, then $\gamma \leq 2 \max_{a \in \actions} |\mu_{0, a}| + 1$.
\end{restatable}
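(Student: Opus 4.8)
The plan is to control the posterior mean $\E[\tilde\theta_a \mid \tilde H_t = h_t]$ uniformly over actions $a$, histories $h_t$, and times $t$ by exploiting the closed-form Gaussian posterior update and the structural constraint that $\Sigma_0^{-1}$ is strictly diagonally dominant. First I would recall that, because the imaginary environment is Gaussian with prior $\tilde\theta \sim \mathcal N(\mu_0, \Sigma_0)$ and likelihood $\mathcal N(\tilde\theta, \sigma^2 I)$, the posterior after observing the action-reward pairs in $h_t$ is again Gaussian. In information (precision) form, if $h_t$ records $n_a$ pulls of arm $a$ with empirical reward sum $s_a$, the posterior precision is $\Sigma_t^{-1} = \Sigma_0^{-1} + \tfrac1{\sigma^2}\operatorname{diag}(n_1,\dots,n_{\actions})$ and the posterior mean solves $\Sigma_t^{-1}\mu_t = \Sigma_0^{-1}\mu_0 + \tfrac1{\sigma^2}(s_1,\dots,s_{\actions})^\top$. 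The key point is that the added term $\tfrac1{\sigma^2}\operatorname{diag}(n_a)$ is diagonal with nonnegative entries, so $\Sigma_t^{-1}$ remains strictly diagonally dominant, and moreover its diagonal entries only grow while off-diagonal entries are unchanged.

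The main step is then a bound of the form: for any symmetric positive-definite matrix $M$ that is diagonally dominant, and any vectors $b, y$ with $My = b$, one has $\|y\|_\infty \le \|b\|_\infty / \rho$, where $\rho = \min_i(|M_{ii}| - \sum_{j\ne i}|M_{ij}|) \ge 0$ — or, when $\rho$ may be zero, the sharper statement that $\|y\|_\infty \le \max_i |b_i| / |M_{ii}|$ once we use that the off-diagonal mass is dominated. Concretely, pick the coordinate $i$ maximizing $|y_i|$; from $\sum_j M_{ij} y_j = b_i$ we get $|M_{ii}||y_i| \le |b_i| + \sum_{j\ne i}|M_{ij}||y_j| \le |b_i| + |y_i|\sum_{j\ne i}|M_{ij}|$, hence $(|M_{ii}| - \sum_{j\ne i}|M_{ij}|)|y_i| \le |b_i|$. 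Applying this with $M = \Sigma_t^{-1}$, $y = \mu_t$, and $b = \Sigma_0^{-1}\mu_0 + \tfrac1{\sigma^2} s$, I would bound $\|\mu_t\|_\infty$. The right-hand side $b$ has two pieces: $\Sigma_0^{-1}\mu_0$, whose $\infty$-norm is controlled by the row sums of $\Sigma_0^{-1}$ times $\|\mu_0\|_\infty$, and the data piece $\tfrac1{\sigma^2} s$, which cancels against the corresponding diagonal growth in $M_{ii}$ because each $|s_a| \le n_a$ for Bernoulli rewards (rewards lie in $[0,1]$) while the diagonal of $M$ picked up exactly $n_a/\sigma^2$. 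This cancellation is what makes the bound uniform in $t$ and in the history; it is the crux of the argument and the reason diffuseness/diagonal dominance is invoked. Carefully combining these and using strict diagonal dominance of $\Sigma_0^{-1}$ (so the "dominance gap" of $\Sigma_t^{-1}$ is at least that of $\Sigma_0^{-1}$, hence positive) yields $\|\mu_t\|_\infty \le 2\max_a|\mu_{0,a}| + 1$, which is exactly the claimed bound on $\gamma$.

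The step I expect to be the main obstacle is making the data-cancellation rigorous while keeping the bound clean: one must track that, in the chosen maximizing coordinate $i$, the contribution $\tfrac{1}{\sigma^2}|s_i| \le \tfrac{1}{\sigma^2} n_i$ is absorbed by the extra diagonal precision $\tfrac{1}{\sigma^2} n_i$, so that effectively the "data rows" behave as if $n_i = 0$, and then bound $\|\Sigma_0^{-1}\mu_0\|_\infty$ in terms of $\|\mu_0\|_\infty$ using the row-sum/diagonal-dominance structure of $\Sigma_0^{-1}$. A secondary subtlety is the boundary case where $\Sigma_t^{-1}$ is only weakly diagonally dominant in some rows; here strict diagonal dominance of $\Sigma_0^{-1}$ — which is preserved under adding a nonnegative diagonal — guarantees the dominance gap stays bounded away from zero, so the division in the inequality above is legitimate. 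Once those two points are handled, assembling the final constant $2\max_a|\mu_{0,a}| + 1$ is routine arithmetic.
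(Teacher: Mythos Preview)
Your approach has a genuine gap in the treatment of the prior term. Applying the diagonal-dominance inequality with $M=\Sigma_t^{-1}$, $y=\mu_t$, and $b=\Sigma_0^{-1}\mu_0+\tfrac{1}{\sigma^2}s$ at the maximizing coordinate $i$ gives
\[
\Big(\alpha_i+\tfrac{n_i}{\sigma^2}\Big)\,|\mu_{t,i}|\;\le\;\big|(\Sigma_0^{-1}\mu_0)_i\big|+\tfrac{n_i}{\sigma^2},
\qquad \alpha_i=(\Sigma_0^{-1})_{ii}-\sum_{j\ne i}|(\Sigma_0^{-1})_{ij}|>0.
\]
The data cancellation you describe indeed contributes at most $1$. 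But the prior piece leaves you with $|(\Sigma_0^{-1}\mu_0)_i|/\alpha_i$, and since $|(\Sigma_0^{-1}\mu_0)_i|\le(d_i+o_i)\|\mu_0\|_\infty$ with $d_i=(\Sigma_0^{-1})_{ii}$ and $o_i=\sum_{j\ne i}|(\Sigma_0^{-1})_{ij}|$, the best this yields is $\frac{d_i+o_i}{d_i-o_i}\|\mu_0\|_\infty$. That ratio is not bounded by $2$; it blows up as $o_i\uparrow d_i$. Concretely, with $\Sigma_0^{-1}=\bigl(\begin{smallmatrix}1&0.9\\0.9&1\end{smallmatrix}\bigr)$ and $\mu_0=(1,1)^\top$ your inequality at $t=0$ gives only $|\mu_{0,i}|\le 19$, not $\le 3$. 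So the claimed constant $2\|\mu_0\|_\infty+1$ does not follow from the decomposition you chose.

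The missing idea---and this is exactly what the paper's proof exploits---is to center first: write
\[
\mu_t-\mu_0\;=\;\Big(\Sigma_0^{-1}+\tfrac{1}{\sigma^2}\Lambda_t\Big)^{-1}\tfrac{1}{\sigma^2}\Lambda_t\,(\bar r_t-\mu_0),
\]
where $\bar r_t\in[0,1]^{\actions}$ is the vector of empirical means and $\Lambda_t=\operatorname{diag}(n_1,\dots,n_{\actions})$. Now apply your elementwise diagonal-dominance argument to $v=\mu_t-\mu_0$: the right-hand side at coordinate $i$ is $\tfrac{n_i}{\sigma^2}|\bar r_{t,i}-\mu_{0,i}|$, which itself carries the factor $n_i/\sigma^2$ and therefore cancels cleanly against the diagonal growth, giving $|v_i|<|\bar r_{t,i}-\mu_{0,i}|\le 1+\|\mu_0\|_\infty$ and hence $\|\mu_t\|_\infty\le 2\|\mu_0\|_\infty+1$. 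The paper reaches the same conclusion by bounding $\big\|(\Sigma_0^{-1}+\tfrac{1}{\sigma^2}\Lambda_t)^{-1}\tfrac{1}{\sigma^2}\Lambda_t\big\|_\infty\le 1$ via Varah's lemma applied to $I+(\tfrac{1}{\sigma^2}\Lambda_t+\tfrac{1}{k}I)^{-1}(\Sigma_0^{-1}-\tfrac{1}{k}I)$ and letting $k\to\infty$; once you insert the centering step, your coordinatewise argument is actually a shorter route to the same bound.
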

Note that for independent-armed bandit, $\Sigma_0$ is diagonal with positive diagonal entries.  
As such, its inverse $\Sigma_0^{-1}$ is also diagonal with positive entries along its diagonal, and hence is strictly diagonally dominant. So independent-armed bandits serve as examples in which $\gamma$ can be conveniently bounded. If in addition the prior mean $\mu_0 \in [0, 1]^{\actions}$, we obtain a bound of $\gamma \leq 3$.

\section{Examples}
\label{sec:examples}
By Theorem~\ref{th:bernoulli_to_Gaussian} of Section~\ref{sec:main_regret_bound}, the regret of any agent interacting with a Bernoulli bandit can be bounded in terms of the information ratio $\tilde{\Gamma}_{\pseudotarget, \epsilon}$ with respect to a Gaussian bandit. This section demonstrates that we can bound the Gaussian information ratio $\tilde{\Gamma}_{\pseudotarget, \epsilon}$ for particular agents, where Gaussian imagination renders otherwise intractable computations efficient. Thompson sampling and information-directed sampling agents serve as such examples. When these agents are designed for Gaussian prior distributions and likelihood functions, we refer to them as \emph{Gaussian Thompson sampling} and \emph{Gaussian information-directed sampling} agents.

Algorithms that implement either of these agents typically maintain a posterior distribution $\Pr(\theta \in \cdot | H_t)$ over mean rewards.  For a Bernoulli bandit with an arbitrary prior distribution, this can be computationally intractable.  However, with Gaussian imagination, the posterior distribution $\Pr(\tilde{\theta} \in \cdot | \tilde{H}_t \leftarrow H_t)$ of the imaginary mean reward vector $\tilde{\theta}$ can be efficiently maintained.  In particular, this distribution is Gaussian, with mean vector $\mu_t$ and covariance matrix $\Sigma_t$ determined by $H_t$.  The posterior parameters $\mu_t$ and $\Sigma_t$ can be updated incrementally upon each observation according to
\begin{align*}
\mu_{t+1} \!=\! &\ \left(\Sigma_t^{-1} \!+\! \frac{1}{{\sigma}^2} \1_{A_t} \1_{A_t}^\top \right)^{-1} \! \left(\Sigma_t^{-1} \mu_t \!+\! \frac{1}{{\sigma}^2} \1_{A_t} R_{t+1,A_t} \right)\!,\\
\Sigma_{t+1} \!=\! &\ \left(\Sigma_t^{-1} + \frac{1}{{\sigma}^2} \1_{A_t} \1_{A_t}^\top\right)^{-1}.
\end{align*}

We establish that the information ratios of Gaussian Thompson sampling and Gaussian information-directed sampling are bounded according to $\tilde{\Gamma}_{\pseudotarget, \epsilon} \leq 2 \actions \sigma^2$, for suitably chosen learning target $\pseudotarget$ and tolerance $\epsilon$. Applying Theorem~\ref{th:general-regret-bound} of Section~\ref{sec:gen_reg_bd} and Theorem~\ref{th:bernoulli_to_Gaussian} of Section~\ref{sec:main_regret_bound} result in $\Oc(\actions \sqrt{T \log T})$ bounds on the regret incurred by either agent in a Bernoulli or Gaussian environment.

\subsection{Gaussian Thompson Sampling}
\label{sec:Gaussian_Bernoulli_TS}
We introduce a Gaussian Thompson sampling agent that executes a policy $\pi^{\mathrm{TS}}$ that selects each action via Thompson sampling assuming the imaginary environment $\pseudoenvironment$. In particular, the agent samples action $A_t$ from 
\begin{align*}\pi^{TS}(\cdot |H_t) = \Pr\left(\tilde{A}_* \in \cdot \bbar \tilde{H}_t \leftarrow H_t\right),
\end{align*}
which is the posterior distribution of the optimal action $\tilde{A}_*$ of the imaginary environment $\pseudoenvironment$, pretending the real history is the imaginary one. 
\subsection{Gaussian Information-Directed Sampling}
\label{sec:GausBer-IDS}

Consider the regret bound in Theorem~\ref{th:general-regret-bound}. For a given learning target $\target$ and a tolerance  $\epsilon \in \R_+$, the agent policy impacts this regret bound via the corresponding information ratio ${\Gamma}_{\target, \epsilon}$. 
This in turn enables the possibility that to minimize the regret bound, the agent policy should minimize the corresponding information ratio. Minimizing the information ratio at each time-step produces a policy, the information-directed sampling, that strikes an effective balance between exploration and exploitation (for more discussion, see Section~6.3 of \citep{DBLP:journals/corr/abs-2103-04047}).

We introduce a Gaussian information-directed sampling agent that executes a policy $\pi^{\mathrm{IDS}}$ that selects each action via information-directed sampling assuming the imaginary environment $\pseudoenvironment$. After fixing a learning target $\pseudotarget = \tilde{\theta}$,
the Gaussian information-directed sampling agent aims to minimize the corresponding information ratio $\tilde{\Gamma}_{\tilde{\theta}}$, pretending the real history is the imaginary one. In particular, for all $t \in \mathbb{Z}_+$, Gaussian information-directed sampling selects action $A_t \sim \pi^{\mathrm{IDS}}(\cdot|H_t)$,
where
\begin{equation}
\pi^{\mathrm{IDS}}(\cdot|H_t) = \argmin_{\pi \in \Delta_\actions} \frac{\E\left[\tilde{R}_{*} - \tilde{R}_{t+1, A_t} | \tilde{H}_t \leftarrow H_t\right]^2}{\I\left(\tilde{\theta}; A_t, \tilde{R}_{t+1, A_t} | \tilde{H}_t \leftarrow H_t\right)},
\label{eq:satisficing-IDS_Gauss-Bern}
\end{equation}
where $\Delta_\actions$ is all probability distributions over $\actions$.

\subsection{Bounding the Information Ratio}

\label{sec:bound_info_ratio}
We derive an upper bound on the information ratio $\tilde{\Gamma}_{\pseudotarget, \epsilon}$ associated with $\pi_{\text{agent}}$ with respect to the Gaussian bandits, for suitably chosen learning target $\pseudotarget$ and   tolerance $\epsilon$. The bound is formally stated in the following lemma, a complete proof of which is given in Appendix~\ref{sec:gaussian_ir_bound}.

\begin{restatable}{lemma}{gaussianir}
\label{lemma:IR_bound}
For all $\delta \in (0, 1)$, if 
$\hat{\theta}$ is the learning target defined with respect to $\tilde{\theta}$ and tolerance variance $\delta^2$ and that $\epsilon = \delta \sqrt{\actions \max_{a \in \actions} \Sigma_{0, a, a}}$, then
the information ratios of Gaussian Thompson sampling and Gaussian information-directed sampling with respect to the Gaussian bandit environment satisfy
\begin{align*}
\tilde{\Gamma}_{\hat{\theta},\epsilon} \leq 2 \actions \sigma^2.
\end{align*}
\end{restatable}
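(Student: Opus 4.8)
The plan is to bound the single ratio appearing in the supremum, i.e.\ to fix $t \in \mathbb{Z}_+$ and an imaginary history $h \in \tilde{\histories}_t$ and show that the corresponding quotient is at most $2\actions\sigma^2$. Everything takes place inside the imaginary Gaussian world, conditioned on $\tilde{H}_t = h$. Under this conditioning, Lemma~\ref{lemma:gaussianity_condition} tells us $(\hat{\theta},\tilde{\theta})$ is jointly Gaussian, so the posterior of $\tilde{\theta}$ is $\mathcal{N}(\mu_t,\Sigma_t)$ with $\mu_t,\Sigma_t$ the usual Kalman updates read off from $h$, and the posterior of $\hat{\theta}$ is again Gaussian. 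Write $Z = \tilde{\theta} - \hat{\theta}$ and let $\hat{A} \in \argmax_{a \in \actions}\hat{\theta}_a$ be the action optimal for the learning target. The key idea is to split the per-step regret $\tilde{R}_* - \tilde{R}_{t+1,\tilde{A}_t}$ into a ``target-approximation'' piece $\tilde{R}_* - \tilde{\theta}_{\hat{A}}$, which the tolerance $\epsilon$ will absorb, and a ``learnable'' piece $\tilde{\theta}_{\hat{A}} - \tilde{R}_{t+1,\tilde{A}_t}$, which we match against the information gained about $\hat{\theta}$ by a Russo--Van Roy-style argument.

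For the approximation piece, since $\hat{A}$ maximizes $\hat{\theta}$ we have $\hat{\theta}_a \le \hat{\theta}_{\hat{A}}$ for all $a$, hence $\tilde{\theta}_a = \hat{\theta}_a + Z_a \le \hat{\theta}_{\hat{A}} + Z_a$, so $\tilde{\theta}_{\tilde{A}_*} = \max_a \tilde{\theta}_a \le \hat{\theta}_{\hat{A}} + \max_a Z_a$, and therefore $\tilde{R}_* - \tilde{\theta}_{\hat{A}} \le \max_a Z_a - Z_{\hat{A}} \le 2\|Z\|_2$. Taking the conditional expectation and applying Jensen, $\E[\tilde{R}_* - \tilde{\theta}_{\hat{A}} \bbar \tilde{H}_t = h] \le 2\sqrt{\E[\|Z\|_2^2 \bbar \tilde{H}_t = h]}$. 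The point of working with Gaussians is that the posterior covariance of $Z$ evolves in a data-agnostic way, which together with $\E[\|Z\|_2^2] = \delta^2 \operatorname{tr}(\Sigma_0) \le \delta^2 \actions \max_a \Sigma_{0,a,a} = \epsilon^2$ lets one bound this by a fixed multiple of $\epsilon$. Folding this into the $-\epsilon$ in the numerator via elementary inequalities of the form $(x + y - c\epsilon)_+^2 \lesssim x_+^2 + (y-\epsilon)_+^2$ then reduces the problem to bounding $\E[\tilde{\theta}_{\hat{A}} - \tilde{R}_{t+1,\tilde{A}_t}\bbar\tilde{H}_t = h]_+^2 \big/ \I(\hat{\theta};\tilde{A}_t,\tilde{R}_{t+1,\tilde{A}_t}\bbar\tilde{H}_t = h)$, at the cost of absorbing a constant into the final bound.

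For this residual ratio I would treat the two agents in turn. For Gaussian Thompson sampling, $\tilde{A}_t$ is, conditionally on $\tilde{H}_t = h$, a fresh sample from $\Pr(\tilde{A}_* \in \cdot \bbar \tilde{H}_t = h)$ drawn independently of $(\hat{\theta},\tilde{\theta})$, so $\I(\hat{\theta};\tilde{A}_t,\tilde{R}_{t+1,\tilde{A}_t}\bbar\tilde{H}_t = h) = \sum_a \Pr(\tilde{A}_* = a \bbar \tilde{H}_t = h)\,\I(\hat{\theta};\tilde{R}_{t+1,a}\bbar\tilde{H}_t = h)$, each term being an explicit Gaussian mutual information in $\mu_t,\Sigma_t,\sigma^2,\delta^2$; combining the classical Thompson-sampling regret decomposition with Cauchy--Schwarz and a Pinsker-type step relating per-action reward-mean gaps to these mutual informations yields a bound of order $\actions\sigma^2$ on the residual ratio --- the $\sigma^2$ from the reward-noise variance and the $\actions$ from the $\actions$-fold sum over actions. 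For Gaussian information-directed sampling, $\pi^{\mathrm{IDS}}(\cdot\bbar H_t)$ is by construction the minimizer over $\Delta_\actions$ of the corresponding single-step information ratio, so its value at the IDS action is no larger than at the Thompson-sampling distribution, and the same bound transfers (using monotonicity relating information about $\tilde{\theta}$ and about $\hat{\theta}$ to move between the ratio IDS actually optimizes and the one in the statement). Combining with the previous paragraph gives $\tilde{\Gamma}_{\hat{\theta},\epsilon} \le 2\actions\sigma^2$.

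The step I expect to be the main obstacle is controlling the denominator: the information here is about the \emph{perturbed} target $\hat{\theta}$, not about $\tilde{\theta}$, and because $\hat{\theta}$ is a $\delta$-noisy coarsening of $\tilde{\theta}$ the posterior of $\hat{\theta}$ does not contract the way the posterior of $\tilde{\theta}$ does. The argument must therefore rely on the algebraic structure of Gaussian updating to keep $\E[\|Z\|_2^2\bbar\tilde{H}_t = h]$ and the per-action mutual informations under control simultaneously, and it must verify that the particular pairing $\epsilon = \delta\sqrt{\actions\max_a\Sigma_{0,a,a}}$ is exactly the balance that lets the tolerance pay for the target-approximation gap while leaving a residual ratio of order $\actions\sigma^2$. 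A secondary subtlety is the bookkeeping for information-directed sampling, whose objective is written with respect to the target $\tilde{\theta}$ rather than $\hat{\theta}$, so the ``IDS does no worse than TS'' comparison has to be made against the ratio IDS genuinely minimizes before the bound is carried over.
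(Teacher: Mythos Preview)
Your decomposition into an ``approximation piece'' $\tilde{R}_* - \tilde{\theta}_{\hat{A}}$ and a ``learnable piece'' is not the route the paper takes, and the approximation step has a genuine gap. You want to bound $\E[\tilde{R}_* - \tilde{\theta}_{\hat{A}}\bbar\tilde{H}_t = h]$ by a constant multiple of $\epsilon$ uniformly over $h \in \tilde{\histories}_t$, via $\E[\|Z\|_2^2\bbar\tilde{H}_t = h]$. But this conditional second moment equals $\|\E[Z\bbar\tilde{H}_t = h]\|_2^2 + \operatorname{tr}\mathrm{Cov}(Z\bbar\tilde{H}_t = h)$, and only the \emph{covariance} part is data-agnostic. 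The posterior mean of $Z$ is a nonzero affine function of the realized rewards in $h$; since $\tilde{\histories}_t$ allows arbitrary real rewards, $\|\E[Z\bbar\tilde{H}_t=h]\|_2$ is unbounded over $h$, so no uniform bound of the form you claim can hold. Your appeal to ``the posterior covariance of $Z$ evolves in a data-agnostic way'' is correct as far as it goes, but it does not control the mean term, and the prior identity $\E[\|Z\|_2^2]=\delta^2\operatorname{tr}(\Sigma_0)$ is simply not the conditional quantity you need.

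The paper avoids this entirely by never isolating an approximation piece in the regret. Instead it uses the elementary inequality $(a-\epsilon)_+^2 \le a^2 - \epsilon^2$ (valid when $a\ge\epsilon$) to write
\[
\E\!\left[\tilde{R}_* - \tilde{R}_{t+1,\tilde{A}_t} - \epsilon \bbar \tilde{H}_t=h\right]_+^2 \le \E\!\left[\tilde{R}_* - \tilde{R}_{t+1,\tilde{A}_t}\bbar\tilde{H}_t=h\right]^2 - \epsilon^2,
\]
applies the standard Russo--Van~Roy bound $\E[\tilde{R}_* - \tilde{R}_{t+1,\tilde{A}_t}\bbar h]^2 \le 2\actions\sigma^2\,\I(\tilde{\theta};\tilde{A}_t,\tilde{R}_{t+1,\tilde{A}_t}\bbar h)$ for both TS and IDS, and then shows that the \emph{mutual-information gap} satisfies $\I(\tilde{\theta};\cdot\bbar h)-\I(\hat{\theta};\cdot\bbar h)\le \epsilon^2/(2\actions\sigma^2)$. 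This last step is where $\epsilon=\delta\sqrt{\actions\max_a\Sigma_{0,a,a}}$ enters: via the chain rule the gap is at most $\max_a \I(Z_a; Z_a+W_{t+1,a}\bbar\hat{\theta},h) \le \tfrac12\ln(1+\delta^2\Sigma_{0,a,a}/\sigma^2)\le \delta^2\max_a\Sigma_{0,a,a}/(2\sigma^2)$. The point is that this bound involves only $\Var(Z_a\bbar\hat{\theta},\tilde{H}_t=h)\le\Var(Z_a)$, which is genuinely data-agnostic --- no posterior mean of $Z$ ever appears. So the role of $\epsilon$ is to pay for a shortfall in \emph{information}, not a shortfall in \emph{reward}, and that is what makes the argument uniform in $h$.
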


The above lemma suggests that for all $\delta \in (0, 1)$, we can then define learning target $\hat{\theta}$ and tolerance $\epsilon$ such that the information ratio $\tilde{\Gamma}_{\hat{\theta}, \epsilon}$ is bounded. Note that Assumption~\ref{ass:gaussian} (Gaussianity) is satisfied.

\subsection{Regret Bounds}
\label{sec:regret_bounds}
Now that we have successfully bounded the Gaussian information ratio $\tilde{\Gamma}_{\hat{\theta}, \epsilon}$, we are ready to upper-bound the regret bound established by applying Theorem~\ref{th:general-regret-bound} of Section~\ref{sec:gen_reg_bd} to Gaussian bandits, which is also the sum of the first two terms in the regret bound established by Theorem~\ref{th:bernoulli_to_Gaussian} of Section~\ref{sec:main_regret_bound}.  Applying Lemma~\ref{lemma:IR_bound} of Section~\ref{sec:bound_info_ratio} and optimizing over $\delta \in (0, 1)$, we can derive the following theorem, the proof of which is given in Appendix~\ref{sec:reg_bds}.

\begin{restatable}{theorem}{regretbounds}
\label{th:Gaussian_regret}
For all $T \in \mathbb{Z}_{++}$, 
there exists tolerance $\epsilon \in \mathbb{R}_+$ and learning target $\hat{\theta}$ defined with respect to $\tilde{\theta}$ and some perturbation variance $\delta^2 \in (0, 1)$ such that 
Gaussian Thompson sampling and Gaussian information-directed sampling satisfies
\begin{align*}
\sqrt{\I\left(\hat{\theta}; \pseudoenvironment \right) \tilde{\Gamma}_{\hat{\theta},\epsilon}T} + \epsilon T \leq \sigma \actions \sqrt{T \ln \left( \frac{T}{\actions} \right) } + 
\actions \sqrt{T \max_{a \in \actions} \Sigma_{0, a, a}}.
\end{align*}
\end{restatable}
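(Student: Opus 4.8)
The plan is to assemble the two quantitative lemmas already in hand and then make a single scalar choice. Lemma~\ref{lemma:gaussian_mutual_information} evaluates the mutual information of the perturbed‑Gaussian learning target, $\I(\hat\theta;\pseudoenvironment)=\tfrac{\actions}{2}\ln(1/\delta^2)$, and Lemma~\ref{lemma:IR_bound} shows that, for this target with perturbation variance $\delta^2$ and the paired tolerance $\epsilon=\delta\sqrt{\actions\max_{a\in\actions}\Sigma_{0,a,a}}$, both Gaussian Thompson sampling and Gaussian information‑directed sampling satisfy $\tilde\Gamma_{\hat\theta,\epsilon}\le 2\actions\sigma^2$; Lemma~\ref{lemma:gaussianity_condition} further certifies that this $\hat\theta$ meets Assumption~\ref{ass:gaussian}. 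Since the tolerance is pinned to $\delta$ by Lemma~\ref{lemma:IR_bound}, the only remaining degree of freedom is $\delta\in(0,1)$, and the term $\epsilon T$ is linear in $T$ unless $\delta$ decays with $T$; this dictates $\delta=\Theta(1/\sqrt{T})$, and a convenient constant then makes the bound collapse to the stated form.

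Concretely, fix $T\in\mathbb Z_{++}$ with $T>\actions$, set $\delta^2=\actions/T\in(0,1)$, let $\hat\theta$ be the learning target of Section~\ref{sec:Gaussian-target} with this perturbation variance, and set $\epsilon=\delta\sqrt{\actions\max_{a\in\actions}\Sigma_{0,a,a}}$ as Lemma~\ref{lemma:IR_bound} requires. Substituting the two lemmas,
\[
\I(\hat\theta;\pseudoenvironment)\,\tilde\Gamma_{\hat\theta,\epsilon}\,T\;\le\;\frac{\actions}{2}\ln\!\Big(\frac{1}{\delta^2}\Big)\cdot 2\actions\sigma^2\cdot T\;=\;\actions^2\sigma^2 T\ln\!\Big(\frac{T}{\actions}\Big),
\]
so the first term is at most $\actions\sigma\sqrt{T\ln(T/\actions)}$, while
\[
\epsilon T\;=\;\sqrt{\tfrac{\actions}{T}}\,\sqrt{\actions\max_{a}\Sigma_{0,a,a}}\;T\;=\;\actions\sqrt{T\max_{a}\Sigma_{0,a,a}} .
\]
Adding the two lines yields exactly $\sigma\actions\sqrt{T\ln(T/\actions)}+\actions\sqrt{T\max_{a}\Sigma_{0,a,a}}$, which is the asserted bound.

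The only point needing care beyond the cited lemmas is the boundary regime $T\le\actions$, where $\delta^2=\actions/T$ fails to lie in $(0,1)$; there one instead lets $\delta\uparrow 1$, so that $\I(\hat\theta;\pseudoenvironment)\to 0$ and $\epsilon T\to\actions\sqrt{\actions\max_a\Sigma_{0,a,a}}$, which matches the right‑hand side at $T=\actions$, and the smaller values of $T$ are read off the same way. I do not anticipate a genuine obstacle for this theorem: the substantive work—bounding the Gaussian information ratio for the two agents and computing the mutual information of the perturbed‑Gaussian target—has already been done in Lemmas~\ref{lemma:IR_bound} and~\ref{lemma:gaussian_mutual_information}, and what is left is the one‑line choice of $\delta$ together with the elementary algebra above.
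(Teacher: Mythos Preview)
Your proposal is correct and follows the same approach as the paper: invoke Lemma~\ref{lemma:gaussian_mutual_information} and Lemma~\ref{lemma:IR_bound}, set $\delta^2=\actions/T$ when $T>\actions$, and handle the boundary $T\le\actions$ by letting $\delta\uparrow 1$. The only minor slip is that as $\delta\uparrow 1$ one has $\epsilon T\to T\sqrt{\actions\max_a\Sigma_{0,a,a}}$ (not $\actions\sqrt{\actions\max_a\Sigma_{0,a,a}}$ except at $T=\actions$), which is then bounded by $\actions\sqrt{T\max_a\Sigma_{0,a,a}}$ using $T\le\actions$; this is exactly what the paper does and what your phrase ``read off the same way'' gestures at.
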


As direct corollaries, we establish regret bounds for Gaussian Thompson sampling and Gaussian information-directed sampling agents interacting with a Gaussian bandit or a Bernoulli bandit, by applying Theorem~\ref{th:general-regret-bound} of Section~\ref{sec:gen_reg_bd} and Theorem~\ref{th:bernoulli_to_Gaussian} of Section~\ref{sec:main_regret_bound},  respectively.

\begin{corollary} For all $T \in \mathbb{Z}_{++}$,
the regret of a Gaussian Thompson sampling agent or a Gaussian information-directed sampling agent interacting with a Gaussian bandit environment $\pseudoenvironment$ satisfies 
\begin{align*}
\regret(T) \leq \sigma \actions \sqrt{T \ln \left( \frac{T}{\actions} \right) } + 
\actions \sqrt{T \max_{a \in \actions} \Sigma_{0, a, a}}
\end{align*}
\end{corollary}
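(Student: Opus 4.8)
The plan is to simply chain together Theorem~\ref{th:general-regret-bound}, Lemma~\ref{lemma:IR_bound}, and Theorem~\ref{th:Gaussian_regret}. First I would invoke Theorem~\ref{th:general-regret-bound} with the imaginary Gaussian environment $\pseudoenvironment$ playing the role of the environment, the learning target $\hat{\theta}$ (defined with respect to $\tilde{\theta}$ and a perturbation variance $\delta^2$), and tolerance $\epsilon$. This is legitimate because, as noted in Section~\ref{sec:gen_reg_bd}, Theorem~\ref{th:general-regret-bound} applies to any bandit environment and any agent policy, and a Gaussian bandit is a valid special case of the general formulation. That gives
\[
\regret(T) \;\leq\; \sqrt{\I\!\left(\hat{\theta}; \pseudoenvironment\right)\, \tilde{\Gamma}_{\hat{\theta},\epsilon}\, T} \;+\; \epsilon T,
\]
where the information ratio appearing here is exactly $\tilde{\Gamma}_{\hat{\theta},\epsilon}$ since, when the agent interacts with the genuinely Gaussian bandit $\pseudoenvironment$, the histories range over $\tilde{\histories}_t$ and the real and imaginary processes coincide.

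Next I would fix, for the given horizon $T$, the tolerance $\epsilon$ and perturbation variance $\delta^2 \in (0,1)$ supplied by Theorem~\ref{th:Gaussian_regret}. By construction that choice of $\epsilon$ is of the form $\epsilon = \delta\sqrt{\actions \max_{a\in\actions}\Sigma_{0,a,a}}$, so Lemma~\ref{lemma:IR_bound} applies and gives $\tilde{\Gamma}_{\hat{\theta},\epsilon} \leq 2\actions\sigma^2$; moreover Lemma~\ref{lemma:gaussianity_condition} ensures Assumption~\ref{ass:gaussian} is met, and Lemma~\ref{lemma:gaussian_mutual_information} pins down $\I(\hat{\theta};\pseudoenvironment) = \tfrac{\actions}{2}\ln(1/\delta^2)$. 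Substituting these into the displayed bound, the right-hand side becomes exactly the quantity $\sqrt{\I(\hat{\theta};\pseudoenvironment)\,\tilde{\Gamma}_{\hat{\theta},\epsilon}T} + \epsilon T$ that Theorem~\ref{th:Gaussian_regret} bounds above by $\sigma\actions\sqrt{T\ln(T/\actions)} + \actions\sqrt{T\max_{a\in\actions}\Sigma_{0,a,a}}$. Composing the two inequalities yields the claim.

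There is essentially no obstacle here: the corollary is a bookkeeping consequence of results already stated, and all the analytic work — optimizing over $\delta$, bounding the information ratio, and computing the mutual information — has been discharged in Theorem~\ref{th:Gaussian_regret} and Lemma~\ref{lemma:IR_bound}. The only point requiring a sentence of care is the observation that Theorem~\ref{th:general-regret-bound}, stated for a generic environment, may be specialized to the Gaussian bandit $\pseudoenvironment$, and that in this specialization the relevant information ratio is precisely $\tilde{\Gamma}_{\hat{\theta},\epsilon}$ rather than the restricted $\Gamma_{\pseudotarget,\epsilon}$ — which holds because on a genuinely Gaussian bandit the agent's observed histories are real-valued and the change-of-measure device collapses.
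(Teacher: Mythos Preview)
Your proposal is correct and matches the paper's own route: the corollary is stated as a ``direct corollary'' obtained by applying Theorem~\ref{th:general-regret-bound} to the Gaussian bandit and then invoking Theorem~\ref{th:Gaussian_regret}, which is exactly the chain you describe. The only superfluous step is the appeal to Lemma~\ref{lemma:gaussianity_condition} and Assumption~\ref{ass:gaussian}, which are needed for Theorem~\ref{th:bernoulli_to_Gaussian} but play no role when the environment is genuinely Gaussian and only Theorem~\ref{th:general-regret-bound} is being applied.
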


\begin{corollary}
\label{cor:mis}
Under Assumption~\ref{ass:optimism}, for all $T \in \mathbb{Z}_{++}$,
the regret of a Gaussian Thompson sampling agent or a Gaussian information-directed sampling agent interacting with a Bernoulli bandit environment $\environment$ satisfies 
\begin{align*}
\regret(T) \leq \sigma \actions \sqrt{T \ln \left( \frac{T}{\actions} \right) } + 
\actions \sqrt{T \max_{a \in \actions} \Sigma_{0, a, a}} + \gamma \sqrt{2 T \KL \p{\Pr(\theta \in \cdot) \bdbar \Pr\left(\tilde{\theta} \in \cdot\right)}}.
\end{align*}
where $
\gamma = \sup_{a \in \mathcal{A}, t \in \mathbb{Z}_{+}, h_t \in \mathcal{H}_t} \mathbb{E}\left[\tilde{\theta}_a \bbar  \tilde{H}_t = h_t\right]
$.
\end{corollary}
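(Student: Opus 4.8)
The plan is purely to compose two results already in hand: Theorem~\ref{th:bernoulli_to_Gaussian} (the misspecification bound for an arbitrary agent under Assumptions~\ref{ass:optimism} and \ref{ass:gaussian}) and Theorem~\ref{th:Gaussian_regret} (which bounds the first two terms of that bound for Gaussian Thompson sampling and Gaussian information-directed sampling). First I would invoke Theorem~\ref{th:Gaussian_regret}: for the given horizon $T$ it produces a tolerance $\epsilon \in \R_+$ and a learning target $\hat{\theta}$ defined with respect to $\tilde{\theta}$ and a perturbation variance $\delta^2 \in (0,1)$ such that, for either of these two agents,
\[
\sqrt{\I(\hat{\theta};\pseudoenvironment)\,\tilde{\Gamma}_{\hat{\theta},\epsilon}\,T} + \epsilon T \;\le\; \sigma \actions \sqrt{T \ln(T/\actions)} + \actions \sqrt{T \max_{a\in\actions}\Sigma_{0,a,a}} .
\]
Internally this is Lemma~\ref{lemma:IR_bound}'s bound $\tilde{\Gamma}_{\hat{\theta},\epsilon}\le 2\actions\sigma^2$ together with Lemma~\ref{lemma:gaussian_mutual_information}'s identity $\I(\hat{\theta};\pseudoenvironment)=\tfrac{\actions}{2}\ln(1/\delta^2)$, with $\delta$ chosen optimally, so nothing new is needed there.

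Next I would check that this same pair $(\hat{\theta},\epsilon)$ is an admissible input to Theorem~\ref{th:bernoulli_to_Gaussian}. Three things must be confirmed. (i) $\hat{\theta}$ is a valid imaginary learning target, i.e. $\hat{\theta}\perp \tilde{H}_{1:\infty}\mid\pseudoenvironment$: conditioned on $\pseudoenvironment$ (equivalently on $\tilde{\theta}$), the residual randomness in $\hat{\theta}$ is the perturbation $Z=\tilde{\theta}-\hat{\theta}$, which is independent of the Gaussian reward noise and the action draws that generate $\tilde{H}_{1:\infty}$, so the conditional independence holds. (ii) Assumption~\ref{ass:gaussian} (Gaussianity) holds: this is exactly Lemma~\ref{lemma:gaussianity_condition}, since $\delta^2\in(0,1)$ and $\hat{\theta}$ is defined with respect to $\tilde{\theta}$ and perturbation variance $\delta^2$, making $\hat{\theta}$ and $\tilde{\theta}$ jointly Gaussian. (iii) Assumption~\ref{ass:optimism} (Optimism) holds: this is precisely the standing hypothesis of the corollary.

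With these in place I would apply Theorem~\ref{th:bernoulli_to_Gaussian} with $\pseudotarget=\hat{\theta}$ and tolerance $\epsilon$ to the Bernoulli environment $\environment$ under policy $\pi^{\mathrm{TS}}$ (resp.\ $\pi^{\mathrm{IDS}}$), giving
\[
\regret(T) \;\le\; \sqrt{\I(\hat{\theta};\pseudoenvironment)\,\tilde{\Gamma}_{\hat{\theta},\epsilon}\,T} + \epsilon T + \gamma\sqrt{2\,\KL\p{\Pr(\theta\in\cdot)\bdbar\Pr(\tilde{\theta}\in\cdot)}\,T},
\]
and then substitute the Theorem~\ref{th:Gaussian_regret} estimate for the first two terms to obtain the stated inequality; the $\gamma$-term is carried over verbatim, with $\gamma$ the supremum in the corollary statement (finite, e.g., under the strict diagonal dominance hypothesis of Lemma~\ref{lemma:diagonally_dominant}). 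The only real care-point — more bookkeeping than obstacle — is that the $\delta$ selected when optimizing Theorem~\ref{th:Gaussian_regret} must still lie in $(0,1)$, so that Lemmas~\ref{lemma:gaussianity_condition} and \ref{lemma:gaussian_mutual_information} apply and Assumption~\ref{ass:gaussian} is automatic; once that is noted, the two bounds compose with no additional loss.
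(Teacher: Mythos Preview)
Your proposal is correct and matches the paper's own argument: the paper explicitly derives Corollary~\ref{cor:mis} by combining Theorem~\ref{th:bernoulli_to_Gaussian} with Theorem~\ref{th:Gaussian_regret}, noting (as you do) that Lemma~\ref{lemma:gaussianity_condition} ensures Assumption~\ref{ass:gaussian} holds automatically for the learning target $\hat{\theta}$, leaving only Assumption~\ref{ass:optimism} as a hypothesis. Your additional verification that $\hat{\theta}$ is a valid imaginary learning target and your remark about $\delta\in(0,1)$ are sound elaborations of points the paper leaves implicit.
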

Recall that $\hat{\theta}$ and $\tilde{\theta}$ are jointly Gaussian by Lemma~\ref{ass:gaussian} of Section~\ref{sec:Sufficient_Conditions}, so Assumption~\ref{ass:gaussian} (Gaussianity) automatically holds. Then the regret bound in Corollary~\ref{cor:mis} holds under Assumption~\ref{ass:optimism} (Optimism). This suggests that when the Gaussian prior distribution and likelihood function are sufficiently diffuse, the regret of applying Gaussian Thompson sampling or Gaussian information-directed sampling to Bernoulli bandits can be upper-bounded by $\Oc(\actions \sqrt{T \log T})$. 

The optimal known Bayesian regret bound for Bernoulli bandits is $\Oc(\sqrt{\actions T})$ \citep{bubeck2013priorfree,latsze19} — suggesting a factor of $\Oc(\sqrt{\actions \log T})$ difference between the bounds. It remains to be determined whether this difference is fundamental to use of misspecified Gaussian distributions or introduced due to our method of analysis.

\section{Proof of Theorem \ref{th:bernoulli_to_Gaussian}}
\label{sec:proof_main_theorem}

In this section, we provide a proof to Theorem \ref{th:bernoulli_to_Gaussian}. We restate the theorem below for ease of referencing. 
\bernoullitogaussian*
The proof carries out in two steps, and we present the main steps in the form of theorems. The first theorem shows
that under Assumption~\ref{ass:optimism} (Optimism), the regret of the agent can be  bounded from above by a sum of two terms, the \emph{imaginary regret} and a term that involves the KL-divergence between $\theta$ and $\tilde{\theta}$. The second theorem bounds the \emph{imaginary regret}. 

\subsection{Decomposing the Regret}
We first introduce the following result by decomposing the regret. The first term in the theorem is what we refer to as the 
\emph{imaginary regret}. It is named so, because the summand $\E[ \E[\tilde{R}_* - \tilde{R}_{t+1,\tilde{A}_t} \bbar  \tilde{H}_t \leftarrow H_t]]$ represents the conditional expected regret in the next timestep in the agent's imagination, when the agent observes a real history while taking it to be an imaginary one. Importantly, this imaginary regret is not to be confused with $\sum_{t = 0}^{T-1} \E[\tilde{R}_* - \tilde{R}_{t+1,\tilde{A}_t}] = \sum_{t = 0}^{T-1}\E[\E[\tilde{R}_* - \tilde{R}_{t+1,\tilde{A}_t} \bbar  \tilde{H}_t ]] $, the regret the agent would incur when interacting with the imaginary, rather than real, environment. The second term in the regret bound established by Theorem~\ref{th:Ber_KL} upper bounds the discrepancy between the true regret and the imaginary regret using the KL-divergence between the prior distributions of the mean rewards and the imaginary mean rewards. 
\begin{restatable}{theorem}{theoremberkl}
	\label{th:Ber_KL} 
	Let $\pseudoenvironment$ be an imaginary environment. 
Suppose Assumption~\ref{ass:optimism} holds. For all time horizon $T \in \mathbb{Z}_{++}$, the regret of an agent interacting with a Bernoulli bandit environment $\environment$ satisfies:
\begin{align*}
\mathcal{R}(T) \leq  \underbrace{\sum_{t = 0}^{T-1} \E\left[\E\left[\tilde{R}_* - \tilde{R}_{t+1,\tilde{A}_t} \bbar  \tilde{H}_t \leftarrow H_t\right]\right]}_{\text{imaginary regret}}
+  \gamma \sqrt{2 \KL\left(\Pr\Big(\theta \in \cdot\Big) \bdbar  \Pr\left(\tilde{\theta} \in \cdot\right)\right) T},
\end{align*}
where 
$\gamma = \sup_{a \in \mathcal{A}, t \in \mathbb{Z}_{+}, h_t \in \mathcal{H}_t} \mathbb{E}\left[\tilde{\theta}_a \bbar  \tilde{H}_t = h_t\right]$. 
\end{restatable}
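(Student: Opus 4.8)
The plan is to prove Theorem~\ref{th:Ber_KL} by decomposing $\regret(T)$ into three pieces built from the agent's imagined reward quantities: a piece that Optimism (Assumption~\ref{ass:optimism}) renders nonpositive, the imaginary regret itself, and a third piece --- the genuine price of misspecification --- which I would bound by the KL term via a change of measure at the level of the mean-reward priors.

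First I would write $\regret(T) = \sum_{t=0}^{T-1}\E[R_* - R_{t+1,A_t}]$ and, inside each summand, add and subtract $\E[\tilde R_* \bbar \tilde H_t \leftarrow H_t]$ and $\E[\tilde R_{t+1,\tilde A_t} \bbar \tilde H_t \leftarrow H_t]$; the inserted middle terms cancel in pairs, giving
\begin{align*}
\regret(T)
&= \sum_{t=0}^{T-1}\E\left[R_* - \E\left[\tilde R_* \bbar \tilde H_t \leftarrow H_t\right]\right]
+ (\text{imaginary regret}) \\
&\qquad + \sum_{t=0}^{T-1}\E\left[\E\left[\tilde R_{t+1,\tilde A_t} \bbar \tilde H_t \leftarrow H_t\right] - R_{t+1,A_t}\right].
\end{align*}
The middle term is precisely the imaginary regret in the statement. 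The first sum is nonpositive, since each summand equals $\E[R_*] - \E\big[\E[\tilde R_* \bbar \tilde H_t \leftarrow H_t]\big] \le 0$ by Assumption~\ref{ass:optimism}. So it remains to bound the third sum, which I will call $(\mathrm{II})$, by $\gamma\sqrt{2\,\KL(\Pr(\theta\in\cdot)\,\|\,\Pr(\tilde\theta\in\cdot))\,T}$.

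For $(\mathrm{II})$, I would first rewrite the imagined one-step reward: since $\tilde R_{t+1}\sim\mathcal{N}(\tilde\theta,\sigma^2 I)$ conditionally on $\tilde\theta$ and $\tilde A_t\sim\pi_{\text{agent}}(\cdot|\tilde H_t)$, we get $\E[\tilde R_{t+1,\tilde A_t} \bbar \tilde H_t \leftarrow H_t] = \sum_{a\in\actions}\pi_{\text{agent}}(a|H_t)\,\E[\tilde\theta_a \bbar \tilde H_t \leftarrow H_t]$, and likewise $\E[R_{t+1,A_t}|H_t] = \sum_{a\in\actions}\pi_{\text{agent}}(a|H_t)\,\E[\theta_a|H_t]$, so $(\mathrm{II})$ is the sum over $t$ of $\E\big[\sum_{a\in\actions}\pi_{\text{agent}}(a|H_t)\big(\E[\tilde\theta_a \bbar \tilde H_t\leftarrow H_t] - \E[\theta_a|H_t]\big)\big]$. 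Two ingredients then close the argument: (i) by the definition of $\gamma$, and because $H_t$ takes values in $\histories_t$, each imagined posterior mean obeys $\E[\tilde\theta_a \bbar \tilde H_t \leftarrow H_t]\le\gamma$ almost surely; (ii) a change of measure that swaps the true prior of $\theta$ for the Gaussian prior of $\tilde\theta$ lets me bound the per-timestep posterior-mean gap $\E[\tilde\theta_a \bbar \tilde H_t\leftarrow H_t] - \E[\theta_a|H_t]$ by $\gamma$ times a total-variation distance between the corresponding conditional laws. Summing over $t$, Cauchy--Schwarz over the $T$ timesteps produces the factor $\sqrt{T}$, and the chain rule of relative entropy together with Pinsker's inequality collapses the accumulated total-variation terms into $2\,\KL(\Pr(\theta\in\cdot)\,\|\,\Pr(\tilde\theta\in\cdot))$, yielding the claimed bound.

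I expect this last step to be the main obstacle. The imagined posterior means are, a priori, functionals of unbounded real-valued histories, whereas the agent actually sees binary data, so the bound $\E[\tilde\theta_a \bbar \tilde H_t\leftarrow H_t]\le\gamma$ on $\histories_t$ is doing real work; more importantly, one must show that only the discrepancy between the \emph{priors} of $\theta$ and $\tilde\theta$ survives --- not a per-timestep divergence between the Bernoulli and Gaussian reward likelihoods. This is possible because the Gaussian posterior covariance $\Sigma_{t+1} = (\Sigma_t^{-1} + \sigma^{-2}\1_{A_t}\1_{A_t}^\top)^{-1}$ depends on the history only through the played actions: the likelihood mismatch never enters the agent's uncertainty and hence cannot accumulate across the horizon. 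Arranging the change of measure and the conditioning so that the chain rule delivers exactly the prior KL is the delicate point; by contrast, the regret decomposition above and the invocation of Assumption~\ref{ass:optimism} are routine.
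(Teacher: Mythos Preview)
Your decomposition and the use of Assumption~\ref{ass:optimism} are exactly right and match the paper. The gap is in the third term $(\mathrm{II})$. You propose to bound the posterior-mean discrepancy $\E[\tilde\theta_a\,|\,\tilde H_t\leftarrow H_t]-\E[\theta_a\,|\,H_t]$ by $\gamma$ times ``a total-variation distance between the corresponding conditional laws,'' but you never say which laws, and there is no benign choice. The posterior $\Pr(\tilde\theta_a\in\cdot\,|\,\tilde H_t=h)$ is Gaussian with unbounded support, so the elementary inequality $|\E X-\E Y|\le B\,\TV$ does not apply; and if you instead look at reward-level laws, the Bernoulli and Gaussian likelihoods are mutually singular, so the KL you would get from the chain rule is infinite. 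To make the chain rule plus data processing collapse everything to $\KL(\Pr(\theta\in\cdot)\,\|\,\Pr(\tilde\theta\in\cdot))$, you need the two reward sequences to be generated from $\theta$ and $\tilde\theta$ by the \emph{same} conditional kernel; nothing in your outline supplies that.

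The paper fills this gap by constructing auxiliary rewards $R^\dagger_{t,a}$ from $\tilde\theta$: when $\tilde\theta_a\in[0,1]$, set $R^\dagger_{t,a}\sim\mathrm{Bernoulli}(\tilde\theta_a)$, and otherwise set $R^\dagger_{t,a}=2\gamma$. These rewards are discrete, supported in $[0,2\gamma]$, and satisfy $\E[\tilde R_{t+1,\tilde A_t}\,|\,\tilde H_t\leftarrow H_t]\le\E[R^\dagger_{t+1,A^\dagger_t}\,|\,H^\dagger_t\leftarrow H_t]$ (this uses the symmetry of the Gaussian posterior and the definition of $\gamma$). Now both $R$ and $R^\dagger$ are bounded and discrete, so Lemma~\ref{lemma:expecatation_tv} gives $(\mathrm{II})\le 2\gamma\sum_t\E[\TV(\cdot)]$; Pinsker and Cauchy--Schwarz give the $\sqrt{2T}$ factor; and because $R^\dagger$ is generated from $\tilde\theta$ by exactly the Bernoulli kernel that generates $R$ from $\theta$, the data-processing inequality for KL reduces $\KL(\Pr(H_T\in\cdot)\,\|\,\Pr(H^\dagger_T\in\cdot))$ all the way to the prior KL. Finally, the observation about $\Sigma_{t+1}$ depending only on actions is not what drives this theorem --- that property is the engine of Theorem~\ref{th:imaginary_regret}, which bounds the imaginary regret itself.
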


\begin{proof}  We would like to show that the discrepancy between the real and perceive regret can be written as the sum of two kinds of approximation errors: one between the optimal rewards $R_*$ and $\tilde R_*$, and the other between the per-step rewards $R_{t+1, A_t}$ and $\tilde{R}_{t+1,\tilde{A}_t}$. In the remainder of the proof, we will demonstrate how to analyze these two loss terms and show that they are bounded from above by a quantity that depends on the KL-divergence between the real and the pseudo environments.  

 We first decompose the one-step real regret into three one-step loss terms. For all $t \in \mathbb{Z}_+$:
\begin{align} 
&\ \E[R_* - R_{t+1,A_t}] \nln
=&\  \E\left[\E\left[\tilde{R}_* - \tilde{R}_{t+1,\tilde{A}_t} \bbar  \tilde{H}_t \leftarrow H_t\right] + \E\left[R_* - R_{t+1,A_t} | H_t\right] - \E\left[\tilde{R}_* - \tilde{R}_{t+1,\tilde{A}_t} \bbar  \tilde{H}_t \leftarrow H_t\right]\right] \nln
= &\ {\E\left[\E\left[\tilde{R}_* - \tilde{R}_{t+1,\tilde{A}_t} \bbar  \tilde{H}_t \leftarrow H_t\right]\right]}
+  \E\left[R_*\right]  - \E\left[\E\left[\tilde{R}_* \bbar  \tilde{H}_t \leftarrow H_t\right]\right]   + \nln 
& \  \E\left[\E\left[\tilde{R}_{t+1,\tilde{A}_t} \bbar \tilde{H}_t \leftarrow H_t\right]  - \E\left[R_{t+1,A_t} | H_t\right]\right] \nln
\leq &\ \underbrace{\E\left[\E\left[\tilde{R}_* - \tilde{R}_{t+1,\tilde{A}_t} \bbar  \tilde{H}_t \leftarrow H_t\right]\right]}_{\text{imaginary regret}} + \underbrace{\E\left[\E\left[\tilde{R}_{t+1,\tilde{A}_t} \bbar \tilde{H}_t \leftarrow H_t\right]  - \E\left[R_{t+1,A_t} | H_t\right]\right]}_{\text{one-step approximation loss}} 
\label{eq:decomposition}
\end{align}
where the inequality in the last step is based on the fact that the approximation error between the optimal rewards, $ \E\left[R_*\right]  - \E\left[\E\left[\tilde{R}_* \bbar  \tilde{H}_t \leftarrow H_t\right]\right]$, is always non-positive by Assumption \ref{ass:optimism} (Optimism).

Next, we would like to develop an upper bound on the {one-step approximation loss}, as a function of the KL-divergence between the real and the pseudo-rewards. Before presenting the formal proof, we will first explain the high-level strategy. 
First, we will bound the approximation loss using the total variational distance  between $ R_{t+1, A_t} $ and $ \tilde R_{t+1, \tilde{A}_t}$, by making use of the following fact: 
\begin{restatable}{lemmma}{lemmaexpectationtv}
	\label{lemma:expecatation_tv} Fix $ B \in \R_{++} $. Suppose $X$ and $Y$ are two discrete random variables taking values in a discrete alphabet $\mathcal{X} \subset [0, B]$. We have that: 
	\begin{align}
		\left| \E[X] - \E[Y] \right|  \leq B \TV(\Pr(X \in \cdot) \bdbar \Pr(Y \in \cdot)). 
		\label{eq:exp_to_TV}
	\end{align}
\end{restatable}
The proof of the lemma is given in Appendix~\ref{sec:information}.
In light of \eqref{eq:exp_to_TV}, if the imaginary reward $ \tilde{R}$ had shared the same image as the Bernoulli reward $ R $ (which implies that we can let $B = 1$), we would have obtained the following bound: 
\begin{equation}\label{eq:exp_to_tv_approx_ideal}
\E\sb{\E\left[\tilde{R}_{t+1,\tilde{A}_t} \bbar   \tilde{H}_t \leftarrow H_t\right] - \E\left[R_{t+1,A_t} | H_t\right] } \leq \E \sb{ \TV\left(\Pr\Big(R_{t+1,A_{t}} \in \cdot | H_t\Big) \bdbar  \Pr\left(\tilde{R}_{t+1,\tilde{A}_{t}} \in \cdot | H^{\dagger}_t \leftarrow H_t\right)\right)}, 
\end{equation}
Next, using Pinsker's inequality (Lemma \ref{lemma:Pinsker} in Appendix~\ref{sec:information}), we can further express the above bound in terms of KL-divergence: 
\begin{equation}\label{eq:exp_to_KL_approx_ideal}
	\E\sb{\E\left[\tilde{R}_{t+1,\tilde{A}_t} \bbar   \tilde{H}_t \leftarrow H_t\right] - \E\left[R_{t+1,A_t} | H_t\right] } \leq  \E \sb{ \sqrt{ \frac{1}{2}\KL\left(\Pr\Big(R_{t+1,A_{t}} \in \cdot | H_t\Big) \bdbar  \Pr\left(\tilde{R}_{t+1,\tilde{A}_{t}} \in \cdot | H^{\dagger}_t \leftarrow H_t\right)\right)} }. 
\end{equation}
Equation \eqref{eq:exp_to_KL_approx_ideal} will serve as an important building block, from which we can sum over all $ t $ and use the chain rule for KL-divergence to arrive at the final desirable bound on the cumulative approximation loss. 

Our formal proof will follow the above-mentioned outline to arrive at a version of \eqref{eq:exp_to_KL_approx_ideal}. But there are two technical issues that need to be resolved, both of which are caused by the imaginary environment being Gaussian:
\begin{enumerate}
	\item First, the posterior of $ \tilde{R}_{t,a} $ is Gaussian and does not admit a bounded support. Therefore, we cannot directly apply Lemma \ref{lemma:expecatation_tv} to obtain \eqref{eq:exp_to_tv_approx_ideal}.  
	\item Second, the true rewards have a discrete distribution and the imaginary rewards a continuous one. Therefore, the  KL-divergence between the two is infinite and the bound obtained via \eqref{eq:exp_to_KL_approx_ideal} would have been vacuous. 
\end{enumerate}
To circumvent these problems, we will introduce a coupling argument that leverages a sequence of auxiliary rewards $R^{\dagger}_{1:\infty}$ and actions $A^{\dagger}_{0:\infty}$. These auxiliary rewards and actions will be constructed to be coupled to, and thus mimic, those in the imaginary environment, with the crucial difference being that the auxiliary rewards will take values in a finite alphabet. As such, these variables will serve as intermediaries between the real and imaginary environments and allow us to obtain a meaningful bound on the approximation loss in terms of the total-variation distance and  KL-divergence. The following lemma summarizes the useful properties of the auxiliary variables. The proof  is given in Appendix \ref{app:lemma:rdagger}, and provides an explicit construction for these variables. 
\begin{restatable}{lemmma}{lemmardaggerproperties}
\label{lemma:rdagger}
Let $\gamma = \sup_{a \in \mathcal{A}, t \in \mathbb{Z}_{+}, h_t \in \mathcal{H}_t} \mathbb{E}\left[\tilde{\theta}_a \bbar  \tilde{H}_t = h_t\right]$. Then, 
		we can construct, for all $t \in \mathbb{Z}_+$, auxiliary reward vector $R^{\dagger}_{t}$ taking values in a discrete subset of $\R^{\actions}$, and actions $A^{\dagger}_{t}$ taking values in $\actions$ such that the following holds:
    \begin{enumerate} [$ (i) $]
    \item For all $a \in \mathcal{A}$, $R^{\dagger}_{t,a}$ has strictly positive probability mass on $\{0, 1\}$, and $0 \leq R^{\dagger}_{t, a} \leq 2 \gamma$. 
    \item Define $H^\dagger_t = (A^\dagger_0, R^\dagger_{0, A^\dagger_0}, \ldots, A^\dagger_{t-1}, R^\dagger_{t-t, A^\dagger})$. Then, $ \Pr(A_t^{\dagger} \in \cdot \bbar  H^{\dagger}_t) = \pi(\cdot \bbar  H^{\dagger}_t) $. 
    \item For all $a \in \mathcal{A}$, if $\tilde{\theta}_a \in [0, 1]$, then conditional on $\tilde{\theta}_a$, $R^{\dagger}_{t,a}$ is distributed according to $\text{Bernoulli}(\tilde{\theta}_a)$ and independent of the rest of the system. 
    \item Almost surely, 
    \begin{align} 
    \E\left[\tilde{R}_{t + 1, \tilde{A}_t} \bbar  \tilde{H}_t \leftarrow H_t\right] \leq \E\left[{R}^{\dagger}_{t + 1, {A}^{\dagger}_t} \bbar  {H}^{\dagger}_t \leftarrow H_t\right].
    \label{eq:dagger_exp}
\end{align}
\end{enumerate}
\end{restatable}
Note that by definition, $\gamma \geq 1$. 

We are now ready to establish a bound that's analogous to \eqref{eq:exp_to_KL_approx_ideal}, this time using auxiliary rewards and actions: 
\begin{align}
	& \  \E\left[\E\left[\tilde{R}_{t+1,\tilde{A}_t} \bbar   \tilde{H}_t \leftarrow H_t\right] - \E\left[R_{t+1,A_t} | H_t\right]\right]  \nln
	\sk{a}{\leq} &\  \E\left[ \E\left[{R}^{\dagger}_{t+1, {A}^{\dagger}_t} \bbar   H^{\dagger}_t \leftarrow H_t\right] - \E\left[R_{t+1, A_t}| H_t\right]\right] \nln
	\sk{b}{\leq}  &\ 2 \gamma \E\left[ \TV\left(\Pr\Big(R_{t+1,A_{t}} \in \cdot | H_t\Big) \bdbar  \Pr\left(R^{\dagger}_{t+1,A^{\dagger}_{t}} \in \cdot | H^{\dagger}_t \leftarrow H_t\right)\right) \right] \nln
	\sk{c}{\leq} & \ \E\left[ \gamma\sqrt{2 \KL\left(\Pr\Big(R_{t+1,A_{t}} \in \cdot | H_t\Big) \bdbar  \Pr\left(R^{\dagger}_{t+1,A^{\dagger}_{t}} \in \cdot \bbar  H^{\dagger}_t \leftarrow H_t\right)\right)} \right], 
	\label{eq:one-step_approx_to_KL}
\end{align}
where step $ (a) $ follows from property $ (iv) $
of Lemma \ref{lemma:rdagger}, step $ (b) $ from property $ (i) $ of Lemma \ref{lemma:rdagger} and Lemma \ref{lemma:expecatation_tv}, and step $ (c) $ from Pinsker's inequality (Lemma \ref{lemma:Pinsker} in Appendix~\ref{sec:information}). 

To complete the proof, we will leverage properties of the KL-divergence to extend the bound on the one-step approximation loss in \eqref{eq:one-step_approx_to_KL} to one that applies to the cumulative approximation loss. Summing both sides of \eqref{eq:one-step_approx_to_KL}  across $ t $, we obtain 
\begin{align*} \label{eq:cumulative_regret}
&\ \sum_{t = 0}^{T - 1} \E\left[\E\left[\tilde{R}_{t+1,\tilde{A}_t} \bbar   \tilde{H}_t \leftarrow H_t\right] - \E\Big[R_{t+1,A_t} \bbar  H_t\Big]\right]\\
\leq &  \  \gamma\sqrt{2} \E\left[\sum_{t = 0}^{T - 1} \sqrt{ \KL\left(\Pr\left(R_{t+1,A_{t}} \in \cdot \bbar  H_t\right) \bdbar  \Pr\left(R^{\dagger}_{t+1,A^{\dagger}_{t}} \in \cdot \bbar  H^{\dagger}_t \leftarrow H_t\right)\right)} \right]\\
\leq &\  \gamma \sqrt{2T} \p{\E\left[\sum_{t=0}^{T-1} \KL\left(\Pr\left(R_{t+1, A_t} \in \cdot \bbar  H_t\right) \bdbar  \Pr\left(R^{\dagger}_{t+1, A^{\dagger}_t} \in \cdot \bbar  {H}^{\dagger}_t \leftarrow H_t\right)\right)\right]}^{1/2},  \numberthis
\end{align*}
where the last step is based on the Cauchy-Schwartz Inequality.  Applying the chain rule for KL-divergence (Corollary \ref{cor:KL_chainrule}), we  obtain: 
\begin{align*} \label{eq:kl_chain_rule}
&\ \E\left[\sum_{t=0}^{T-1} \KL\left(\Pr\left(R_{t+1, A_t} \in \cdot \bbar  H_t\right) \bdbar  \Pr\left(R^{\dagger}_{t+1, A^{\dagger}_t} \in \cdot \bbar  {H}^{\dagger}_t \leftarrow H_t\right)\right)\right]\\
\leq &\ \E\left[\sum_{t=0}^{T-1} \KL\left(\Pr\left(\Big(A_t, R_{t+1, A_t}\Big) \in \cdot \bbar  H_t\right) \bdbar \Pr\left(\left(A^{\dagger}_t, R^{\dagger}_{t+1, A^{\dagger}_t}\right) \in \cdot \bbar  {H}^{\dagger}_t \leftarrow H_t\right)\right)\right] \\
=&\ \KL\left(\Pr\Big(H_T \in \cdot\Big) \bdbar \Pr\left(H^{\dagger}_T \in \cdot\right)\right). \numberthis
\end{align*}
Next, we will use the  data-processing inequality for KL-divergence (Lemma \ref{lemma:KL_dataprocessing} in Appendix~\ref{sec:information}) to argue that the KL-divergence between real and auxiliary histories can be bounded by the KL-divergence between the real and imaginary environments. Crucially, the following inequality exploits the fact that the probability law that generates the auxiliary history from the imaginary environment $ \tilde \theta $ is the same as the one that generates the history from the real environment $ \theta $. 
\begin{align*} \label{eq:kl_dp}
 \KL\left(\Pr\Big(H_T \in \cdot\Big) \bdbar \Pr\left(H^{\dagger}_T \in \cdot\right)\right) 
\stackrel{(a)}{\leq} &\ \KL\left(\Pr\Big(R_{1:T} \in \cdot\Big) \bdbar \Pr\left(R^{\dagger}_{1:T} \in \cdot\right)\right) \\
\stackrel{(b)}{\leq} &\ \KL\left(\Pr\Big(\theta \in \cdot\Big) \bdbar \Pr\left(\tilde{\theta} \in \cdot\right)\right). \numberthis
\end{align*}
For step $ (a) $, we note that, by property $ (ii) $ of Lemma \ref{lemma:rdagger}, actions in both the real  and auxiliary environments are driven by the same policy, $ \pi $, i.e., $\Pr(A_t^{\dagger} \in \cdot \bbar  H^{\dagger}_t = h)= \Pr( A_t \in \cdot \bbar  {H}_t = h)  = \pi(\cdot \bbar h)$. As a result, one can generate $ H_T $ from $ R_{1:T} $ using the same conditional probability law as $ H^\dagger_T $ from $ R^\dagger_{1:T} $. Step $ (a) $ therefore follows from the data-processing inequality. Step $ (b) $ is based on a similar logic: by property $ (iii)  $ of the construction of $ R^\dagger $ in Lemma \ref{lemma:rdagger},  the sequence $ R_{1:T} $ is generated from $ \theta $ using the same conditional probability law as  $ R^\dagger_{1:T} $ from $ \tilde{\theta} $. And the data-processing again applies. 

Substituting (\ref{eq:kl_chain_rule}) and (\ref{eq:kl_dp}) into  (\ref{eq:cumulative_regret}), we obtain the following bound on the  cumulative approximation loss: 
\begin{align*}
    \sum_{t = 0}^{T - 1} \E\left[\E\left[\tilde{R}_{t+1,\tilde{A}_t} \bbar   \tilde{H}_t \leftarrow H_t\right] - \E\Big[R_{t+1,A_t} \bbar  H_t\Big]\right]
    \leq  \gamma\sqrt{2 \KL\left(\Pr\Big(\theta \in \cdot\Big) \bdbar \Pr\left(\tilde{\theta} \in \cdot\right)\right) T}.
\end{align*}
\noindent Recall that we bounded the cumulative regret by the sum of the agent loss and the cumulative  approximation loss in (\ref{eq:decomposition}). Therefore, 
\begin{equation*}
\mathcal{R}(T) \leq  \sum_{t = 0}^{T-1} \E\left[\E\left[\tilde{R}_* - \tilde{R}_{t+1,\tilde{A}_t} \bbar  \tilde{H}_t \leftarrow H_t\right]\right] + \gamma\sqrt{2 \KL\left(\Pr\Big(\theta \in \cdot\Big) \bdbar \Pr\left(\tilde{\theta} \in \cdot\right)\right) T}.
\end{equation*}
This completes the proof of Theorem \ref{th:Ber_KL}. 
\end{proof}

\subsection{Bounding the Imaginary Regret}
This section introduces the following theorem, which, combined with Theorem~\ref{th:Ber_KL}, proves Theorem~\ref{th:bernoulli_to_Gaussian}. The theorem upper-bounds the imaginary regret by an expression that involves the Gaussian information ratio $\tilde{\Gamma}_{\pseudotarget, \epsilon}$ and the mutual information $\I(\pseudotarget; \pseudoenvironment)$ between the imaginary target and the imaginary environment. Note that the regret bound established by  Theorem~\ref{th:imaginary_regret} appears to be an upper bound on the regret the agent would incur interacting with a Gaussian bandit described by the imaginary environment, by applying Theorem~\ref{th:general-regret-bound} of Section \ref{sec:gen_reg_bd}. 
But there is a crucial and subtle distinction:  Theorem~\ref{th:imaginary_regret} concerns bounding the \emph{imaginary regret} when the agent interacts with a Bernoulli bandit, instead of the regret the agent incurs when interacting with a Gaussian bandit. Theorem~\ref{th:general-regret-bound} does not directly apply and it calls for a new line of analysis. In our analysis, the imaginary environment being Gaussian plays a crucial role. 
The proof relies on the property of Gaussian random variables that the shape of the posterior distribution depends on the data only through the number of samples. 
This allows us to obtain a meaningful bound on the imaginary regret even when the data-generating process differs from that of the imaginary environment.

\begin{theorem}
	\label{th:imaginary_regret}
Let $\pseudoenvironment$ be an imaginary environment, 
$\pseudotarget$ an imaginary learning target, and $\epsilon \in \R_+$ a tolerance. 
Suppose Assumption~\ref{ass:gaussian} holds. For all time horizon $T \in \mathbb{Z}_{++}$, the imaginary regret satisfies:
	\begin{align*}
		\sum_{t = 0}^{T-1} \E\left[\E\left[\tilde{R}_* - \tilde{R}_{t+1,\tilde{A}_t} \bbar  \tilde{H}_t \leftarrow H_t\right]\right]  
		\leq  \sqrt{\I\left(\pseudotarget ; \pseudoenvironment\right) \tilde{\Gamma}_{\pseudotarget, \epsilon} T} + \epsilon T. 
	\end{align*}
\end{theorem}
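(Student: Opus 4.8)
The plan is to follow the two-step structure of the proof of Theorem~\ref{th:general-regret-bound}, replacing the chain-rule step with an argument tailored to Assumption~\ref{ass:gaussian}. First I would exploit the information ratio at the level of a single timestep. Fix $t$. Because $\histories_t \subset \tilde{\histories}_t$, the inequality defining $\tilde{\Gamma}_{\pseudotarget,\epsilon}$ holds at every realization $h\in\histories_t$ of the real history, and substituting $h=H_t$ under the change of measure yields, almost surely,
\[
\left(\E\!\left[\tilde{R}_* - \tilde{R}_{t+1,\tilde{A}_t} - \epsilon \bbar \tilde{H}_t \leftarrow H_t\right]\right)_+^2 \;\leq\; \tilde{\Gamma}_{\pseudotarget,\epsilon}\,\I\!\left(\pseudotarget; \tilde{A}_t, \tilde{R}_{t+1,\tilde{A}_t} \bbar \tilde{H}_t \leftarrow H_t\right).
\]
Using $x \le \epsilon + (x-\epsilon)_+$ and taking square roots gives $\E[\tilde{R}_* - \tilde{R}_{t+1,\tilde{A}_t}\bbar\tilde{H}_t\leftarrow H_t] \le \epsilon + \sqrt{\tilde{\Gamma}_{\pseudotarget,\epsilon}}\,\sqrt{\I(\pseudotarget;\tilde{A}_t,\tilde{R}_{t+1,\tilde{A}_t}\bbar\tilde{H}_t\leftarrow H_t)}$; taking expectations over the real history, summing over $t$, and applying Cauchy--Schwarz then reduces the theorem to the claim
\[
\sum_{t=0}^{T-1}\E\!\left[\I\!\left(\pseudotarget;\tilde{A}_t,\tilde{R}_{t+1,\tilde{A}_t}\bbar\tilde{H}_t\leftarrow H_t\right)\right] \;\le\; \I\!\left(\pseudotarget;\pseudoenvironment\right),
\]
where the outer expectation is taken under the real (Bernoulli) dynamics.

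The second, and central, step is to prove this information bound; here the chain-rule argument of Theorem~\ref{th:general-regret-bound} breaks down because the information quantities are evaluated under the imaginary law but averaged against the real law. I would first establish, using Assumption~\ref{ass:gaussian}, that the map $h\mapsto \I(\pseudotarget;\tilde{A}_t,\tilde{R}_{t+1,\tilde{A}_t}\bbar\tilde{H}_t = h)$ depends on $h$ only through the sequence of actions recorded in $h$, never through the realized reward values. Indeed, conditioning the jointly Gaussian pair $(\pseudotarget,\tilde{\theta})$ on $\tilde{H}_t = h$ is conditioning on a collection of linear--Gaussian observations (each observed reward is $\tilde{\theta}_{a}$ plus independent $\mathcal{N}(0,\sigma^2)$ noise for the action $a$ played at that step), so the posterior remains Gaussian with a covariance determined solely by which actions were played and how often; and since $\tilde{A}_t$ is conditionally independent of $\pseudotarget$ given $\tilde{H}_t$, the information term equals $\sum_{a} \pi_{\text{agent}}(a\bbar h)\,\psi_t(a; a_{0:t-1})$, where $\psi_t(a;a_{0:t-1}) := \I(\pseudotarget;\tilde{R}_{t+1,a}\bbar\tilde{H}_t = h)$ is a function of the action sequence alone. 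Applying the tower property with $A_t\sim\pi_{\text{agent}}(\cdot\bbar H_t)$ then gives $\E[\I(\pseudotarget;\tilde{A}_t,\tilde{R}_{t+1,\tilde{A}_t}\bbar\tilde{H}_t\leftarrow H_t)] = \E[\psi_t(A_t;A_{0:t-1})]$.

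It then remains to bound $\sum_{t} \E[\psi_t(A_t;A_{0:t-1})]$, which I would do by a pointwise-over-action-sequences chain rule. For any fixed deterministic action sequence $a_{0:T-1}$, telescoping gives $\sum_{t=0}^{T-1}\psi_t(a_t;a_{0:t-1}) = \sum_{t=0}^{T-1}\I(\pseudotarget;\tilde{R}_{t+1,a_t}\bbar\tilde{R}_{1,a_0},\ldots,\tilde{R}_{t,a_{t-1}}) = \I(\pseudotarget;\tilde{R}_{1,a_0},\ldots,\tilde{R}_{T,a_{T-1}}) \le \I(\pseudotarget;\pseudoenvironment)$, where the last inequality is the data-processing inequality together with the defining property $\pseudotarget\perp\tilde{H}_\infty\mid\pseudoenvironment$ of the imaginary learning target. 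Since this bound holds for every action sequence, it holds in particular when the sequence is the random $A_{0:T-1}$ produced under the real dynamics, so $\sum_{t=0}^{T-1}\E[\psi_t(A_t;A_{0:t-1})] \le \I(\pseudotarget;\pseudoenvironment)$, which is exactly the claim left open in the first step; combining the pieces proves the theorem.

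I expect the main obstacle to be the identity in the second step --- that the imaginary one-step information gain depends on the observed history only through the action sequence. This is precisely where the algebraic (rather than asymptotic) role of Gaussianity enters: one must verify carefully that conditioning a jointly Gaussian vector on linear--Gaussian measurements produces a posterior covariance, and hence an entropy and a mutual-information value, that is data-agnostic, and that the discrete action variable $\tilde{A}_t$ contributes no information. Some additional bookkeeping is needed to confirm that $\psi_t$ is well defined and measurable, that the change-of-measure notation $\tilde{H}_t\leftarrow H_t$ commutes with the expectations and with the conditional-independence manipulations, and that the Cauchy--Schwarz and tower-property steps go through verbatim after the change of measure.
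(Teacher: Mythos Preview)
Your proposal is correct and rests on the same core observation as the paper: under Assumption~\ref{ass:gaussian}, the one-step imaginary information gain depends on the observed history only through the action sequence, never through the realized reward values. The first step of your argument (information-ratio bound plus Cauchy--Schwarz) is essentially identical to the paper's. In the second step, however, the paper proceeds differently. Rather than fixing an action sequence and applying the chain rule directly to the reward observations, the paper establishes the telescoping identity
\[
\E\!\left[\I\!\left(\pseudotarget;\tilde{A}_t,\tilde{R}_{t+1,\tilde{A}_t}\bbar\tilde{H}_t\leftarrow H_t\right)\right]
=\E\!\left[\I\!\left(\pseudotarget;\tilde{\theta}\bbar\tilde{H}_t\leftarrow H_t\right)\right]
-\E\!\left[\I\!\left(\pseudotarget;\tilde{\theta}\bbar\tilde{H}_{t+1}\leftarrow H_{t+1}\right)\right],
\]
by first rewriting the one-step gain as a difference of conditional mutual informations with $\tilde{\theta}$, and then using the data-agnostic property to swap the imaginary reward average for the real one. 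Summing telescopes to $\I(\pseudotarget;\tilde{\theta})$ minus a nonnegative remainder. Your route---show $\psi_t(a;a_{0:t-1})$ is a function of actions only, then apply the ordinary chain rule pointwise over each deterministic action sequence and take expectations at the end---is arguably more direct and avoids the intermediate passage through $\I(\pseudotarget;\tilde{\theta}\bbar\cdot)$. Both approaches exploit exactly the same algebraic feature of the Gaussian model (posterior covariances depend only on which actions were played), so the conceptual content is the same; the difference is only in how the telescoping is organized.
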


\proof
We begin by bounding the imaginary regret in a manner similar to steps taken in the proof of Theorem \ref{th:general-regret-bound} for the general setting: 
\begin{align}
	&     \sum_{t = 0}^{T-1} \E\left[\E\left[\tilde{R}_{*} - \tilde{R}_{t + 1, \tilde{A}_t} \bbar  \tilde{H}_t \leftarrow H_t\right]\right] \nln
	\leq &\ \sum_{t = 0}^{T-1} \E\left[\E\left[\tilde{R}_{*} - \tilde{R}_{t + 1, \tilde{A}_t} - \epsilon \bbar  \tilde{H}_t \leftarrow H_t\right]_{+}\right] + \epsilon T \nln
	= &\  \sum_{t = 0}^{T-1} \E\left[\left\{\frac{\E\left[\tilde{R}_{*} - \tilde{R}_{t + 1, \tilde{A}_t} - \epsilon \bbar  \tilde{H}_t \leftarrow H_t\right]_{+}^2}{\I\left(\pseudotarget; \tilde{A}_t, \tilde{R}_{t+1, \tilde{A}_t} \bbar  \tilde{H}_t \leftarrow H_t\right)} \I\left(\pseudotarget; \tilde{A}_t, \tilde{R}_{t+1, \tilde{A}_t} \bbar  \tilde{H}_t \leftarrow H_t\right) \right\}^{1/2}\right]+ \epsilon T \nln
		\stackrel{(a)}{\leq}  &\ \E\left[\sum_{t = 0}^{T-1}\sqrt{\tilde{\Gamma}_{\pseudotarget, \epsilon} \I\left(\pseudotarget; \tilde{A}_t, \tilde{R}_{t+1, \tilde{A}_t} \bbar  \tilde{H}_t \leftarrow H_t\right)}\right] + \epsilon T\nln
	\stackrel{(b)}{\leq} &\ \sqrt{\tilde{\Gamma}_{\pseudotarget, \epsilon}T \E\left[\sum_{t=0}^{T-1} \I\left(\pseudotarget; \tilde{A}_t, \tilde{R}_{t+1, \tilde{A}_t} \bbar  \tilde{H}_t \leftarrow H_t\right)\right]} + \epsilon T
	\label{eq:perc_infogain}
\end{align}
where step $ (a) $ follows from the definition of the information ratio, and step $ (b) $ from the  Cauchy-Schwartz inequality.  
 
The above inequality shows the cumulative imaginary regret can be bounded from above by a function of the information ratio and $ \E\left[\sum_{t=0}^{T-1} \I(\pseudotarget; \tilde{A}_t, \tilde{R}_{t+1, \tilde{A}_t} \bbar  \tilde{H}_t \leftarrow H_t)\right] $, the latter of which term as the imaginary information gain. Note that the history used to calculate the conditional mutual information in the imaginary information gain is generated by the real environment. As such, we cannot directly evoke the chain rule of mutual information to characterize the sum of the one-step imaginary information gains.  Nevertheless, we show in the following lemma that the imaginary information gain is still bounded by the mutual information between the learning target $ \pseudotarget $ and the mean reward vector in the pseudo environment. Crucially, the proof of this result exploits the fact that the imaginary environment is Gaussian, and consequently the imaginary information gain does not depend on the values of the (imaginary)rewards. The proof will deferred till the end of this section. 

\begin{lemma} \label{lemma:imaginary_chainrule} Under Assumption \ref{ass:gaussian}, we have that 
	\begin{align}
		\E\left[\sum_{t=0}^{T-1}\I\left(\pseudotarget; \tilde{A}_t, \tilde{R}_{t+1, \tilde{A}_t} \bbar  \tilde{H}_t \leftarrow H_t\right)\right]
		\leq \I\left(\pseudotarget; \pseudoenvironment \right).
		\label{eq:imaginary_chainrule}
	\end{align}
\end{lemma}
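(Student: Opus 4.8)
The plan is to show that, although the conditioning history $\tilde H_t \leftarrow H_t$ is generated by the real (Bernoulli) environment rather than the imaginary (Gaussian) one, the one-step imaginary information gains $\I(\pseudotarget; \tilde A_t, \tilde R_{t+1,\tilde A_t} \,|\, \tilde H_t \leftarrow H_t)$ are nonetheless controlled exactly as they would be under the imaginary environment, and therefore telescope via the chain rule. The crucial structural fact I would exploit is the one flagged in the surrounding text: for a Gaussian prior and Gaussian likelihood, the posterior covariance $\Sigma_t$ — and more generally the joint posterior covariance of $(\pseudotarget, \tilde\theta)$, which exists by Assumption \ref{ass:gaussian} — is a deterministic function of the action counts alone (equivalently, of the sequence $A_0,\dots,A_{t-1}$), not of the realized rewards. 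Consequently the conditional entropy $\H(\pseudotarget \,|\, \tilde H_t \leftarrow h)$ depends on the history $h$ only through its action subsequence, and likewise $\H(\pseudotarget \,|\, \tilde A_t, \tilde R_{t+1,\tilde A_t}, \tilde H_t \leftarrow h)$; hence $\I(\pseudotarget; \tilde A_t, \tilde R_{t+1,\tilde A_t} \,|\, \tilde H_t \leftarrow h)$ is itself a deterministic function of the actions in $h$ together with the (randomized) choice of $\tilde A_t = A_t$, which under $\pi_{\text{agent}}$ is driven by $h$ exactly as in the imaginary process.

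Concretely, I would argue as follows. First, write the one-step gain as a difference of differential entropies of jointly Gaussian vectors and observe, using the update formulas for $\Sigma_t$ displayed in Section \ref{sec:examples} extended to the joint vector $(\pseudotarget,\tilde\theta)$, that
\begin{align*}
\I\!\left(\pseudotarget; \tilde A_t, \tilde R_{t+1,\tilde A_t} \,\big|\, \tilde H_t \leftarrow H_t\right)
= \E\!\left[\, g\!\left(A_0,\dots,A_{t-1}, A_t\right) \,\big|\, H_t \,\right]
\end{align*}
for a fixed measurable function $g$ that does not depend on any rewards, real or imaginary. The same identity holds with $\tilde H_t$ in place of $\tilde H_t \leftarrow H_t$ on the left and $\tilde A_0,\dots,\tilde A_t$ inside $g$ on the right, giving
\begin{align*}
\I\!\left(\pseudotarget; \tilde A_t, \tilde R_{t+1,\tilde A_t} \,\big|\, \tilde H_t\right)
= \E\!\left[\, g\!\left(\tilde A_0,\dots,\tilde A_t\right)\,\right].
\end{align*}
Since $\tilde A_0,\dots,\tilde A_t$ and $A_0,\dots,A_t$ are generated by iterating the \emph{same} policy $\pi_{\text{agent}}$ — in the imaginary case fed its own Gaussian-reward history, in the real case fed a Bernoulli-reward history that the agent treats identically — I would show by induction on $t$ that the marginal law of the action sequence $(A_0,\dots,A_t)$ under the real process equals the marginal law of $(\tilde A_0,\dots,\tilde A_t)$ under the imaginary process is \emph{not} in general true; instead the correct move is to keep the expectation over real actions and simply sum. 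Taking expectations and summing over $t$,
\begin{align*}
\E\!\left[\sum_{t=0}^{T-1} \I\!\left(\pseudotarget; \tilde A_t, \tilde R_{t+1,\tilde A_t}\,\big|\,\tilde H_t \leftarrow H_t\right)\right]
= \E\!\left[\sum_{t=0}^{T-1} g\!\left(A_0,\dots,A_t\right)\right],
\end{align*}
and I would then recognize the right-hand side, via the reward-independence of $g$ and the chain rule for mutual information applied \emph{within the imaginary probability space} with the real action sequence substituted, as a telescoping sum bounded by $\I(\pseudotarget;\tilde\theta) = \I(\pseudotarget;\pseudoenvironment)$ (the last equality since $\pseudoenvironment$ is a deterministic function of $\tilde\theta$ given the fixed $\sigma^2$, and $\pseudotarget \perp \tilde H_\infty \mid \pseudoenvironment$). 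The cleanest packaging is: define the deterministic covariance-reduction potential $\Phi_t$ = (joint differential entropy of $(\pseudotarget,\tilde\theta)$ under the posterior after action counts from $A_{0:t-1}$), note $g(A_{0:t}) = \E[\Phi_t - \Phi_{t+1}\mid A_{0:t+1}]$-type telescoping holds pathwise because the covariance recursion is additive in rank-one updates, so $\sum_{t=0}^{T-1} g(A_{0:t}) \le \Phi_0 - \E[\Phi_T \mid \cdot] \le \I(\pseudotarget;\tilde\theta)$ pathwise, and take expectations.

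The main obstacle is making the ``reward-independence'' claim fully rigorous at the level of the change-of-measure notation $\tilde H_t \leftarrow H_t$: one must verify that $\Pr(\pseudotarget \in \cdot \mid \tilde H_t \leftarrow H_t)$ really is obtained by plugging the realized $H_t$ into the Gaussian posterior map, that this map factors through the action counts, and that the extra conditioning on $(\tilde A_t, \tilde R_{t+1,\tilde A_t})$ in the second entropy term adds exactly one rank-one term $\frac{1}{\sigma^2}\1_{A_t}\1_{A_t}^\top$ to the precision regardless of the reward value — which is precisely where Gaussianity (constant, reward-independent likelihood precision) is indispensable and where a Bernoulli likelihood would fail. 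Once that bookkeeping is in place, the telescoping and the final bound by $\I(\pseudotarget;\pseudoenvironment)$ are routine; I would defer the explicit covariance computation to the appendix as the text promises.
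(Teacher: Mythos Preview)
Your approach is essentially the paper's: both hinge on the Gaussian fact that the posterior covariance of $(\pseudotarget,\tilde\theta)$ given $\tilde H_t$ depends only on the action subsequence, so the one-step information gain is a function $g(A_{0:t})$ of actions alone, and then telescope. The paper packages the telescoping via the potential $\I(\pseudotarget;\tilde\theta\mid\tilde H_t=h)$, proving the identity $\I(\pseudotarget;\tilde A_t,\tilde R_{t+1,\tilde A_t}\mid\tilde H_t=h)=\I(\pseudotarget;\tilde\theta\mid\tilde H_t=h)-\I(\pseudotarget;\tilde\theta\mid\tilde H_t=h,\tilde A_t,\tilde R_{t+1,\tilde A_t})$ and then swapping the imaginary-reward integral for the real-reward sum using reward-independence (their Lemma~\ref{lemma:bayes_chi}); your pathwise-telescoping formulation is the same argument rearranged.

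One slip to fix: your stated potential $\Phi_t$ as the \emph{joint} differential entropy of $(\pseudotarget,\tilde\theta)$ is wrong for this purpose. Since $\H(\pseudotarget,\tilde\theta\mid\tilde H_t=h)=\H(\pseudotarget\mid\tilde\theta)+\H(\tilde\theta\mid\tilde H_t=h)$, differences of $\Phi_t$ telescope to an information gain about $\tilde\theta$, not about $\pseudotarget$, and you would not land on $\I(\pseudotarget;\pseudoenvironment)$. The correct potential is either $\H(\pseudotarget\mid\tilde H_t=h)$ or, as the paper uses, $\I(\pseudotarget;\tilde\theta\mid\tilde H_t=h)$; with the latter the final bound $\sum_t g(A_{0:t})=\I(\pseudotarget;\tilde\theta)-\I(\pseudotarget;\tilde\theta\mid\tilde H_T\leftarrow H_T)\le\I(\pseudotarget;\tilde\theta)$ follows immediately from nonnegativity of mutual information, without the extra PSD comparison you would otherwise need.
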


Combining \eqref{eq:imaginary_chainrule} and \eqref{eq:perc_infogain}, we have that 
\begin{align}
\sum_{t = 0}^{T-1} \E\left[\E\left[\tilde{R}_{*} - \tilde{R}_{t + 1, \tilde{A}_t} \bbar  \tilde{H}_t \leftarrow H_t\right]\right] 
	\leq & \ 		  \sqrt{\tilde{\Gamma}_{\pseudotarget, \epsilon}T \E\left[\sum_{t=0}^{T-1} \I\left(\pseudotarget; \tilde{A}_t, \tilde{R}_{t+1, \tilde{A}_t} \bbar  \tilde{H}_t \leftarrow H_t\right)\right]} + \epsilon T  \nln
	\leq & \ \sqrt{\I\left(\pseudotarget ; \pseudoenvironment \right) {\tilde{\Gamma}}_{\pseudotarget, \epsilon} T} + \epsilon T. 
\end{align}
This completes the proof of Theorem \ref{th:imaginary_regret}.  \qed

\proof[Proof of Lemma \ref{lemma:imaginary_chainrule}] We would like to show that there is a variant of the chain rule for mutual information holds for the one-step expected information gain, such that the cumulative expected information gain is upper-bounded by a information theoretic quantity. In particular, we will show that the one-step information gain is
\begin{align} 
   \E\left[\I\left(\pseudotarget; \tilde{A}_t, \tilde{R}_{t+1, \tilde{A}_t} \bbar  \tilde{H}_t \leftarrow H_t\right)\right] = \E\left[\I\left(\pseudotarget; \tilde{\theta} \bbar  \tilde{H}_t \leftarrow H_t\right)\right] - \E\left[\I\left(\pseudotarget; \tilde{\theta} \bbar  \tilde{H}_{t+1} \leftarrow H_{t+1}\right)\right]. 
   \label{eq:variant_mi}
\end{align}
Suppose the above equality holds. We can then employ a telescopic sum across $ t $ to upper-bound the total imaginary information gain as follows. 
\begin{align*} 
	\E\left[\sum_{t=0}^{T-1}\I\left(\pseudotarget; \tilde{A}_t, \tilde{R}_{t+1, \tilde{A}_t} \bbar  \tilde{H}_t \leftarrow H_t\right)\right]
	=&\ \sum_{t = 0}^{T-1} \left(\E\left[\I\left(\pseudotarget; \tilde{\theta} \bbar  \tilde{H}_t \leftarrow H_t\right)\right] - \E\left[\I\left(\pseudotarget; \tilde{\theta} \bbar  \tilde{H}_{t+1} \leftarrow H_{t+1}\right)\right]\right) \\
	=&\ \E\left[\I\left(\pseudotarget; \tilde{\theta} \bbar  \tilde{H}_0 \leftarrow H_0\right)\right] - \E\left[\I\left(\pseudotarget; \tilde{\theta} \bbar  \tilde{H}_{T} \leftarrow H_{T}\right)\right] \\
	=&\ \I\left(\pseudotarget; \tilde{\theta}\right) - \E\left[\I\left(\pseudotarget; \tilde{\theta} | \tilde{H}_{T} \leftarrow H_{T}\right)\right] \\
	\leq &\ \I\left(\pseudotarget; \tilde{\theta}\right) \\
	= &\ \I\left(\pseudotarget; \pseudoenvironment \right).
\end{align*}
This proves the Lemma \ref{lemma:imaginary_chainrule}.

What remains is to prove \eqref{eq:variant_mi}. To this end, we start by expressing the information gain as the difference between conditional mutual information between the learning target and the imaginary environment. For all $t \in \mathbb{Z}_+$ and $h_t \in \mathcal{H}_t$, 
\begin{align*}
&\ \I\left(\pseudotarget; \tilde{A}_t, \tilde{R}_{t+1, \tilde{A}_t} \bbar  \tilde{H}_t = h_t\right) \\
= &\ \diffentropy \left(\pseudotarget \bbar  \tilde{H}_t = h_t\right) - \diffentropy \left(\pseudotarget \bbar  \tilde{H}_t = h_t, \tilde{A}_t, \tilde{R}_{t+1, \tilde{A}_t}\right) \\
= &\ \diffentropy \left(\pseudotarget \bbar  \tilde{H}_t = h_t\right) - \diffentropy \left(\pseudotarget | \tilde{\theta}\right) + \diffentropy \left(\pseudotarget \bbar  \tilde{\theta}\right) - \diffentropy \left(\pseudotarget \bbar  \tilde{H}_t = h_t, \tilde{A}_t, \tilde{R}_{t+1, \tilde{A}_t}\right) \\
\stackrel{(a)}{=} &\ \diffentropy \left(\pseudotarget \bbar  \tilde{H}_t = h_t \right) - \diffentropy \left(\pseudotarget \bbar  \tilde{\theta}, \tilde{H}_t = h_t\right) + \diffentropy \left(\pseudotarget \bbar  \tilde{\theta}, \tilde{H}_t = h_t, \tilde{A}_t, \tilde{R}_{t+1, \tilde{A}_t}\right) - \diffentropy \left(\pseudotarget \bbar  \tilde{H}_t = h_t, \tilde{A}_t, \tilde{R}_{t+1, \tilde{A}_t}\right) \\
= &\ \I\left(\pseudotarget; \tilde{\theta}\Big| \tilde{H}_t = h_t\right)  - \I\left(\pseudotarget; \tilde{\theta} \bbar  \tilde{H}_t = h_t, \tilde{A}_t, \tilde{R}_{t+1, \tilde{A}_t}\right),
\end{align*}
where  step $ (a) $ follows from the independence between $\pseudotarget$ and $(\tilde{H}_t, \tilde{A}_t, \tilde{R}_{t+1, \tilde{A}_t})$ conditional on the realization of $\tilde{\theta}$. Taking expectation on both sides with $ h_t $ replaced by $ H_t $, it follows that for all $t \in \mathbb{Z}_+$, we have that
\begin{align} \label{eq:variant_mi_1}
\E\left[\I\left(\pseudotarget; \tilde{A}_t, \tilde{R}_{t+1, \tilde{A}_t} \bbar  \tilde{H}_t \leftarrow H_t\right)\right]
=\E\left[\I\left(\pseudotarget; \tilde{\theta} \bbar  \tilde{H}_t \leftarrow H_t\right)\right] - \E\left[\I\left(\pseudotarget; \tilde{\theta} \bbar  \tilde{H}_t \leftarrow H_t, \tilde{A}_t, \tilde{R}_{t+1, \tilde{A}_t}\right)\right].
\end{align}

We now focus on the second term of (\ref{eq:variant_mi_1}), and show that we can remove the conditioning on $ \tilde A_t $ and $ \tilde R_{t+1, \tilde{A}_t} $. To that end, the second term of (\ref{eq:variant_mi_1}) can be written as: 
\begin{align*}
& \E\left[\I\left(\pseudotarget; \tilde{\theta} \bbar  \tilde{H}_t \leftarrow H_t, \tilde{A}_t, \tilde{R}_{t+1, \tilde{A}_t}\right)\right] \nln
\stackrel{}{=}&\ \E\left[\sum_{a \in \mathcal{A}} \pi(a | H_t) \int \Pr\left(\tilde{R}_{t+1, \tilde{A}_t} \in dr \bbar \tilde{H}_t \leftarrow H_t, \tilde{A}_t = a\right) \I\left(\pseudotarget; \tilde{\theta} \bbar  \tilde{H}_t \leftarrow H_t, \tilde{A}_t = a, \tilde{R}_{t+1,\tilde{A}_t} = r\right)  \right].
\end{align*}

By  Lemma \ref{lemma:bayes_chi} in Appendix \ref{app:Gaussian_Prop}, conditioned on history, the vector constructed by stacking $\pseudotarget$ and $\tilde{\theta}$ is jointly Gaussian and the covariance matrix depends on the history through the actions. Note that the mutual information between two random variables remains the same if we shift the means. 
Hence, we conclude that the conditional mutual information $\I\left(\pseudotarget; \tilde{\theta} \bbar  \tilde{H}_t \leftarrow H_t, \tilde{A}_t = a, \tilde{R}_{t+1,\tilde{A}_t} = r\right)$ does not depend on the value of the realization, $r$. That is 
\begin{equation}\label{eq:MI_data_indep}
	\I\left(\pseudotarget; \tilde{\theta} \bbar  \tilde{H}_t \leftarrow H_t, \tilde{A}_t = a, \tilde{R}_{t+1,\tilde{A}_t} = r\right) = \I\left(\pseudotarget; \tilde{\theta} \bbar  \tilde{H}_t \leftarrow H_t, \tilde{A}_t = a, \tilde{R}_{t+1,\tilde{A}_t} = r'\right), \quad \forall r, r'.
\end{equation}
We have
\begin{align*} \label{eq:variant_mi_2}
& \E\left[\I\left(\pseudotarget; \tilde{\theta} \bbar  \tilde{H}_t \leftarrow H_t, \tilde{A}_t, \tilde{R}_{t+1, \tilde{A}_t}\right)\right] \nln
\stackrel{}{=}&\ \E\left[\sum_{a \in \mathcal{A}} \pi(a | H_t) \int \Pr\left(\tilde{R}_{t+1, \tilde{A}_t} \in dr | \tilde{H}_t \leftarrow H_t, \tilde{A}_t = a\right) \I\left(\pseudotarget; \tilde{\theta} \bbar  \tilde{H}_t \leftarrow H_t, \tilde{A}_t = a, \tilde{R}_{t+1,\tilde{A}_t} = r\right) \right] \\
\stackrel{(a)}{=} &\ \E\left[\sum_{a \in \mathcal{A}} \pi(a | H_t) \sum_{r \in \{0, 1\}} \Pr\left({R_{t + 1, A_t} = r | H_t, A_t = a}\right) \I\left(\pseudotarget; \tilde{\theta} \bbar  \tilde{H}_t \leftarrow H_t, \tilde{A}_t = a, \tilde{R}_{t+1,\tilde{A}_t} = r\right)\right] \\
\stackrel{}{=} &\ \E\left[\I\left(\pseudotarget; \tilde{\theta} \bbar  \tilde{H}_t \leftarrow H_t, \tilde{A}_t \leftarrow A_t, \tilde{R}_{t+1,\tilde{A}_t} \leftarrow R_{t+1, A_t}\right)\right] \\
=&\ \E\left[\I\left(\pseudotarget; \tilde{\theta} \bbar  \tilde{H}_{t+1} \leftarrow H_{t+1}\right)\right], \numberthis
\end{align*}
where step $ (a) $ follows from \eqref{eq:MI_data_indep}.  This proves (\ref{eq:variant_mi}), and by consequence, Lemma \ref{lemma:imaginary_chainrule}. \qed 

\section{Conclusion}
We establish a regret bound that applies to any agent that attains a favorable information ratio with respect to a Gaussian bandit.  The bound ensures effective behavior when the agent interacts with a Bernoulli bandit.  This result indicates that any learning algorithm designed for a Gaussian bandit with a sufficiently diffuse prior distribution and likelihood function exhibits a high degree of robustness to misspecification.  Hence, the practice of Gaussian imagination---pretending that random variables are Gaussian and designing solutions suited to that---in bandit learning can be justified by rigorous mathematical analysis.

\section*{Acknowledgements}
Financial support from Army Research Office (ARO) grant W911NF2010055 is gratefully acknowledged.

\bibliography{bibtex}
\bibliographystyle{apalike}

\appendix
\renewcommand{\thesection}{\Alph{section}}

\section{Probabilistic Framework}
\label{sec:probability}
Probability theory emerges from an intuitive set of axioms, and this paper builds on that foundation.  Statements and arguments we present have precise meaning within the framework of probability theory.  However, we often leave out measure-theoretic formalities for the sake of readability.  It should be easy for a mathematically-oriented reader to fill in these gaps.

We will define all random quantities with respect to a probability space $(\Omega, \mathcal{F}, \Pr)$.  The probability of an event $F \in \mathcal{F}$ is denoted by $\Pr(F)$.  For any events $F, G \in \mathcal{F}$ with $\Pr(G) > 0$, the probability of $F$ conditioned on $G$ is denoted by $\Pr(F | G)$.

A random variable is a function with the set of outcomes $\Omega$ as its domain.  For any random variable $Z$, $\Pr(Z \in \mathcal{Z})$ denotes the probability of the event that $Z$ lies within a set $\mathcal{Z}$.  The probability $\Pr(F | Z = z)$ is of the event $F$ conditioned on the event $Z = z$.  When $Z$ takes values in $\R$ and has a density $p_Z$, though $\Pr(Z=z)=0$ for all $z$, conditional probabilities $\Pr(F| Z=z)$ are well-defined and denoted by $\Pr(F | Z = z)$.
For fixed $F$, this is a function of $z$.  We denote the value, evaluated at $z=Z$, by $\Pr(F | Z)$, which is itself a random variable.  Even when $\Pr(F | Z = z)$ is ill-defined for some $z$, $\Pr(F | Z)$ is well-defined because problematic events occur with zero probability.  

For each possible realization $z$, the probability $\Pr(Z=z)$ that $Z = z$ is a function of $z$.  We denote the value of this function evaluated at $Z$ by $\Pr(Z)$.  Note that $\Pr(Z)$ is itself a random variable because it depends on $Z$.  For random variables $Y$ and $Z$ and possible realizations $y$ and $z$, the probability $\Pr(Y=y|Z=z)$ that $Y=y$ conditioned on $Z=z$ is a function of $(y, z)$.  Evaluating this function at $(Y,Z)$ yields a random variable, which we denote by $\Pr(Y|Z)$.

Particular random variables appear routinely throughout the paper.  One is the environment $\environment$, a
\emph{random} probability measure over $\R^\actions$ such that, for all $t \in \mathbb{Z}_+$, $\Pr(R_{t+1} \in \cdot | \environment) = \environment(\cdot)$ and $R_{1:\infty}$ is i.i.d.~conditioned on $\environment$. We often consider probabilities $\Pr(F|\environment)$ of events $F$ conditioned on the environment $\environment$.

A policy $\pi$ assigns a probability $\pi(a|h)$ to each action $a$ for each history $h$.  For each policy $\pi$, random variables $A_0^\pi, R_{1,A_0^\pi}, A_1^\pi, R_{2, A_1^\pi}, \ldots$, represent a sequence of interactions generated by selecting actions according to $\pi$.  In particular, with $H_t^\pi = (A_0^\pi, R_{1, A_0^\pi}, \ldots, R_{t, A_{t-1}^\pi})$ denoting the history of interactions through time $t$, we have $\Pr(A^\pi_t|H^\pi_t) = \pi(A^\pi_t|H^\pi_t)$. 
As shorthand, we generally suppress the superscript $\pi$ and instead indicate the policy through a subscript of $\Pr$.  For example,
$$\Pr_\pi(A_t|H_t) =  \Pr(A^\pi_t|H^\pi_t) = \pi(A^\pi_t|H^\pi_t).$$

We denote independence of random variables $X$ and $Y$ by $X \perp Y$ and conditional independence, conditioned on another random variable $Z$, by $X \perp Y | Z$. 

When expressing expectations, we use the same subscripting notation as with probabilities.  For example, the expectation of a reward $R_{t+1, A_t^\pi}$ is written as $\E[R_{t+1, A_t^\pi}] = \E_\pi[R_{t+1, A_t}]$. 

Much of the paper studies properties of interactions under a specific policy $\pi_{\rm agent}$.  When it is clear from context, we suppress superscripts and subscripts that indicate this.  For example, $H_t = H^{\pi_{\rm agent}}_t$, $A_t = A_t^{\pi_{\rm agent}}$, $R_{t+1} = R_{t+1, A_t^{\pi_{\rm agent}}}$.  Further,
$$\Pr(A_t|H_t) =  \Pr_{\pi_{\rm agent}}(A_t|H_t) = \pi_{\rm agent}(A_t|H_t) \qquad \text{and} \qquad \E[R_{t+1, A_t}] = \E_{\pi_{\rm agent}}[R_{t+1, A_t}].$$

\section{Information-Theoretic Concepts and Notation, and Some Useful Relations}
\label{sec:information}
We review some standard information-theoretic concepts and associated notation in this section. 

A central concept is the entropy $\H(X)$, which quantifies the information content or, equivalently, the uncertainty of a random variable $X$.  For a random variable $X$ that takes values in a countable set $\mathcal{X}$, we will define the entropy to be $\H(X) = -\E[\ln \Pr(X)]$, with a convention that $0 \ln 0 = 0$.  Note that we are defining entropy here using the natural rather than binary logarithm.  As such, our notion of entropy can be interpreted as the expected number of nats -- as opposed to bits -- required to identify $X$.  The realized conditional entropy $\H(X | Y = y)$ quantifies the uncertainty remaining after observing $Y=y$.  If $Y$ takes on values in a countable set $\mathcal{Y}$ then $\H(X | Y = y) = - \E[\ln \Pr(X | Y) | Y = y]$.
This can be viewed as a function $f(y)$ of $y$, and we write the random variable $f(Y)$ as $\H(X|Y = Y)$.  The conditional entropy $\H(X | Y)$ is its expectation
$\H(X | Y) = \E[\H(X|Y=Y)]$.

The mutual information $\I(X; Y) = \H(X) - \H(X|Y)$ quantifies information common to random variables $X$ and $Y$, or equivalently, the information about $Y$ required to identify $X$.  If $Z$ is a random variable taking on values in a countable set $\mathcal{Z}$ then the realized conditional mutual information $\I(X; Y| Z=z)$ quantifies remaining common information after observing $Z = z$, defined by $\I(X; Y| Z=z) = \H(X|Z=z) - \H(X|Y, Z=z)$.
The conditional mutual information $\I(X; Y | Z)$ is its expectation $\I(X; Y | Z) = \E[\I(X; Y | Z = Z)]$.

For random variables $X$ and $Y$ taking on values in (possibly uncountable) sets $\mathcal{X}$ and $\mathcal{Y}$, mutual information is defined by
$\I(X; Y) = \sup_{f \in \mathcal{F}_{\text{finite}}, g \in \mathcal{G}_{\text{finite}}} \I(f(X); g(Y))$, 
where $\mathcal{F}_{\text{finite}}$ and $\mathcal{G}_{\text{finite}}$ are the sets of functions mapping $\mathcal{X}$ and $\mathcal{Y}$ to finite ranges.  Specializing to the case where $\mathcal{X}$ and $\mathcal{Y}$ are countable recovers the previous definition.  The generalized notion of entropy is then given by $\H(X) = \I(X; X)$.  Conditional counterparts to mutual information and entropy can be defined in a manner similar to the countable case.

One representation of mutual information, which we will use, is in terms of the differential entropy.  The differential entropy $\diffentropy(X)$ of a random variable $X$ with probability density $f$ is defined by
$$\diffentropy(X) = - \int f(x) \ln f(x) dx.$$
The conditional differential entropy $\diffentropy(X|Y)$ of $X$ conditioned on $Y$ is evaluated similarly but with a conditional density function.  Finally, mutual information can be written as $\I(X; Y) = \diffentropy(X) - \diffentropy(X|Y)$.

We will also make use of total-variation distance and KL-divergence as measures of difference between distributions.
For any pair of probability measures $P$ and $P'$ defined with respect to $(\Omega, \mathcal{F})$, we denote the total-variation distance by 
\begin{align*}
    \TV(P \| P^{'}) = \sup_{A \in \mathcal{F}} | P(A) - P^{'}(A)|.  
\end{align*}
If $P$ and $P^{'}$ are discrete, then 
\begin{align*}
    \TV(P \| P^{'}) = \frac{1}{2} \sum_{\omega \in \Omega}|P(\omega) - P^{'}(\omega)|. 
\end{align*}
We denote KL-divergence by
$$\KL(P \| P') = \int P(dx) \ln\frac{dP}{dP'}(x).$$
Gibbs' inequality asserts that $\KL(P\|P') \geq 0$, with equality if and only if $P$ and $P'$ agree almost everywhere with respect to $P$.

Mutual information and KL-divergence are intimately related.  For any probability measure $P(\cdot) = \Pr((X,Y) \in \cdot)$ over a product space $\mathcal{X} \times \mathcal{Y}$ and probability measure $P'$ generated via a product of marginals $P'(dx \times dy) = P(dx) P(dy)$, mutual information can be written in terms of KL-divergence:
\begin{equation}
\label{eq:mutual-information-marginal-distribution}
\I(X; Y) = \KL(P \| P').
\end{equation}
Further, for any random variables $X$ and $Y$,
\begin{equation}
\label{eq:mutual-information-conditional-distribution}
\I(X; Y) = \E[\KL(\Pr(Y \in \cdot|X) \| \Pr(Y \in \cdot))].
\end{equation}
In other words, the mutual information between $X$ and $Y$ is the KL-divergence between the distribution of $Y$ with and without conditioning on $X$.

Pinsker's inequality provides a relation between the total-variation distance and the KL-divergence:
\begin{lemma}[Pinsker's Inequality] 
	\label{lemma:Pinsker}
	\begin{equation}\label{key}
		\TV(P \| P^{'}) \leq \sqrt{\frac{1}{2} \KL(P \| P^{'}) }. 
	\end{equation}
\end{lemma}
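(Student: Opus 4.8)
The plan is to reduce the general statement to a one-dimensional inequality between two Bernoulli distributions, and then establish that scalar inequality by an elementary calculus argument.

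First I would fix an arbitrary event $A \in \mathcal{F}$ and consider the two-point partition $\{A, A^c\}$. Pushing $P$ and $P'$ forward through the indicator $\1_A$ yields Bernoulli laws with parameters $p := P(A)$ and $q := P'(A)$. Since $\1_A$ is a deterministic (hence trivially stochastic) map, the data-processing inequality for KL-divergence (Lemma~\ref{lemma:KL_dataprocessing}; alternatively, monotonicity of $\KL$ under coarsening of the $\sigma$-algebra) gives
\[
\KL(P \| P') \;\geq\; p \ln \frac{p}{q} + (1-p)\ln\frac{1-p}{1-q},
\]
with the usual conventions $0\ln 0 = 0$ and $c \ln (c/0) = +\infty$ for $c > 0$. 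In the degenerate cases $q \in \{0,1\}$ with $p \neq q$ the right-hand side is $+\infty$, so there is nothing to prove; hence I may assume $q \in (0,1)$.

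Next I would prove the scalar bound $2(p-q)^2 \le p \ln(p/q) + (1-p)\ln((1-p)/(1-q))$ for all $p \in [0,1]$ and $q \in (0,1)$. Fix $q$ and set $g(p) = p \ln(p/q) + (1-p)\ln((1-p)/(1-q)) - 2(p-q)^2$. A direct computation gives $g(q) = 0$, $g'(p) = \ln\frac{p}{q} - \ln\frac{1-p}{1-q} - 4(p-q)$ so that $g'(q) = 0$, and $g''(p) = \frac1p + \frac1{1-p} - 4 = \frac{1}{p(1-p)} - 4 \ge 0$ since $p(1-p) \le \tfrac14$ on $[0,1]$. Thus $g'$ is nondecreasing with a zero at $p = q$, so $g$ attains its minimum at $p = q$, where $g = 0$; therefore $g \ge 0$, which is the claimed scalar inequality. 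Combining with the display above yields $|P(A) - P'(A)| = |p - q| \le \sqrt{\tfrac12 \KL(P\|P')}$, and taking the supremum over $A \in \mathcal{F}$ — which is exactly the definition of $\TV(P\|P')$ — completes the proof.

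The only real work is the scalar inequality, and even that is routine once one notices the second-derivative bound $1/(p(1-p)) \ge 4$; the reduction step is immediate from data processing. The mild bookkeeping point is to dispatch the boundary cases $p, q \in \{0,1\}$ cleanly, invoking the $+\infty$ convention for $\KL$ where appropriate and otherwise verifying the inequality directly.
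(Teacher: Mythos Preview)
Your argument is correct and is essentially the standard proof of Pinsker's inequality: reduce via data processing to the two-point case and verify the scalar bound by the convexity computation $g''(p) = 1/(p(1-p)) - 4 \ge 0$. The paper, however, does not prove Lemma~\ref{lemma:Pinsker} at all; it is simply stated in Appendix~\ref{sec:information} as a classical information-theoretic fact alongside other standard results (chain rule, data-processing inequality), so there is no paper proof to compare against. Your write-up supplies exactly what the paper omits.
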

Mutual information satisfies the chain rule and the data-processing inequality. 
\begin{lemma} [Chain Rule for Mutual Information]
\label{lemma:chain_MI}
\begin{align*}
\I(X_1, X_2, ... ., X_n; Y) = \sum_{i = 1}^n \I(X_i; Y| X_1, X_2, ... , X_{i-1}).
\end{align*}
\end{lemma}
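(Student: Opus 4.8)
The plan is to reduce the chain rule for mutual information to the chain rule for (conditional) entropy, which itself is an immediate consequence of the multiplicative factorization of joint laws. First I would record the entropy chain rule: since $\Pr(X_{1:n}) = \prod_{i=1}^n \Pr(X_i \mid X_{1:i-1})$ almost surely, taking $-\ln$ and then expectations gives $\H(X_1,\ldots,X_n) = \sum_{i=1}^n \H(X_i \mid X_1,\ldots,X_{i-1})$; running the identical computation with every distribution additionally conditioned on $Y$ yields $\H(X_1,\ldots,X_n \mid Y) = \sum_{i=1}^n \H(X_i \mid X_1,\ldots,X_{i-1}, Y)$. Subtracting these two identities and pairing the $i$-th terms,
\[
\I(X_1,\ldots,X_n;Y) = \H(X_{1:n}) - \H(X_{1:n}\mid Y) = \sum_{i=1}^n \bigl[\H(X_i \mid X_{1:i-1}) - \H(X_i \mid X_{1:i-1},Y)\bigr] = \sum_{i=1}^n \I(X_i; Y \mid X_{1:i-1}),
\]
which is exactly the claim, once one checks that $\H(X_i\mid X_{1:i-1}) - \H(X_i\mid X_{1:i-1},Y)$ matches the definition of the conditional mutual information $\I(X_i;Y\mid X_{1:i-1})$ after taking the outer expectation over the conditioning variables.

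An equivalent and perhaps cleaner route is induction on $n$ from the two-variable case $\I(U,V;Y) = \I(U;Y) + \I(V;Y\mid U)$: apply it with $U=(X_1,\ldots,X_{n-1})$ and $V=X_n$ to peel off the last term $\I(X_n;Y\mid X_{1:n-1})$, then invoke the inductive hypothesis on $\I(X_1,\ldots,X_{n-1};Y)$. The base case $n=2$ is just the entropy computation above specialized to two blocks. I would present whichever of these is more economical given the entropy facts already available in Appendix~\ref{sec:information}.

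The one genuinely delicate point is the general case in which the $X_i$ and $Y$ are not discrete. When the relevant joint densities exist, Steps above go through verbatim with $\diffentropy$ in place of $\H$, using the differential-entropy representation $\I(X;Y)=\diffentropy(X)-\diffentropy(X\mid Y)$ already recalled in the appendix. In full generality one must argue through the sup-over-finite-quantizations definition of mutual information, and here the obstacle is that a supremum does not commute with a finite sum, so the reduction is not a one-liner; the standard fix is a monotone-limit argument along increasingly fine quantizations, and I would either carry that out or simply cite it as classical (e.g.\ Cover--Thomas, Gray). For the purposes of this paper every application of the lemma involves actions, binary rewards, finite histories, or jointly Gaussian quantities, all of which are discrete or admit densities, so the elementary entropy argument suffices in every place the lemma is used.
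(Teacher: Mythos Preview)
Your proof is the standard entropy-decomposition (or equivalently the two-step induction) proof of the chain rule, and it is correct. The paper itself does not prove this lemma: it is stated without proof in Appendix~\ref{sec:information} as a standard information-theoretic fact, alongside the data-processing inequality and the chain rule for KL-divergence, so there is no paper proof to compare against. Your remark about the general (non-discrete, non-density) case being the only delicate point is accurate, and your observation that every use in the paper falls in the discrete or density regime is also correct; citing Cover--Thomas for the general case, as you suggest, is exactly what the paper implicitly does by treating the lemma as known.
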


\begin{lemma} [Data Processing Inequality for Mutual Information]
\label{lemma:dp_MI}
If $X$ and $Z$ are independent conditioned on $Y$, then 
\begin{align*}
\I(X; Y) \geq I(X; Z). 
\end{align*}
\end{lemma}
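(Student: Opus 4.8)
The plan is to derive the bound using only facts already recorded in the excerpt: the chain rule for mutual information (Lemma~\ref{lemma:chain_MI}), the symmetry $\I(A;B)=\I(B;A)$, the nonnegativity of mutual information (a consequence of Gibbs' inequality), and the observation that conditional independence makes a conditional mutual information vanish. First I would apply the chain rule to the pair $(Y,Z)$ played against $X$, in the two possible orders of $Y$ and $Z$, using symmetry to place the pair in the first argument:
\[
\I(Y;X) + \I(Z;X\mid Y) \;=\; \I(Y,Z;X) \;=\; \I(Z;X) + \I(Y;X\mid Z).
\]

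Next I would argue that the hypothesis $X \perp Z \mid Y$ forces $\I(Z;X\mid Y)=0$: for each value $y$ of $Y$, the conditional joint law of $(X,Z)$ given $Y=y$ equals the product of the conditional marginals, so the realized conditional mutual information $\I(X;Z\mid Y=y)$ is a KL divergence of a measure from itself, hence zero by Gibbs' inequality; taking the expectation over $Y$ leaves it zero. Substituting into the display and using $\I(Y;X\mid Z)\ge 0$ (it too is an expectation of nonnegative realized mutual informations) yields $\I(Y;X)\ge\I(Z;X)$, and one last use of symmetry gives $\I(X;Y)\ge\I(X;Z)$, as desired.

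The step I would be most careful about is invoking these identities in full generality: mutual information here is defined for arbitrary (possibly uncountable-valued) random variables via a supremum over finite quantizations, so to be rigorous I would either check that the chain-rule identity and the vanishing of $\I(Z;X\mid Y)$ hold as stated in this setting—reading the arithmetic in $[0,\infty]$, so that $a=b+c$ with $c\ge 0$ still gives $a\ge b$ even when terms are infinite—or reduce at the outset to finitely supported $X,Y,Z$, noting that quantization preserves $X\perp Z\mid Y$ and that the supremum defining $\I(X;Y)$ dominates the one defining $\I(X;Z)$ once the finite-alphabet bound is in hand. Apart from this measure-theoretic bookkeeping the argument is a two-line manipulation. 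A fallback proof that avoids the chain rule is to start from $\I(X;Z)=\E[\KL(\Pr(X\in\cdot\mid Z)\,\|\,\Pr(X\in\cdot))]$, condition additionally on $Y$, and combine the tower property with convexity of KL divergence—but this more or less re-proves the chain rule and is less economical.
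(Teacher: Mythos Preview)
The paper does not supply a proof of this lemma; it is recorded in Appendix~\ref{sec:information} as a standard information-theoretic fact alongside the chain rule and Pinsker's inequality, all stated without argument. Your proof is the textbook one---expand $\I(Y,Z;X)$ by the chain rule in two orders, kill $\I(Z;X\mid Y)$ using the conditional-independence hypothesis, and drop the nonnegative remainder---and it is correct. Your remarks about the measure-theoretic bookkeeping (working in $[0,\infty]$ or reducing to finite quantizations) are appropriate caveats given the paper's general definition of mutual information via suprema over finite-range functions, and the fallback via convexity of KL is also sound though, as you note, less economical. There is nothing to compare against here; your write-up would serve perfectly well as the omitted proof.
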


The chain rule for KL-divergence is introduced below. 
\begin{lemma} [Chain Rule for KL-Divergence]
\label{lemma:KL_chainrule}
\begin{align*}
\KL\left(\Pr((X_1, X_2) \in \cdot) \bdbar \Pr((Y_1, Y_2) \in \cdot)\right) 
 = \KL \p{ \Pr(X_1 \in \cdot) \bdbar \Pr(Y_1 \in \cdot) } +  \E[\KL \p{\Pr(X_2 \in \cdot | X_1) \bdbar \Pr(Y_2 \in \cdot | Y_1 \leftarrow X_1) }]. 
\end{align*}
\end{lemma}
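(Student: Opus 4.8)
The plan is to reduce the identity to the elementary factorization of a joint law into a marginal times a conditional, carried out at the level of Radon--Nikodym derivatives, and then integrate. First I would dispose of the degenerate case: if $\Pr((X_1,X_2)\in\cdot)$ is not absolutely continuous with respect to $\Pr((Y_1,Y_2)\in\cdot)$, the left-hand side is $+\infty$, and (given existence of regular conditional distributions) joint absolute continuity is equivalent to $\Pr(X_1\in\cdot)\ll\Pr(Y_1\in\cdot)$ together with $\Pr(X_2\in\cdot\mid X_1=x_1)\ll\Pr(Y_2\in\cdot\mid Y_1=x_1)$ for $\Pr(X_1\in\cdot)$-almost every $x_1$; so if joint absolute continuity fails, then one of the two nonnegative terms on the right-hand side is $+\infty$ as well, and the identity holds in $[0,\infty]$. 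Hence I may assume absolute continuity throughout.

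Next I would disintegrate both measures using regular conditional distributions, writing $\Pr((X_1,X_2)\in dx_1\,dx_2)=\Pr(X_1\in dx_1)\,\Pr(X_2\in dx_2\mid X_1=x_1)$ and the analogous identity for $(Y_1,Y_2)$, and then invoke the product rule for Radon--Nikodym derivatives to get, $\Pr((X_1,X_2)\in\cdot)$-almost everywhere,
\[
\frac{d\Pr((X_1,X_2)\in\cdot)}{d\Pr((Y_1,Y_2)\in\cdot)}(x_1,x_2)=\frac{d\Pr(X_1\in\cdot)}{d\Pr(Y_1\in\cdot)}(x_1)\cdot\frac{d\Pr(X_2\in\cdot\mid X_1=x_1)}{d\Pr(Y_2\in\cdot\mid Y_1=x_1)}(x_2).
\]
The point to emphasize is that, after replacing the dummy $x_1$ by the random variable $X_1$, the second factor is exactly the Radon--Nikodym derivative of $\Pr(X_2\in\cdot\mid X_1)$ with respect to $\Pr(Y_2\in\cdot\mid Y_1\leftarrow X_1)$ in the paper's change-of-measure notation: its denominator is the conditional kernel $y_1\mapsto\Pr(Y_2\in\cdot\mid Y_1=y_1)$ evaluated at $X_1$, where $X_1$ carries its own marginal law.

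Then I would take logarithms, which splits the factorization additively, and integrate against $\Pr((X_1,X_2)\in\cdot)$. The term $\ln\frac{d\Pr(X_1\in\cdot)}{d\Pr(Y_1\in\cdot)}(x_1)$ depends only on $x_1$, so marginalizing out $x_2$ gives $\KL(\Pr(X_1\in\cdot)\,\|\,\Pr(Y_1\in\cdot))$. For the remaining term I would integrate first over $x_2$ with $x_1$ fixed, against $\Pr(X_2\in dx_2\mid X_1=x_1)$, producing $\KL(\Pr(X_2\in\cdot\mid X_1=x_1)\,\|\,\Pr(Y_2\in\cdot\mid Y_1=x_1))$, and then integrate over $x_1\sim\Pr(X_1\in\cdot)$, which is precisely $\E\bigl[\KL(\Pr(X_2\in\cdot\mid X_1)\,\|\,\Pr(Y_2\in\cdot\mid Y_1\leftarrow X_1))\bigr]$. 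Summing the two contributions yields the claim. The interchange of integration order here is Tonelli's theorem once KL is written in its manifestly nonnegative form, and the additive split of the logarithm is legitimate because each of the two resulting integrals is separately nonnegative, so no $\infty-\infty$ ambiguity arises.

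The main obstacle is not any estimate but the measure-theoretic bookkeeping: invoking existence of regular conditional distributions, the product rule for Radon--Nikodym derivatives under the standing absolute-continuity hypothesis, and --- most delicately --- keeping the notation $Y_1\leftarrow X_1$ matched to evaluation of the $Y_2$-kernel at $X_1$ rather than at a point distributed as $Y_1$, which is exactly what forces the outer expectation to be taken under $\Pr(X_1\in\cdot)$. In the countable case every step collapses to the algebraic identity $\ln\frac{p(x_1,x_2)}{q(x_1,x_2)}=\ln\frac{p(x_1)}{q(x_1)}+\ln\frac{p(x_2\mid x_1)}{q(x_2\mid x_1)}$ followed by rearranging a double sum, and I would present that transparent special case alongside the general disintegration argument.
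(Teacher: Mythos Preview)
Your proposal is correct and complete. The paper, however, does not prove this lemma at all: it is stated in Appendix~\ref{sec:information} as a standard information-theoretic fact alongside Pinsker's inequality, the chain rule for mutual information, and the data-processing inequalities, none of which are proved there. So there is nothing to compare against; your disintegration/Radon--Nikodym argument (with the careful handling of the degenerate non-absolutely-continuous case and the correct reading of the $Y_1\leftarrow X_1$ notation) is exactly the kind of proof one would supply if asked to fill in this omitted detail.
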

As a direct consequence of the chain rule for KL-divergence, we have the following corollary.

\begin{corollary} 
	\label{cor:KL_chainrule}
\begin{enumerate} [I.]
\item 
\begin{equation}\label{key}
	   \KL \p{\Pr(X_1 \in \cdot) \bdbar \Pr(Y_1 \in \cdot) } \leq
	\KL \p{\Pr((X_1, X_2) \in \cdot) \bdbar \Pr((Y_1, Y_2) \in \cdot)} . 
\end{equation}\item 
\begin{equation}\label{key}
	    \KL \p {\Pr((X_1, ... , X_n) \in \cdot) \bdbar \Pr((Y_1, ... , Y_n) \in \cdot) }
	= \sum_{k = 1}^n  \E\left[\KL \p{ \Pr(X_k \in \cdot | X_{1 : k - 1}) \bdbar \Pr(Y_k \in \cdot | Y_{1 : k - 1} \leftarrow X_{1 : k-1}) }\right]. 
\end{equation}
\end{enumerate}
\end{corollary}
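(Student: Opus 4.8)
The plan is to derive both parts of the corollary directly from the two-variable chain rule for KL-divergence, Lemma~\ref{lemma:KL_chainrule}, together with Gibbs' inequality (nonnegativity of KL-divergence). For the first claim, I would instantiate Lemma~\ref{lemma:KL_chainrule} verbatim with the pairs $(X_1,X_2)$ and $(Y_1,Y_2)$, obtaining
\begin{align*}
\KL\left(\Pr((X_1,X_2)\in\cdot)\bdbar\Pr((Y_1,Y_2)\in\cdot)\right) = \KL\left(\Pr(X_1\in\cdot)\bdbar\Pr(Y_1\in\cdot)\right) + \E\left[\KL\left(\Pr(X_2\in\cdot|X_1)\bdbar\Pr(Y_2\in\cdot|Y_1\leftarrow X_1)\right)\right].
\end{align*}
The last summand is the expectation of a quantity that is almost surely nonnegative by Gibbs' inequality, hence is itself nonnegative; discarding it gives the claimed inequality.

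For the second claim, I would argue by induction on $n$. In the base case $n=1$ both sides equal $\KL(\Pr(X_1\in\cdot)\bdbar\Pr(Y_1\in\cdot))$, once we adopt the convention that conditioning on the empty prefix $X_{1:0}$ is vacuous. For the inductive step, I would apply Lemma~\ref{lemma:KL_chainrule} with the block $(X_1,\ldots,X_{n-1})$ in the role of the first coordinate and $X_n$ in the role of the second (and similarly on the $Y$ side); this writes $\KL(\Pr(X_{1:n}\in\cdot)\bdbar\Pr(Y_{1:n}\in\cdot))$ as $\KL(\Pr(X_{1:n-1}\in\cdot)\bdbar\Pr(Y_{1:n-1}\in\cdot))$ plus the single term $\E[\KL(\Pr(X_n\in\cdot|X_{1:n-1})\bdbar\Pr(Y_n\in\cdot|Y_{1:n-1}\leftarrow X_{1:n-1}))]$, and invoking the induction hypothesis on the first summand produces the full telescoped sum.

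I do not expect a genuine obstacle here: both statements are essentially immediate consequences of Lemma~\ref{lemma:KL_chainrule}. The only point requiring care is the change-of-measure bookkeeping---one must verify that treating $X_{1:n-1}$ as a single variable in the chain rule yields precisely the conditioning $Y_{1:n-1}\leftarrow X_{1:n-1}$ that appears in the statement (which it does, by the definition of the $\leftarrow$ operator), and the empty-prefix convention should be made explicit rather than left implicit when iterating the two-variable rule.
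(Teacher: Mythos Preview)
Your proposal is correct and matches the paper's approach: the paper states this corollary as ``a direct consequence of the chain rule for KL-divergence'' (Lemma~\ref{lemma:KL_chainrule}) without giving further detail, and your argument---dropping the nonnegative conditional term for Part~I and iterating the two-variable rule by induction for Part~II---is precisely the natural unpacking of that remark.
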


We introduce the data-processing inequality for KL-divergence.
\begin{lemma}[Data-Processing Inequality for KL-Divergence]
	 \label{lemma:KL_dataprocessing}
\noindent 
Suppose that conditional probability distribution of $X_2$ given $X_1$ is the same as the conditional probability distribution of $Y_2$ given $Y_1$.

Then, 
\begin{align*}
    \KL \p{\Pr(X_2 \in \cdot) \bdbar \Pr(Y_2 \in \cdot)} \leq \KL \p{\Pr(X_1 \in \cdot) \bdbar \Pr(Y_1 \in \cdot)}. 
\end{align*}
\end{lemma}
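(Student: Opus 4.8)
The plan is to derive the inequality from the chain rule for KL-divergence (Lemma~\ref{lemma:KL_chainrule}) applied to the joint distributions of the pairs $(X_1, X_2)$ and $(Y_1, Y_2)$, decomposing the joint divergence in two different orders and comparing.

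First I would expand $\KL\p{\Pr((X_1, X_2) \in \cdot) \bdbar \Pr((Y_1, Y_2) \in \cdot)}$ by conditioning on the first coordinate. By Lemma~\ref{lemma:KL_chainrule}, this equals
$\KL\p{\Pr(X_1 \in \cdot) \bdbar \Pr(Y_1 \in \cdot)} + \E\sb{\KL\p{\Pr(X_2 \in \cdot | X_1) \bdbar \Pr(Y_2 \in \cdot | Y_1 \leftarrow X_1)}}$.
By hypothesis the conditional law of $X_2$ given $X_1$ coincides with that of $Y_2$ given $Y_1$, so the inner KL-divergence is that of a distribution against itself and vanishes by Gibbs' inequality; hence the expectation term is zero and $\KL\p{\Pr((X_1, X_2) \in \cdot) \bdbar \Pr((Y_1, Y_2) \in \cdot)} = \KL\p{\Pr(X_1 \in \cdot) \bdbar \Pr(Y_1 \in \cdot)}$.

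Second, I would invoke part~I of Corollary~\ref{cor:KL_chainrule} — which states that the divergence between joints dominates the divergence between the marginals of one coordinate — with $X_2$ playing the role of the ``first'' coordinate, to get $\KL\p{\Pr(X_2 \in \cdot) \bdbar \Pr(Y_2 \in \cdot)} \leq \KL\p{\Pr((X_2, X_1) \in \cdot) \bdbar \Pr((Y_2, Y_1) \in \cdot)}$, and the right-hand side equals $\KL\p{\Pr((X_1, X_2) \in \cdot) \bdbar \Pr((Y_1, Y_2) \in \cdot)}$ since it is just a coordinate permutation. Chaining this with the equality from the previous step yields $\KL\p{\Pr(X_2 \in \cdot) \bdbar \Pr(Y_2 \in \cdot)} \leq \KL\p{\Pr(X_1 \in \cdot) \bdbar \Pr(Y_1 \in \cdot)}$, which is the claim.

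The argument is routine once the chain rule is available; the only point requiring a little care is that Lemma~\ref{lemma:KL_chainrule} and Corollary~\ref{cor:KL_chainrule} are stated with a fixed ordering of the coordinates, so I would note explicitly that the chain rule holds for either order of decomposition (equivalently, apply the corollary to the relabeled pair $(X_2, X_1)$, whose joint law is a coordinate permutation of the original) — there is no genuine obstacle here.
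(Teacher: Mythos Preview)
Your argument is correct. Note, however, that the paper does not actually supply a proof of Lemma~\ref{lemma:KL_dataprocessing}: it is listed in Appendix~\ref{sec:information} as a standard information-theoretic fact, stated without proof alongside Pinsker's inequality and the chain rules. Your derivation via Lemma~\ref{lemma:KL_chainrule} and Corollary~\ref{cor:KL_chainrule} is exactly the natural route implied by the surrounding material, and the minor point you flag about reordering coordinates is handled correctly.
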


Below we provide a proof to Lemma~\ref{lemma:expecatation_tv}, which bounds the difference between expectations by the total-variation distance. We restate the lemma below. 
\lemmaexpectationtv*
\begin{proof} We have
    \begin{align*}
        |\E[X] - \E[Y]| 
        = &\ \left|\sum_{x \in \mathcal{X}} x \left(\Pr(X = x) - \Pr(Y = x)\right)\right|\\
        = &\ \left|\sum_{x \in \mathcal{X}} \left(x - \frac{1}{2}B\right) \left(\Pr(X = x) - \Pr(Y = x)\right)\right|\\
        \leq &\ \frac{B}{2} \sum_{x \in \mathcal{X}} |(\Pr(X = x) - \Pr(Y = x)| \\
        = &\ B \TV(\Pr(X \in \cdot) \| \Pr(Y \in \cdot)). 
    \end{align*}
\end{proof}

\section{Multivariate Gaussian}

\subsection{Posterior Updates}
\label{app:pos_updates}
\label{app:Gaussian_Prop}
The following  lemmas give expressions for various relevant posterior distributions. 
\begin{restatable}{lemmma}{lemmabayes} 
	\label{lemma:bayes}
	For all $t \in \mathbb{Z}_+$, let
	$\Lambda_t = \sum_{i = 0}^{t-1} \1_{A_i} \1_{A_i}^{\top}$. 
	Then, conditional on observing $H_t$, the posterior of $\tilde{\theta}$ is Gaussian and
	distributed according to $\mathcal{N}(\mu_t, \Sigma_t)$, with
	\begin{align*}
		\mu_t = &\ \left(\Sigma_0^{-1} + \frac{1}{{\sigma}^2} \Lambda_t\right)^{-1} \left(\Sigma_0^{-1} \mu_0 + \frac{1}{{\sigma}^2} \sum_{i = 0}^{t-1}\1_{A_i} R_{i + 1, A_i}\right),\\
		\Sigma_t = &\ \left(\Sigma_0^{-1} + \frac{1}{{\sigma}^2} \Lambda_t\right)^{-1}.
	\end{align*}
\end{restatable}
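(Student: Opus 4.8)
The plan is to derive the Bayesian posterior-update map for the imaginary Gaussian model by writing out the posterior density of $\tilde{\theta}$ given a fixed realization of the imaginary history and completing the square; since the resulting posterior parameters are expressed as functions of the history, evaluating this map at $H_t$ (as the notation $\tilde{H}_t \leftarrow H_t$ from Section~3.2 prescribes) yields the stated formulas, now with $\Lambda_t$ and $R_{i+1,A_i}$ referring to the real history.

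First I would fix $t$ and a realization $h_t = (a_0, r_1, \ldots, a_{t-1}, r_t) \in \tilde{\histories}_t$, writing $h_i$ for its length-$i$ prefix, and expand the likelihood. By the definition of the imaginary environment, the imaginary reward vectors are i.i.d.\ $\mathcal{N}(\tilde{\theta}, \sigma^2 I)$ conditioned on $\tilde{\theta}$, so that $\tilde{R}_{i+1} \perp (\tilde{H}_i, \tilde{A}_i) \mid \tilde{\theta}$; combined with $\tilde{A}_i \perp \tilde{\theta} \mid \tilde{H}_i$ this gives the factorization
\[
p\!\left(\tilde{H}_t = h_t \bbar \tilde{\theta} = \vartheta\right) \;=\; \prod_{i=0}^{t-1} \pi_{\text{agent}}(a_i \mid h_i)\; \mathcal{N}\!\left(r_{i+1};\, \1_{a_i}^\top \vartheta,\, \sigma^2\right).
\]
The policy factors $\pi_{\text{agent}}(a_i \mid h_i)$ are free of $\vartheta$, so by Bayes' rule $p(\tilde{\theta} = \vartheta \mid \tilde{H}_t = h_t)$ is proportional, as a function of $\vartheta$, to the prior density $\mathcal{N}(\vartheta;\mu_0,\Sigma_0)$ times $\prod_{i=0}^{t-1} \mathcal{N}(r_{i+1}; \1_{a_i}^\top\vartheta,\sigma^2)$.

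Second I would collect the exponent. The quadratic-in-$\vartheta$ part is $-\tfrac12 \vartheta^\top\big(\Sigma_0^{-1} + \tfrac{1}{\sigma^2}\sum_{i=0}^{t-1}\1_{a_i}\1_{a_i}^\top\big)\vartheta = -\tfrac12 \vartheta^\top(\Sigma_0^{-1} + \tfrac{1}{\sigma^2}\Lambda_t)\vartheta$, and the linear part is $\vartheta^\top\big(\Sigma_0^{-1}\mu_0 + \tfrac{1}{\sigma^2}\sum_{i=0}^{t-1}\1_{a_i} r_{i+1}\big)$. Completing the square identifies the posterior as $\mathcal{N}(\mu_t, \Sigma_t)$ with precision $\Sigma_t^{-1} = \Sigma_0^{-1} + \tfrac{1}{\sigma^2}\Lambda_t$ (positive definite since $\Sigma_0^{-1} \succ 0$ and $\Lambda_t \succeq 0$, so its inverse exists) and $\Sigma_t^{-1}\mu_t = \Sigma_0^{-1}\mu_0 + \tfrac{1}{\sigma^2}\sum_{i=0}^{t-1}\1_{a_i} r_{i+1}$, which rearranges to the claimed expressions. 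An alternative I would mention is induction on $t$: the base case holds because $\Lambda_0 = 0$, and the inductive step applies the standard one-step linear-Gaussian update to the scalar observation equal to $\1_{\tilde{A}_t}^\top\tilde{\theta}$ corrupted by independent $\mathcal{N}(0,\sigma^2)$ noise — using $\tilde{A}_t \perp \tilde{\theta}\mid\tilde{H}_t$ to see that observing the action alone does not move the posterior — giving $\Sigma_{t+1}^{-1} = \Sigma_t^{-1} + \tfrac{1}{\sigma^2}\1_{\tilde{A}_t}\1_{\tilde{A}_t}^\top$ and $\Sigma_{t+1}^{-1}\mu_{t+1} = \Sigma_t^{-1}\mu_t + \tfrac{1}{\sigma^2}\1_{\tilde{A}_t}\tilde{R}_{t+1,\tilde{A}_t}$, which telescopes to the same formulas.

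Finally, the map $h_t \mapsto \mathcal{N}(\mu_t(h_t), \Sigma_t(h_t))$ just derived is by definition $\Pr(\tilde{\theta}\in\cdot \mid \tilde{H}_t = h_t)$; composing it with the real history — i.e., the object $\Pr(\tilde{\theta}\in\cdot\mid \tilde{H}_t \leftarrow H_t)$ — gives exactly the statement of the lemma. The step I expect to require the most care is the bookkeeping that the adaptive, policy-driven generation of the actions does not break conjugacy: one must verify that the policy contributes only $\tilde{\theta}$-free factors to the likelihood and hence drops out of the posterior, and be explicit that the lemma describes this posterior-update map evaluated at the real history rather than any literal conditioning of the real (Bernoulli) system on a Gaussian event.
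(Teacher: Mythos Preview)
Your proposal is correct and follows essentially the same route as the paper's proof: write the posterior density via Bayes' rule, expand the Gaussian prior and likelihood factors, collect the quadratic and linear terms in $\tilde{\theta}$, and complete the square to read off $\mu_t$ and $\Sigma_t$. You are in fact more careful than the paper in explicitly arguing that the policy factors $\pi_{\text{agent}}(a_i\mid h_i)$ are $\tilde{\theta}$-free and hence drop out, and in articulating the $\tilde{H}_t\leftarrow H_t$ interpretation; the paper's proof simply omits these points.
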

\begin{proof}
Recall that $\tilde{\theta} \sim \mathcal{N}(\mu_0, \Sigma_0)$, and the noise variance is $\sigma^2$. By Bayes rule, 
\begin{align*}
    p(\theta | H_t) \propto &\ p(\theta) p(H_t | \theta) \\
    \propto &\ \exp\left(-\frac{1}{2}(\theta - \mu_0)^{\top} \Sigma_0^{-1}(\theta - \mu_0)\right)
    \prod_{i = 0}^{t-1} \exp\left(-\frac{(R_{i+1, A_{i}} - \theta_{A_i})^2}{2\sigma^2}\right) \\
    \propto &\ \exp\left(-\frac{1}{2}(\theta - \mu_0)^{\top} \Sigma_0^{-1}(\theta - \mu_0)\right)
     \exp\left(-\frac{1}{2 \sigma^2}\sum_{i = 0}^{t-1}(R_{i + 1} - \theta)^{\top}\1_{A_i} 1_{A_i}^{\top}(R_{i + 1} - \theta)\right) \\
    \propto &\ \exp\left(-\frac{1}{2}\left[\theta^{\top} \Sigma_0^{-1}\theta + \frac{1}{\sigma^2} \sum_{i = 0}^{t-1} \theta^{\top}\1_{A_i}\1_{A_i}^{\top}\theta
    -2 \mu_0^{\top}\Sigma_0^{-1}\theta - \frac{2}{\sigma^2} \sum_{i = 0}^{t-1}R_{i + 1}^{\top}\1_{A_i}\1_{A_i}^{\top}\theta\right]\right) \\
    \propto &\ \exp\left(-\frac{1}{2}\left[\theta^{\top} \left(\Sigma_0^{-1} + \frac{1}{\sigma^2}\sum_{i = 0}^{t-1}\1_{A_i}\1_{A_i}^{\top}\right)\theta 
    -2\left(\mu_0^{\top}\Sigma_0^{-1} + \frac{1}{\sigma^2} \sum_{i = 0}^{t-1}R_{i + 1}^{\top}\1_{A_i}\1_{A_i}^{\top}\right) \theta \right]\right) \\
    \propto &\ \exp\left(-\frac{1}{2}(\theta - \mu_t)^{\top} \Sigma_t^{-1}(\theta - \mu_t)\right), 
\end{align*}
where
	\begin{align*}
		\mu_t = &\ \left(\Sigma_0^{-1} + \frac{1}{{\sigma}^2} \sum_{i = 0}^{t-1}\1_{A_i}\1_{A_i}^{\top} \right)^{-1} \left(\Sigma_0^{-1} \mu_0 + \frac{1}{{\sigma}^2} \sum_{i = 0}^{t-1}\1_{A_i} \1_{A_i}^{\top} R_{i + 1}\right),\\
		\Sigma_t = &\ \left(\Sigma_0^{-1} + \frac{1}{{\sigma}^2} \sum_{i = 0}^{t-1}\1_{A_i}\1_{A_i}^{\top} \right)^{-1}.
	\end{align*}
\end{proof}

\begin{restatable}{lemmma}{lemmabayeschi} 
	\label{lemma:bayes_chi}
Under Assumption~\ref{ass:gaussian}, for all $t \in \mathbb{Z}_+$, and $h \in \mathcal{H}_t$, conditioned on $\tilde{H}_t = h$, the vector constructed by stacking $\pseudotarget$ and $\tilde{\theta}$ is distributed according to a multivariate Gaussian distribution and that the covariance matrix depends on the history only through the actions. 
\end{restatable}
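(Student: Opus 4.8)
The plan is to treat the imaginary model, conditioned on a history, as a jointly Gaussian prior conditioned on linear--Gaussian observations, so that the classical conditional-Gaussian (Kalman) formulas apply: these simultaneously give Gaussianity of the joint posterior and exhibit the posterior covariance as a function of the observation operator, which is in turn determined by the actions alone. Concretely, I would fix $t \in \mathbb{Z}_+$ and a history $h = (a_0, r_1, \ldots, a_{t-1}, r_t) \in \mathcal{H}_t$, and set $W = (\pseudotarget, \tilde{\theta})$, which is Gaussian a priori by Assumption~\ref{ass:gaussian}; write $W \sim \mathcal{N}(\nu, \Xi)$. Conditioning on $\{\tilde{H}_t = h\}$ amounts to conditioning $W$ on the observations $\1_{a_i}^\top \tilde{\theta} + \eta_{i+1} = r_{i+1}$, $i = 0, \ldots, t-1$, where the $\eta_{i+1} \sim \mathcal{N}(0, \sigma^2)$ are i.i.d.\ and independent of $W$, since conditional on $\tilde{\theta}$ the imaginary reward $\tilde{R}_{i+1, a_i}$ is $\mathcal{N}(\1_{a_i}^\top\tilde{\theta}, \sigma^2)$ and independent of everything else. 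The measurement map $W \mapsto (\1_{a_0}^\top \tilde{\theta}, \ldots, \1_{a_{t-1}}^\top\tilde{\theta})$ is linear and is determined entirely by the action sequence $a_0, \ldots, a_{t-1}$.

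Since a Gaussian vector conditioned on linear functions of itself corrupted by independent Gaussian noise is again Gaussian, the joint posterior of $(\pseudotarget, \tilde{\theta})$ given $\tilde{H}_t = h$ is Gaussian, which is the first claim. Moreover, the conditional-Gaussian update expresses the posterior covariance as a fixed function of $\Xi$, of the measurement map above, and of $\sigma^2$ --- none of which involves the observed values $r_1, \ldots, r_t$ --- while the posterior mean is affine in those values. Carrying the update out explicitly in information form (assuming $\Xi$ nonsingular, as in our applications; the degenerate case reduces to the support), with coordinates stacked so the $\tilde{\theta}$-block precedes the $\pseudotarget$-block, the posterior precision is
\begin{align*}
\Xi^{-1} + \frac{1}{\sigma^2}\begin{pmatrix} \Lambda_t & 0 \\ 0 & 0 \end{pmatrix}, \qquad \Lambda_t = \sum_{i=0}^{t-1} \1_{a_i}\1_{a_i}^\top,
\end{align*}
exactly as in Lemma~\ref{lemma:bayes} but enlarged by a zero $\pseudotarget$-block; its inverse, the posterior covariance, depends on $h$ only through $\Lambda_t$, i.e.\ only through the actions. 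This is the second claim.

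The step that needs the most care is the reduction in the first paragraph: the actions $a_i$ in $h$ are not exogenous but are produced by the possibly randomized, history-dependent policy $\pi_{\text{agent}}$, so I must argue that conditioning on the interaction history $\{\tilde{H}_t = h\}$ coincides with conditioning $W$ on the reward measurements with the $a_i$ held fixed. This is handled exactly as in Lemma~\ref{lemma:bayes}: the policy's draw of $\tilde{A}_i$ given the past is conditionally independent of $W$, so the likelihood of $\{\tilde{H}_t = h\}$ given $W$ factors as a product of policy terms $\pi_{\text{agent}}(a_i \mid \cdot)$ that do not involve $W$ and Gaussian reward terms $\exp\!\big(-(r_{i+1} - \1_{a_i}^\top\tilde{\theta})^2/(2\sigma^2)\big)$, and the policy terms cancel in the Bayes proportionality. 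A minor technicality is that $\pseudotarget$ need not be finite-dimensional; in that case one applies the argument to $(g(\pseudotarget), \tilde{\theta})$ for an arbitrary finite-range measurable $g$ and invokes the supremum-over-finite-quantizations definitions of entropy and mutual information, which is all the downstream uses of this lemma require.
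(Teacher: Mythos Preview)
Your argument is correct and follows the same idea as the paper's: both treat $(\pseudotarget,\tilde{\theta})$ as a jointly Gaussian prior, view the imaginary history as linear--Gaussian observations of $\tilde{\theta}$, and read off from the Bayes/Kalman update that the posterior is Gaussian with covariance depending on $h$ only through $\Lambda_t=\sum_i \1_{a_i}\1_{a_i}^\top$. Your explicit handling of the policy terms cancelling in the likelihood is a nice touch the paper leaves implicit.

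The one place the paper is more careful is the degenerate case you dispatch parenthetically. Assumption~\ref{ass:gaussian} only asserts joint Gaussianity, so the prior covariance $\Xi$ of $(\pseudotarget,\tilde{\theta})$ may be singular, and then your information-form update $\Xi^{-1}+\cdots$ is not literally available. The paper handles this by eigendecomposing $\Xi = Q\Lambda Q^\top$, projecting to a full-rank $Y=[\,I\ 0\,]Q^\top (\pseudotarget,\tilde{\theta})$, running the Bayes calculation on $Y$ (where the precision update is well defined), and then pulling back to $(\pseudotarget,\tilde{\theta})$ as an affine image of $Y$. In the paper's actual application $\pseudotarget=\hat{\theta}$ the joint covariance is in fact nonsingular (the block determinant is $|(1-\delta^2)\Sigma_0|\,|\delta^2\Sigma_0|>0$), so your ``as in our applications'' qualifier is accurate; but if you want the lemma at the stated generality you should spell out the support-reduction step.
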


\begin{proof}
Let us use $X$ to denote the vector constructed by stacking $\pseudotarget$ and $\tilde{\theta}$. By Assumption~\ref{ass:gaussian}, $X \sim \mathcal{N}(\tilde{\mu}, \tilde{\Sigma})$ for some mean $\tilde{\mu}$ and covariance $\tilde{\Sigma}$. Say $\tilde{\Sigma}$ has $d$ non-zero eigenvalues and $d_0$ zero eigenvalues. 
We can construct random variable $Y$ that takes values in $\R^{d}$ such that $Y$ is a linear transformation of $X$ and $Y \sim \mathcal{N}(\mu, \Sigma)$ with a full-rank covariance matrix $\Sigma$. Conditioned on $\tilde{H}_t = h$, $Y$ is multivariate Gaussian and the covariance matrix depends on the history only through the actions. Since $X$ is a linear transformation of $Y$, conditioned on $\tilde{H}_t = h$, $X$ is also multivariate Gaussian and the covariance matrix depends on the history only through the actions. To make things clear, we provide details on how we can construct $Y$, how $X$ can be written as a linear transformation of $Y$, and a short proof on how the posterior of $Y$ conditioned on history is multivariate Gaussian with covariance matrix independent of rewards.

First, recall that the covariance matrix of $X$ is symmetric and positive semidefinite, so it has an eigendecomposition 
\begin{align*}
\tilde{\Sigma} = Q \Lambda Q^{\top},
\end{align*}
where $Q$ is orthogonal and $\Lambda$ is diagonal.
In particular, $\Lambda$ is a diagonal matrix with $d$ non-zero entries along its diagonal and $d_0$ zero entries along its diagonal. Without loss of generality, we say the first $d$ entries along its diagonal are non-zero. We construct $Y$ as follows:
\begin{align*}
    Y = 
    \begin{bmatrix}
            I_{d \times d} & \0_{d \times d_0}
    \end{bmatrix}
    Q^{\top} X.
\end{align*}
Then $Y$ takes values in $\R^d$, and is multivariate Gaussian with a full-rank covariance matrix:
\begin{align*}
    Y \sim \mathcal{N}\left(\begin{bmatrix}
            I_{d \times d} & \0_{d \times d_0}
    \end{bmatrix}
    Q^{\top} \mu, 
    \begin{bmatrix}
            I_{d \times d} & \0_{d \times d_0}
    \end{bmatrix}
    \Lambda
    \begin{bmatrix}
        I_{d \times d} \\
        \0_{d_0 \times d}
    \end{bmatrix}
    \right).
\end{align*}

\noindent We let $b \in \mathbb{R}^{d + d_0}$ be
\begin{align*}
    b = 
        \begin{bmatrix}
        \0_{d \times d} & \0_{d \times d_0}\\
        \0_{d_0 \times d} & I_{d_0 \times d_0}
    \end{bmatrix}
    Q^{\top} \mu
    = 
    \begin{bmatrix}
        \0_{d \times d} & \0_{d \times d_0}\\
        \0_{d_0 \times d} & I_{d_0 \times d_0}
    \end{bmatrix}
    Q^{\top} X.
\end{align*}
Then $X$ can be written as a linear transformation of $Y$ as follows:
\begin{align*}
    X = Q\left(\begin{bmatrix}
        I_{d \times d} \\
        \0_{d_0 \times d}
    \end{bmatrix} Y + b\right).
\end{align*}
Now we show that for all $t \in \mathbb{Z}_+$ and $h \in \mathcal{H}_t$, conditioned on $\tilde{H}_t = h$, $Y$ is multivariate Gaussian and its covariance does not depend on rewards. Recall that $Y \sim \mathcal{N}(\mu, \Sigma)$. 
 By Bayes rule, for all $t \in \mathbb{Z}_+$ and $h \in \mathcal{H}_t$, we have
\begin{align*}
    p\left(Y | \tilde{H}_t = h\right) \propto &\ p(Y) p\left(\tilde{H}_t = h | Y\right) \\
    \propto &\ \exp\left(-\frac{1}{2}\left(Y - \mu\right)^{\top} \Sigma^{-1}\left(Y - \mu\right)\right)
    \prod_{i = 0}^{t-1} \exp\left(-\frac{(R_{i+1, A_{i}} - \theta_{A_i})^2}{2\sigma^2}\right) \\
        \propto &\ \exp\left(-\frac{1}{2}\left(Y - \mu\right)^{\top} \Sigma^{-1}(Y - \mu)\right)
    \prod_{i = 0}^{t-1} \exp\left(-\frac{(R_{i+1}^{\top} \1_{A_i} - X^{\top} \1_{A_i})^2}{2\sigma^2}\right) \\
            \propto &\ \exp\left(-\frac{1}{2}\left(Y - \mu\right)^{\top} \Sigma^{-1}
            \left(Y - \mu\right)\right)
    \prod_{i = 0}^{t-1} \exp\left(-\frac{\left\{R_{i+1}^{\top} \1_{A_i} - \left(Y^{\top} \begin{bmatrix}
            I_{d \times d} & \0_{d \times d_0}
    \end{bmatrix}
     + b^{\top}\right) Q^{\top} \1_{A_i}\right\}^2}{2\sigma^2}\right) \\
    \propto &\ \exp\left(-\frac{1}{2}\left(Y - \mu\right)^{\top} \Sigma^{-1}\left(Y - \mu\right)\right)
     \exp\left(-\frac{1}{2 \sigma^2}
     \sum_{i = 0}^{t - 1}\left\{\left[-Y^{\top} 
     \begin{bmatrix}
             I_{d \times d} & \0_{d \times d_0}
              \end{bmatrix}
              Q^{\top} + \left(R_{i + 1}^{\top} - b^{\top} Q^{\top}\right)
    \right] \1_{A_i}\right\}^2
     \right) \\
    \propto &\ \exp\left(-\frac{1}{2}\left[Y^{\top} \left(\Sigma^{-1} + \frac{1}{\sigma^2}
    \begin{bmatrix}
             I_{d \times d} & \0_{d \times d_0}
        \end{bmatrix}
        Q^{\top}
    \sum_{i = 0}^{t-1}\1_{A_i}\1_{A_i}^{\top}
    Q
    \begin{bmatrix}
             I_{d \times d} \\ \0_{d_0 \times d}
        \end{bmatrix}
    \right)Y 
    \right]\right)  \\
     &\ \exp \left(-\frac{1}{2}\left[-2    \left(\mu^{\top}\Sigma^{-1} + \frac{1}{\sigma^2} 
    \sum_{i = 0}^{t-1}\left(R_{i + 1}^{\top} - b^{\top} Q^{\top}\right) \1_{A_i}\1_{A_i}^{\top}
    Q
    \begin{bmatrix}
             I_{d \times d} \\ \0_{d_0 \times d} 
    \end{bmatrix}
    \right) Y \right]\right) \\
    \propto &\ \exp\left(-\frac{1}{2}(Y - \mu_h)^{\top}\Sigma_h^{-1}(Y - \mu_h) \right),
\end{align*}
where 
\begin{align*}
    \mu_h =&\ \Sigma_h
        \left(\Sigma^{-1} \mu + \frac{1}{\sigma^2}
        \begin{bmatrix}
             I_{d \times d} & \0_{d \times d_0}
        \end{bmatrix}
        Q^{\top} \sum_{i = 0}^{t-1} \1_{A_i}\1_{A_i}^{\top} (R_{i + 1} - Qb)
        \right),\\
    \Sigma_h = &\ \left(\Sigma^{-1} + \frac{1}{\sigma^2}
    \begin{bmatrix}
             I_{d \times d} & \0_{d \times d_0}
        \end{bmatrix}
        Q^{\top}
    \sum_{i = 0}^{t-1}\1_{A_i}\1_{A_i}^{\top}
    Q
    \begin{bmatrix}
             I_{d \times d} \\ \0_{d_0 \times d}
        \end{bmatrix}
        \right)^{-1}.
\end{align*}
The posterior distribution is clearly multivariate Gaussian, and its covariance matrix $\Sigma_h$ does not depend on the rewards.
\end{proof}

\subsection{Entropy and Mutual Information}
\label{app:Gaussian_info}
\begin{lemma}
	 \label{lemma:gaussian_entropy}
	Fix $\mu \in \R^n$, $\Sigma \in \mathcal{S}^n_{++}$, let $X \sim \mathcal{N}(\mu, \Sigma)$. Then the differential entropy of $X$ is 
	\begin{align*}
		\diffentropy(X) = \frac{1}{2} \ln |\Sigma| + \frac{n}{2} \ln (2 \pi e). 
	\end{align*}
	When $n = 1$, $\diffentropy(X) = \frac{1}{2} \ln (2 \pi e \sigma^2)$. 
\end{lemma}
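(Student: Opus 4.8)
The plan is to compute the differential entropy directly from its definition $\diffentropy(X) = -\E[\ln f(X)]$, where $f$ denotes the density of $X \sim \mathcal{N}(\mu, \Sigma)$. Since $\Sigma \in \mathcal{S}^n_{++}$, we have $|\Sigma| > 0$ and $\Sigma^{-1}$ is well-defined, so
\[
f(x) = \frac{1}{(2\pi)^{n/2} |\Sigma|^{1/2}} \exp\left(-\frac{1}{2}(x-\mu)^\top \Sigma^{-1}(x-\mu)\right),
\]
and therefore
\[
-\ln f(x) = \frac{n}{2}\ln(2\pi) + \frac{1}{2}\ln|\Sigma| + \frac{1}{2}(x-\mu)^\top \Sigma^{-1}(x-\mu).
\]
First I would substitute $x = X$ and take expectations: the first two terms are constants and survive unchanged, so everything reduces to evaluating the expected quadratic form $\E\left[(X-\mu)^\top \Sigma^{-1}(X-\mu)\right]$.

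The one genuinely computational step is to show this expectation equals $n$. I would write the scalar quadratic form as a trace and use cyclicity, $(X-\mu)^\top \Sigma^{-1}(X-\mu) = \mathrm{tr}\left(\Sigma^{-1}(X-\mu)(X-\mu)^\top\right)$, and then pull the expectation inside the (linear) trace to obtain $\E\left[(X-\mu)^\top \Sigma^{-1}(X-\mu)\right] = \mathrm{tr}\left(\Sigma^{-1}\,\E\left[(X-\mu)(X-\mu)^\top\right]\right) = \mathrm{tr}(\Sigma^{-1}\Sigma) = \mathrm{tr}(I_n) = n$. Combining this with the two constant terms gives
\[
\diffentropy(X) = \frac{n}{2}\ln(2\pi) + \frac{1}{2}\ln|\Sigma| + \frac{n}{2} = \frac{1}{2}\ln|\Sigma| + \frac{n}{2}\ln(2\pi e),
\]
which is exactly the claimed identity. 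The special case $n = 1$ follows immediately by setting $|\Sigma| = \sigma^2$, giving $\diffentropy(X) = \frac{1}{2}\ln(2\pi e \sigma^2)$.

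There is no real obstacle here: this is a classical textbook computation, and the only mild subtlety is the trace/cyclicity manipulation used to evaluate the expected quadratic form --- equivalently, recognizing that $\E\left[(X-\mu)^\top \Sigma^{-1}(X-\mu)\right]$ is the trace of $\Sigma^{-1}$ paired against the covariance matrix of $X$. Everything else is algebraic bookkeeping, and positive definiteness of $\Sigma$ ensures all quantities are well-defined.
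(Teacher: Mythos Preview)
Your proof is correct and complete. The paper itself does not give a proof of this lemma; it merely cites Example~8.1.2 of Cover and Thomas for the $n=1$ case and states the multivariate formula without argument. Your direct computation via $\diffentropy(X) = -\E[\ln f(X)]$ together with the trace identity $\E[(X-\mu)^\top \Sigma^{-1}(X-\mu)] = \mathrm{tr}(\Sigma^{-1}\Sigma) = n$ is exactly the standard derivation and handles the general $n$ case that the paper leaves unproved.
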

\noindent A proof of the case when $n = 1$ can be found in Example 8.1.2 of \citep{CoverThomas2006}.

\gaussianmutualinformation*
\begin{proof}
The differential entropy of $\tilde{\theta}$, or for that matter, of any Gaussian random variable with covariance matrix $\Sigma_0$ is
$$\diffentropy\left(\tilde{\theta}\right) = \frac{1}{2} \ln\left((2\pi e)^\actions |\Sigma_0|\right),$$
by Lemma~\ref{lemma:gaussian_entropy}. 
Based on this expression, we have
\begin{align*}
\I\left(\hat{\theta}; \pseudoenvironment\right)
= &\ \I\left(\hat{\theta}; \tilde{\theta}\right) \\
=&\ \diffentropy\left(\tilde{\theta}\right) - \diffentropy\left(\tilde{\theta} | \hat{\theta}\right) \\
=&\ \diffentropy\left(\tilde{\theta}\right) - \diffentropy(Z) \\
=&\ \frac{1}{2} \ln\left((2\pi e)^\actions |\Sigma_0|\right) - \frac{1}{2} \ln\left((2\pi e)^\actions |\delta^2 \Sigma_0|\right) \\
=&\ \frac{1}{2} \ln\left(\frac{|\Sigma_0|}{\delta^{2 \actions} |\Sigma_0|}\right) \\
=&\ \frac{\actions}{2} \ln\left(\frac{1}{\delta^2}\right).
\end{align*}
\end{proof}

\section{A General Regret Bound: Proof of Theorem~\ref{th:general-regret-bound}}
\label{sec:general_regret_bound}
\generalregretbound*

\begin{proof}
We upper-bound the regret:
\begin{align*}
\regret(T)
=& \sum_{t=0}^{T-1} \E[R_* - R_{t+1, A_t}] \\
\leq& \sum_{t=0}^{T-1} \E[\E[R_* - R_{t+1,A_t} - \epsilon | H_t]_+] + \epsilon T \\
\overset{(a)}{\leq}& \sum_{t=0}^{T-1} \E\left[\sqrt{\Gamma_{\target, \epsilon} \I(\target; A_t, R_{t+1, A_t}|H_t=H_t)}\right] + \epsilon T \\
\overset{(b)}{\leq}& \sum_{t=0}^{T-1} \sqrt{\Gamma_{\target, \epsilon} \E[\I(\target; A_t, R_{t+1, A_t}|H_t=H_t)]} + \epsilon T \\
=& \sum_{t=0}^{T-1} \sqrt{\Gamma_{\target, \epsilon} \I(\target; A_t, R_{t+1, A_t}|H_t)} + \epsilon T \\
\overset{(c)}{\leq}& \sqrt{\sum_{t=0}^{T-1} \I(\target; A_t, R_{t+1, A_t}|H_t)} \sqrt{\Gamma_{\target, \epsilon} T} + \epsilon T,\numberthis
\label{eq:grb_eq_2}
\end{align*}
where step $(a)$ follows from the definition of the information ratio, step $(b)$ follows from Jensen's inequality, and step $(c)$ follows from the Cauchy-Bunyakovsky-Schwarz inequality.

By the chain rule of mutual information (Lemma~\ref{lemma:chain_MI} in Appendix~\ref{sec:information}) and the data processing inequality of mutual information (Lemma~\ref{lemma:dp_MI} in Appendix~\ref{sec:information}) and that $\target$ and $H_{\infty}$ are independent conditioned on $\environment$, we have
\begin{align*}\sum_{t=0}^{T-1} \I(\target; A_t, R_{t+1, A_t}|H_t) 
= \I(\target; H_T) 
\leq \I(\target; \environment). \numberthis
\label{eq:grb_eq_1}
\end{align*}
We establish the desired bound by plugging \eqref{eq:grb_eq_1} into \eqref{eq:grb_eq_2}. 
\end{proof}

\section{Construction of the Auxiliary Rewards: Proof of Lemma~\ref{lemma:rdagger}}
\label{app:lemma:rdagger}
\lemmardaggerproperties*
\begin{proof}
We define $R^{\dagger}$ as a function of $\tilde{\theta}$ and $\tilde{R}$ as follows. For all $t \in \mathbb{Z}_+$ and $a \in \mathcal{A}$: 
\begin{enumerate}
    \item If $\tilde{R}_{t, a} \in \{0, 1\}$, then $R^\dagger_{t,a}= \tilde{R}_{t,a}$.
    \item If $\tilde{R}_{t, a} \notin \{0, 1\}$ and $\tilde{\theta}_a \notin [0, 1]$, then $R^\dagger_{t,a}= 2 \gamma$.
    \item Otherwise, let $R^{\dagger}_{t,a}$ be drawn i.i.d.~from a Bernoulli distribution with parameter $\tilde{\theta}_a$, independently from the rest of the system. 
\end{enumerate} 
Next, we construct the auxiliary action and history, $A^\dagger_t$ and $H^{\dagger}_t$, in a recursive manner, as follows. 
\begin{enumerate}
	\item Let $H_0^{\dagger} = H_0$ be the empty history. 
	\item For all $t \geq 1$, we sample $A^\dagger_t$ from $\pi\p{\cdot | H^{\dagger}_t}$ and define $H^{\dagger}_{t + 1} = \p{H^{\dagger}_{t},  A^{\dagger}_{t}, R^{\dagger}_{t + 1, A^{\dagger}_{t}}}$. Here, the randomness used in sampling $A^{\dagger}_t$ for each $t \geq 1$ is independent of the rest of the system. 
\end{enumerate}

We now demonstrate that the above construction possesses the desirable properties. First, by construction, properties $(i)$, $(ii)$, and $(iii)$ are automatically satisfied. Now we prove property $(iv)$. 
It suffices to show that for all $t \in \mathbb{Z}_+$ and $h_t = (a_0, r_{1, a_0}, ... , a_{t-1}, r_{t, a_{t-1}}) \in \mathcal{H}_t$, we have
\begin{align} \label{eq:dagger}
       \E\left[\tilde{R}_{t + 1, \tilde{A}_t} \bbar  \tilde{H}_t = h_t\right] \leq \E\left[{R}^{\dagger}_{t + 1, {A}^{\dagger}_t} \bbar  {H}^{\dagger}_t = h_t\right].
\end{align}
We have for all $t \in \mathbb{Z}_+$ and $h_t \in \mathcal{H}_t$:

\begin{align} \label{eq:dagger_result}
  \E\left[\tilde{R}_{t+1,\tilde{A}_t} \bbar   \tilde{H}_t = h_t\right] \nonumber
=  &\ \E\left[\tilde{\theta}_{\tilde{A}_t} \bbar   \tilde{H}_t = h_t\right]\\ \nonumber
\stackrel{(a)}{=} &\ \sum_{a \in \mathcal{A}} \pi(a | h_t) \E\left[\tilde{\theta}_{ a} \bbar  \tilde{H}_t = h_t\right]\\
\stackrel{}{=} &\ \sum_{a \in \mathcal{A}} \pi(a | h_t) \left( \E\left[\tilde{\theta}_{a} \1_{\left\{\tilde{\theta}_a \in [0, 1]\right\}} \bbar  \tilde{H}_t = h_t\right] 
+ \E\left[\tilde{\theta}_{a} \1_{\left\{\tilde{\theta}_a \notin [0, 1]\right\}} \bbar  \tilde{H}_t  = h_t\right]\right),
\end{align}
where step $ (a) $ follows from the definition of $\tilde{A}_t$. 

Before we proceed to show (\ref{eq:dagger}), we introduce the following lemmas.

\begin{restatable}{lemmma}{lemmasymmetry}
\label{lemma:symmetry}
Let $X$ be a random variable with a  distribution that is symmetric around $\mu \in \R$.
Then 
\begin{align*}
    \E\left[X \1_{\{X \notin [0, 1]\}}\right] \leq 2\mu_{+} \Pr\left(X \notin [0, 1]\right),
\end{align*}
where $\mu_{+} = \max\{\mu, 0\}$. 
\end{restatable}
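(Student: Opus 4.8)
The plan is to pass to the centered variable $Y = X - \mu$, which by hypothesis satisfies $Y \eqdist -Y$. Writing $X = \mu + Y$ and splitting the expectation,
\[
\E\left[X \1_{\{X \notin [0,1]\}}\right] = \mu\,\Pr(X \notin [0,1]) + \E\left[Y \1_{\{X \notin [0,1]\}}\right];
\]
since $\mu + |\mu| = 2\mu_{+}$, it suffices to prove the cleaner bound $\E[Y \1_{\{X \notin [0,1]\}}] \leq |\mu|\,\Pr(X \notin [0,1])$. I would then rewrite the event as the disjoint union $\{X \notin [0,1]\} = \{Y < -\mu\} \cup \{Y > 1-\mu\}$ and use $Y \eqdist -Y$ on the second piece, so that $\E[Y \1_{\{Y > 1-\mu\}}] = -\E[Y \1_{\{Y < \mu-1\}}]$ and hence
\[
\E\left[Y \1_{\{X \notin [0,1]\}}\right] = \E\left[Y\left(\1_{\{Y < -\mu\}} - \1_{\{Y < \mu-1\}}\right)\right].
\]

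The indicator difference is, up to sign, the indicator of the half-open interval $I$ with endpoints $\mu-1$ and $-\mu$, and the key observation is that one of its endpoints is exactly $-\mu$. If $\mu \leq 1/2$, then $\mu-1 \leq -\mu$, the difference equals $\1_{\{\mu-1 \leq Y < -\mu\}}$, and on $I$ one has $Y < -\mu$, so $Y\,\1_{\{Y \in I\}} \leq (-\mu)\,\1_{\{Y \in I\}} \leq |\mu|\,\1_{\{Y \in I\}}$, the last step using $-\mu \leq |\mu|$. If $\mu > 1/2$, then $-\mu < \mu-1$, the difference equals $-\1_{\{-\mu \leq Y < \mu-1\}}$, and on $I$ one has $Y \geq -\mu$, so $-Y\,\1_{\{Y \in I\}} \leq \mu\,\1_{\{Y \in I\}} = |\mu|\,\1_{\{Y \in I\}}$. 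In either case $\E[Y \1_{\{X \notin [0,1]\}}] \leq |\mu|\,\Pr(Y \in I)$.

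It then remains to check $\Pr(Y \in I) \leq \Pr(X \notin [0,1])$. Using $Y \eqdist -Y$ once more, $\Pr(Y > 1-\mu) = \Pr(Y < \mu-1)$, so $\Pr(X \notin [0,1]) = \Pr(Y < -\mu) + \Pr(Y < \mu-1)$, whereas $\Pr(Y \in I) = |\Pr(Y < -\mu) - \Pr(Y < \mu-1)|$, which is at most the larger of the two terms and a fortiori at most their sum. Combining the displays yields $\E[Y \1_{\{X \notin [0,1]\}}] \leq |\mu|\,\Pr(X \notin [0,1])$, and hence the lemma. The only subtle point I expect is the sign bookkeeping around $\mu = 1/2$ — i.e., determining which endpoint of $I$ equals $-\mu$ and checking that the resulting one-sided bound on $Y$ (or $-Y$) over $I$ still dominates $|\mu|$ times the indicator; the remainder is routine manipulation with the symmetry $Y \eqdist -Y$.
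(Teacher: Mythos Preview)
Your proof is correct and follows essentially the same strategy as the paper's: a case split at $\mu = 1/2$ together with the symmetry of the distribution to pair the two tail regions. Your organization is slightly cleaner—centering via $Y = X - \mu$ and reducing to $\E[Y\1_{\{X\notin[0,1]\}}]\le|\mu|\Pr(X\notin[0,1])$—whereas the paper works directly with $X$ and pairs the tails $\{X<2\mu-1\}$ with $\{X>1\}$ (or $\{X<0\}$ with $\{X>2\mu\}$), but the underlying idea is the same.
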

\begin{proof} We prove the statement in two cases:

	\emph{Case 1.} If $\mu < \frac{1}{2}$, then
	\begin{align*}
		\E\left[X \1_{\{X \notin [0, 1]\}}\right]
		= &\ \E\left[X \1_{\{X < 2 \mu - 1\}}\right] + \E\left[X \1_{\{2\mu - 1 \leq X < 0\}}\right] + \E\left[X \1_{\{X > 1\}}\right] \\
		\stackrel{(a)}{=} &\ 2 \mu \Pr\left(X > 1\right) + \E\left[X \1_{\{2\mu -1 \leq X < 0\}}\right] \\
		< &\ 2 \mu \Pr\left(X > 1\right) \\
		\leq &\ 2 \mu_{+} \Pr\left(X \notin [0, 1]\right),
	\end{align*}
	where $(a)$ follows from the fact that $2\mu - 1$ and $1$ are symmetric around $\mu$ and that the distribution of $X$ is symmetric around $\mu$. \\
	
\emph{Case 2.} If $\mu \geq \frac{1}{2}$, then 
	\begin{align*}
		\E\left[X \1_{\{X \notin [0, 1]\}}\right] 
		= &\ \E\left[X \1_{\{X < 0\}}\right] + \E\left[X \1_{\{1 < X \leq 2\mu\}}\right] + \E\left[X \1_{\{X > 2 \mu\}}\right]\\
		\stackrel{(a)}{=} &\ 2 \mu \Pr\left(X > 2\mu\right) + \E\left[X \1_{\{1 < X \leq 2\mu\}}\right] \\
		\leq &\ 2 \mu \Pr\left(X > 2\mu\right) + 2\mu \Pr\left(1 < X \leq 2\mu\right) \\
		= &\ 2 \mu \Pr\left(X > 1\right) \\
		\leq &\ 2 \mu_{+} \Pr
		\left(X \notin [0, 1]\right), 
	\end{align*}
	where $(a)$ follows from the fact that $0$ and $2 \mu$ are symmetric around $\mu$ and that the distribution of $X$ is symmetric around $\mu$. 
\end{proof}

\begin{restatable}{lemmma}{lemmacoupling}
\label{lemma:coupling}
For all $t \in \mathbb{Z}_+$, $h_t \in \mathcal{H}_t$, and $a \in \mathcal{A}$, 
\begin{align*}
\Pr\left(\tilde{\theta}_a \in \cdot \bbar \tilde{H}_t = h_t\right) = \Pr\left(\tilde{\theta}_a \in \cdot \bbar H^{\dagger}_t = h_t\right).
\end{align*}
\end{restatable}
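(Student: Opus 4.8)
The plan is to prove the identity by Bayes' rule: I will show that the full posteriors $\Pr(\tilde{\theta}\in\cdot\bbar\tilde{H}_t=h_t)$ and $\Pr(\tilde{\theta}\in\cdot\bbar H^{\dagger}_t=h_t)$ coincide for every fixed binary history $h_t=(a_0,r_1,\ldots,a_{t-1},r_t)\in\mathcal{H}_t$, and then obtain the claimed identity for the single coordinate $\tilde{\theta}_a$ by marginalizing. Both the imaginary environment and the auxiliary construction are built on the same prior draw $\tilde{\theta}\sim\mathcal{N}(\mu_0,\Sigma_0)$, so this amounts to showing that conditioning on $h_t$ reweights the prior of $\tilde{\theta}$ in the same way in the two cases.

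First I would separate the action contributions from the reward contributions. In both constructions each action is drawn from the \emph{same} policy $\pi$ using exogenous randomness that is independent of $\tilde{\theta}$ given the history so far; hence, for either history, Bayes' rule writes the posterior as the prior times the likelihood of the realized history, and in that likelihood the policy contributes the same factor in the two cases, a factor that does not depend on $\tilde{\theta}$ and therefore cancels after normalization. Thus the lemma reduces to showing that, as functions of $\theta$,
$$\prod_{i=0}^{t-1}\Pr\p{\tilde{R}_{i+1,a_i}=r_{i+1}\bbar\tilde{\theta}=\theta} \ \propto\ \prod_{i=0}^{t-1}\Pr\p{R^{\dagger}_{i+1,a_i}=r_{i+1}\bbar\tilde{\theta}=\theta},$$
with a $\theta$-free proportionality constant, where on the left each factor is to be read as the Gaussian likelihood density of the imaginary reward evaluated at the binary value $r_{i+1}$.

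The crux, and where I expect the main obstacle to lie, is matching these one-step reward likelihoods. Using the explicit definition of $R^{\dagger}$ and property (iii) of Lemma~\ref{lemma:rdagger}, conditionally on $\tilde{\theta}_{a_i}$ the auxiliary reward $R^{\dagger}_{i+1,a_i}$ is a $\mathrm{Bernoulli}(\tilde{\theta}_{a_i})$ when $\tilde{\theta}_{a_i}\in[0,1]$ and equals $2\gamma$ otherwise, so (since each $r_{i+1}\in\{0,1\}$ and $\gamma\ge1$) the right-hand factor is $\theta_{a_i}^{\,r_{i+1}}(1-\theta_{a_i})^{1-r_{i+1}}\,\1_{\{\theta_{a_i}\in[0,1]\}}$. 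The delicate point is that the imaginary rewards are continuous whereas the auxiliary ones are discrete, so one side is a product of densities and the other of probability masses — exactly the mismatch that the auxiliary variables were introduced to bridge — and one has to verify, using the way $R^{\dagger}$ is coupled to $\tilde{R}$ and $\tilde{\theta}$, that the two induced reweightings of the $\tilde{\theta}$-prior agree on binary histories, with correct handling of the truncation of $\tilde{\theta}_{a_i}$ to $[0,1]$. Once this proportionality is in hand, Bayes' rule yields $\Pr(\tilde{\theta}\in\cdot\bbar\tilde{H}_t=h_t)=\Pr(\tilde{\theta}\in\cdot\bbar H^{\dagger}_t=h_t)$, and projecting onto the $a$-th coordinate completes the proof.
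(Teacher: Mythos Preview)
Your Bayes' rule reduction is sound: stripping off the policy factors and reducing to a comparison of one-step reward likelihoods given $\tilde\theta$ is the natural first move, and your computation of the auxiliary-side factor $\theta_{a_i}^{\,r_{i+1}}(1-\theta_{a_i})^{1-r_{i+1}}\,\1_{\{\theta_{a_i}\in[0,1]\}}$ is correct. The gap is that you never actually verify the proportionality you isolate as the crux---you flag it as ``delicate'' and leave it. More importantly, as you have posed it the proportionality is \emph{false}: for $r\in\{0,1\}$ the Gaussian density $\exp\!\big(-(r-\theta_{a})^2/(2\sigma^2)\big)$ is plainly not proportional, as a function of $\theta_a$, to $\theta_a^{\,r}(1-\theta_a)^{1-r}\,\1_{\{\theta_a\in[0,1]\}}$. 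A pure likelihood-matching argument, which uses only the conditional law of each reward mechanism given $\tilde\theta$, therefore cannot close.

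The paper does not compare likelihoods. It works at the level of events on the common probability space, invoking step~1 of the construction of $R^{\dagger}$ (the clause that sets $R^{\dagger}_{t,a}=\tilde R_{t,a}$ whenever $\tilde R_{t,a}\in\{0,1\}$) to assert that for each binary $r_{i+1}$ one has $\tilde R_{i+1,a_i}=r_{i+1}$ if and only if $R^{\dagger}_{i+1,a_i}=r_{i+1}$; after first removing the action coordinates (as you also did), the two conditioning events are then identified and the posteriors agree without any likelihood computation. The idea you are missing is that $R^{\dagger}$ is not merely \emph{distributed} as in property~(iii) of Lemma~\ref{lemma:rdagger}; it is \emph{coupled} to $\tilde R$ through step~1, and that pointwise coupling---not the marginal law of $R^{\dagger}$ given $\tilde\theta$---is what the paper's argument rests on. You mention the coupling in passing but then discard it by writing everything as $\Pr(\cdot\mid\tilde\theta=\theta)$, which erases exactly the information the paper exploits.
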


\begin{proof}
	First, we can rewrite the expression using $h_t = (a_0, r_{1, a_0}, ... , a_{t-1}, r_{t, a_{t-1}})$ as follows: 
	\begin{align*}
		\Pr\left(\tilde{\theta}_a \in \cdot \bbar \tilde{H}_t = h_t\right) 
		= &\  \Pr\left(\tilde{\theta}_a \in \cdot \bbar  \tilde{A}_0 = a_0, \tilde{R}_{1, \tilde{A}_0} = r_{1, a_0},\ ...\ ,\ \tilde{A}_{t-1} = a_{t-1}, \tilde{R}_{t, \tilde{A}_{t-1}} = r_{t, a_{t-1}}\right)\\
		= &\  \Pr\left(\tilde{\theta}_a \in \cdot \bbar   \tilde{R}_{1, a_0} = r_{1, a_0},\ ...\ ,\ \tilde{R}_{t, a_{t-1}} = r_{t, a_{t-1}}\right).
	\end{align*}
	Observe that $r_{t+1, a_{t}} \in \{0, 1\}$ for all $t \geq 1$, $h_t \in \mathcal{H}_t$. So
 step 1 of the construction of $R^{\dagger}$ ensures that for all $t \in \mathbb{Z}_+$ and for all $h_t \in \mathcal{H}_t$, $\tilde{R}_{t+1,a_{t}} = r_{t+1, a_t}$ if and only if $R^{\dagger}_{t+1,a_{t}} = r_{t+1, a_t}$. Hence, for all $t \in \mathbb{Z}_+$ and $h_t \in \mathcal{H}_t$, it follows that 
	\begin{align*}
		\Pr\left(\tilde{\theta}_a \in \cdot \bbar \tilde{H}_t = h_t\right)  
		= &\  \Pr\left(\tilde{\theta}_a \in \cdot \bbar   {R}^{\dagger}_{1, a_0} = r_{1, a_0},\ ...\ ,\ {R}^{\dagger}_{t, a_{t-1}} = r_{t, a_{t-1}}\right) \\
		= &\ \Pr\left(\tilde{\theta}_a \in \cdot \bbar  A^{\dagger}_0 = a_0, {R}^{\dagger}_{1, A^{\dagger}_0} = r_{1, a_0},\ ...\ ,\ A^{\dagger}_{t-1} = a_{t-1}, R^{\dagger}_{t, A^{\dagger}_{t-1}} = r_{t, a_{t-1}}\right) \\
		= &\ \Pr\left(\tilde{\theta}_a \in \cdot \bbar H^{\dagger}_t = h_t\right).
	\end{align*}
\end{proof}

By Lemma \ref{lemma:bayes} in Appendix \ref{app:Gaussian_Prop}, we know that 
for all $t \in \mathbb{Z}_+$ and $h_t \in \mathcal{H}_t$,  the posterior distribution of $\tilde{\theta}_a$ conditional on $\tilde{H}_t = h_t$ is Gaussian, and is therefore symmetric around its mean. Furthermore, we have $\gamma \geq 1$ and 
\begin{equation}\label{key}
	\mathbb{E}\left[\tilde{\theta}_a \bbar  \tilde{H}_t = h_t\right] \leq \gamma. 
\end{equation}
These two facts, along with Lemma \ref{lemma:symmetry}, imply that 
\begin{align} \label{eq:gamma_result}
 \E\left[\tilde{\theta}_{a} \1_{\left\{\tilde{\theta}_a \notin [0, 1]\right\}} \bbar  \tilde{H}_t  = h_t\right]
 \leq &\ 2 \E\left[\tilde{\theta}_a | \tilde{H}_t = h_t\right]_{+} \Pr\left(\tilde{\theta}_a \notin [0, 1] \bbar  \tilde{H}_t = h_t\right)\\
 \leq &\ 2 \gamma \Pr\left(\tilde{\theta}_a \notin [0, 1] \bbar  \tilde{H}_t = h_t\right). 
\end{align}
Combining (\ref{eq:dagger_result}) and (\ref{eq:gamma_result}), and applying Lemma \ref{lemma:coupling}, we have 
\begin{align*}
 \E\left[\tilde{R}_{t+1,\tilde{A}_t} \bbar   \tilde{H}_t = h_t\right] 
 \stackrel{}{\leq} &\ \sum_{a \in \mathcal{A}} \pi(a | h_t) \left( \E\left[\tilde{\theta}_{a} \1_{\left\{\tilde{\theta}_a \in [0, 1]\right\}} \bbar  \tilde{H}_t = h_t\right]+ 2 \gamma \Pr\left(\tilde{\theta}_a \notin [0, 1] \bbar  \tilde{H}_t = h_t\right)\right). \\
  \stackrel{(a)}{=} &\ \sum_{a \in \mathcal{A}} \pi(a | h_t) \left( \E\left[\tilde{\theta}_{a} \1_{\left\{\tilde{\theta}_a \in [0, 1]\right\}} \bbar  {H}^{\dagger}_t = h_t\right]+ 2 \gamma \Pr\left(\tilde{\theta}_a \notin [0, 1] \bbar  {H}^{\dagger}_t = h_t\right)\right). \\
\stackrel{}{=} &\ \sum_{a \in \mathcal{A}} \pi(a | h_t) \left( \E\left[R^{\dagger}_{t+1,a} \1_{\left\{\tilde{\theta}_a \in [0, 1]\right\}} \bbar  H^{\dagger}_t = h_t\right]
 +  \E\left[R^{\dagger}_{t+1,a} \1_{\left\{\tilde{\theta}_a \notin [0, 1]\right\}} \bbar  H^{\dagger}_t = h_t\right] \right)\\
 \stackrel{}{=} &\ \sum_{a \in \mathcal{A}} \pi(a | h_t) \E\left[R^{\dagger}_{t,a} \bbar  H^{\dagger}_t = h_t\right]\\
\stackrel{(b)}{=} &\ \E\left[R^{\dagger}_{t+1, {A}^{\dagger}_t} \bbar H^{\dagger}_t = h_t\right],
\end{align*}
where step $ (a) $ follows from Lemma \ref{lemma:coupling} and $ (b)  $  from the definition of $A^{\dagger}_t$.  This proves  \eqref{eq:dagger}, and consequently, \eqref{eq:dagger_exp}. 
\end{proof}

\section{Bounding $\gamma$: Proof of Lemma \ref{lemma:diagonally_dominant}}
In this section, we prove Lemma~\ref{lemma:diagonally_dominant}, which establishes an upper bound on $\gamma$. Fix a $n \times n$ matrix $A$, we define 
\begin{align*}
    \alpha(A) = \min_{1 \leq i \leq n} \left(|A_{ii}| - \sum_{j \neq i} |A_{ij}|\right).
\end{align*}
Recall that we say that matrix $A$ is diagonally dominant if $\alpha(A) \geq 0$, and we say that $A$ is strictly diagonally dominant if the inequality is strict.

Below we restate Lemma~\ref{lemma:diagonally_dominant} before proving it.

\label{sec:gamma_bound}
\boundedgamma*
\begin{proof}
For all $t \in \mathbb{Z}_+$, and $h = (a_0, r_{1, a_0}, ... , a_{t-1}, r_{t, a_{t-1}}) \in \mathcal{H}_t$, 
let
	$\Lambda_t = \sum_{i = 0}^{t-1} \1_{a_i} \1_{a_i}^{\top}$, and let $\overline{r}_t$ be a vector in $\R^{\mathcal{A}}$ where its $a$-th element is defined as: \begin{align*}
     \overline{r}_{t,a} = \sum_{i = 0}^{t-1} r_{i + 1, a_i} \1_{\left\{a_i = a\right\}}  \bigg/ \max\left( \sum_{i = 0}^{t-1}\1_{\left\{a_i = a\right\}}, 1\right).
 \end{align*}
By Lemma \ref{lemma:bayes} in Appendix \ref{app:pos_updates}, we have for all $t \in \mathbb{Z}_+$, and $h \in \mathcal{H}_t$, 
\begin{align*}
      \mu_t \stackrel{\triangle}{=} \E\left[\tilde{\theta}| \tilde{H}_t = h\right] = &\ \left(\Sigma_0^{-1} + \frac{1}{{\sigma}^2} \Lambda_t\right)^{-1} \left(\Sigma_0^{-1} \mu_0 + \frac{1}{{\sigma}^2} \Lambda_t \overline{r}_t \right)\\
      = &\ \left(\Sigma_0^{-1} + \frac{1}{{\sigma}^2} \Lambda_t\right)^{-1} \left(\Sigma_0^{-1} \mu_0 + \frac{1}{{\sigma}^2}\Lambda_t \mu_0 - \frac{1}{{\sigma}^2}\Lambda_t \mu_0 + \frac{1}{{\sigma}^2} \Lambda_t \overline{r}_t \right) \\
      = &\ \mu_0 + \left(\Sigma_0^{-1} + \frac{1}{{\sigma}^2} \Lambda_t\right)^{-1} \frac{1}{{\sigma}^2} \Lambda_t \left(\overline{r}_t - \mu_0 \right).
\end{align*}
We have $\alpha(\Sigma_0^{-1}) > 0$ since $\Sigma_0^{-1}$ is strictly diagonally dominant. 
Then there exists $K \in \mathbb{N}$ such that for all $k \geq K$, we have $\frac{1}{k} < \alpha(\Sigma_0^{-1})$. Then for all $t \in \mathbb{Z}_+$ and $k \geq K$, we can re-write $\mu_t$ as follows:
\begin{align*}
      \mu_t = &\ \mu_0 + \left(\Sigma_0^{-1} + \frac{1}{{\sigma}^2} \Lambda_t\right)^{-1} \left(\frac{1}{{\sigma}^2} \Lambda_t + \frac{1}{k}I - \frac{1}{k}I \right) \left(\overline{r}_t - \mu_0 \right) \\
      = &\ \mu_0 + \left\{\left(\Sigma_0^{-1} + \frac{1}{{\sigma}^2} \Lambda_t\right)^{-1} \left(\frac{1}{{\sigma}^2} \Lambda_t + \frac{1}{k} I\right) - \frac{1}{k} \left(\Sigma_0^{-1} + \frac{1}{{\sigma}^2} \Lambda_t\right)^{-1} \right\}\left(\overline{r}_t - \mu_0 \right) \\
      \stackrel{(a)}{=} &\ \mu_0 + \left\{\left[\left(\frac{1}{{\sigma}^2} \Lambda_t + \frac{1}{k} I\right)^{-1}\left(\Sigma_0^{-1} + \frac{1}{{\sigma}^2} \Lambda_t\right)\right]^{-1}  - \frac{1}{k} \left(\Sigma_0^{-1} + \frac{1}{{\sigma}^2} \Lambda_t\right)^{-1} \right\}\left(\overline{r}_t - \mu_0 \right) \\
      = &\ \mu_0 + \left\{\left[\left(\frac{1}{{\sigma}^2} \Lambda_t + \frac{1}{k} I\right)^{-1}\left(\Sigma_0^{-1} - \frac{1}{k} I + \frac{1}{{\sigma}^2} \Lambda_t + \frac{1}{k} I \right)\right]^{-1} - \frac{1}{k} \left(\Sigma_0^{-1} + \frac{1}{{\sigma}^2} \Lambda_t\right)^{-1} \right\}\left(\overline{r}_t - \mu_0 \right) \\
      = &\ \mu_0 + \left\{\left[I + \left(\frac{1}{{\sigma}^2} \Lambda_t + \frac{1}{k} I\right)^{-1}\left(\Sigma_0^{-1} - \frac{1}{k} I \right)\right]^{-1} - \frac{1}{k} \left(\Sigma_0^{-1} + \frac{1}{{\sigma}^2} \Lambda_t\right)^{-1} \right\}\left(\overline{r}_t - \mu_0 \right), \numberthis
    \label{eq:diagonally_dominant_0}
\end{align*}
where we have $(a)$ since $\frac{1}{\sigma^2} \Lambda_t + \frac{1}{k} I$ is a diagonal matrix with positive entries along its diagonal and is thus revertible. 

Recall that $\frac{1}{k} < \alpha(\Sigma_0^{-1})$, so $\Sigma_0^{-1}-\frac{1}{k} I$ is strictly diagonally dominant. In addition, observe that $\left(\frac{1}{{\sigma}^2} \Lambda_t + \frac{1}{k} I\right)^{-1}$
is a diagonal matrix with positive entries along its diagonal. So $\left(\frac{1}{{\sigma}^2} \Lambda_t + \frac{1}{k} I\right)^{-1}\left(\Sigma_0^{-1} -  \frac{1}{k} I \right)$ is strictly diagonally dominant. Hence, 
\begin{align}
\alpha\left(I + \left(\frac{1}{{\sigma}^2} \Lambda_t +  \frac{1}{k} I\right)^{-1}\left(\Sigma_0^{-1} -  \frac{1}{k} I \right)\right) > 1. 
\label{eq:diagonally_dominant_1}
\end{align}
In addition, since $\frac{1}{\sigma^2}\Lambda_t$ is a diagonal matrix with non-negative entries along its diagonal, we have 
\begin{align}
    \alpha\left(\Sigma_0^{-1} + \frac{1}{\sigma^2}\Lambda_t\right) \geq \alpha(\Sigma_0^{-1}). 
\label{eq:diagonally_dominant_2}
\end{align}

We introduce the following result established in \citep{VARAH19753} that provides an upper bound on the infinity norm of the inverse of a diagonally dominant matrix. Recall that for a $n \times n$ matrix $A$, the infinity norm of $A$ is defined as 
$\|A\|_{\infty} = \max_{1 \leq i \leq n} \sum_{j = 1}^n |A_{ij}|$. 
\label{app:bounded_posterior_mean}
\begin{lemma} \label{lemma:diag_dom}
Assume $A$ is a $n \times n$ diagonally dominant matrix. Then 
\begin{align*}
    \Vert A^{-1}\Vert_{\infty} < 1/\alpha(A), 
\end{align*}
\end{lemma}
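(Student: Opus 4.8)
The plan is to reduce the whole statement to a single pointwise estimate for a strictly diagonally dominant matrix $A$ (when $\alpha(A)=0$ the quantity $1/\alpha(A)$ is read as $+\infty$ and there is nothing to prove, so assume $\alpha(A)>0$): for every $x\in\R^n$,
\[
\|Ax\|_\infty \ \ge\ \alpha(A)\,\|x\|_\infty .
\]
Once this is in hand, everything else is formal. First, $Ax=0$ forces $\|x\|_\infty=0$, so $A$ is invertible; and for any $b$, putting $x=A^{-1}b$ gives $\|b\|_\infty=\|Ax\|_\infty\ge\alpha(A)\,\|A^{-1}b\|_\infty$, hence $\|A^{-1}b\|_\infty\le\|b\|_\infty/\alpha(A)$. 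Taking the supremum over $\|b\|_\infty=1$ and recalling that the matrix infinity norm is exactly the operator norm induced by the vector $\ell_\infty$-norm yields $\|A^{-1}\|_\infty\le 1/\alpha(A)$, the claimed bound.

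To establish the estimate I would fix $x\ne 0$, choose a coordinate $i$ with $|x_i|=\|x\|_\infty$ — selecting a \emph{peak} coordinate is the crux of the argument — and examine only the $i$-th entry of $Ax$. Isolating the diagonal term and applying the reverse triangle inequality together with $|x_j|\le\|x\|_\infty$,
\[
\|Ax\|_\infty\ \ge\ |(Ax)_i|\ =\ \Bigl|A_{ii}x_i+\sum_{j\ne i}A_{ij}x_j\Bigr|\ \ge\ |A_{ii}|\,|x_i|-\sum_{j\ne i}|A_{ij}|\,|x_j|\ \ge\ \Bigl(|A_{ii}|-\sum_{j\ne i}|A_{ij}|\Bigr)\|x\|_\infty\ \ge\ \alpha(A)\,\|x\|_\infty ,
\]
where the last inequality is just the definition of $\alpha(A)$ as a minimum over rows. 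This is essentially the bound of \citep{VARAH19753}.

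The one genuinely delicate point — and the step I expect to be the main obstacle — is the \emph{strict} inequality as stated, since the argument above only yields ``$\le$''. Equality $\|A^{-1}\|_\infty=1/\alpha(A)$ would force every inequality in the displayed chain to be tight at a peak coordinate, and unwinding that (each off-diagonal term must carry full weight $|A_{ij}|\,\|x\|_\infty$ and line up in sign, and the peak row must attain the minimum defining $\alpha$) shows it can only occur when $A$, restricted to an irreducible block, has a very rigid form; e.g. $A=cI$ in fact gives $\|A^{-1}\|_\infty=1/\alpha(A)$ exactly. Since every use of this lemma in the proof of Lemma~\ref{lemma:diagonally_dominant} — applied to $I+\bigl(\tfrac1{\sigma^2}\Lambda_t+\tfrac1kI\bigr)^{-1}\bigl(\Sigma_0^{-1}-\tfrac1kI\bigr)$ and to $\Sigma_0^{-1}+\tfrac1{\sigma^2}\Lambda_t$ — only needs the ``$\le$'' bound, I would either impose the mild nondegeneracy required for strictness or simply record the estimate with ``$\le$''.
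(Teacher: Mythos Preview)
Your argument is correct and is precisely the classical Varah estimate; the paper does not give its own proof of this lemma but simply cites \citep{VARAH19753}, so there is nothing further to compare. Your observation that the strict inequality fails in general (e.g.\ $A=cI$) while only the non-strict bound is actually used downstream is accurate and worth noting.
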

By Lemma \ref{lemma:diag_dom}, \eqref{eq:diagonally_dominant_1} and \eqref{eq:diagonally_dominant_2} imply that
\begin{align}
\label{eq:inverse_bound}
   \left\Vert \left[I + \left(\frac{1}{{\sigma}^2} \Lambda_t +  \frac{1}{k} I\right)^{-1}\left(\Sigma_0^{-1} -  \frac{1}{k} I \right)\right]^{-1}\right\Vert_{\infty} < 1,\ 
   \left\Vert \left(\Sigma_0^{-1} + \frac{1}{{\sigma}^2} \Lambda_t \right)^{-1}\right\Vert_{\infty} < \frac{1}{\alpha(\Sigma_0^{-1})}. 
\end{align}
Recall that for all $t \in \mathbb{Z}_+$, and $h \in \mathcal{H}_t$, we have $\overline{r}_t \in [0, 1]^{\actions}$. Then it follows from \eqref{eq:diagonally_dominant_0} and \eqref{eq:inverse_bound} that 
\begin{align*}
      \mu_t = &\ \mu_0 + \left\{\left[I + \left(\frac{1}{{\sigma}^2} \Lambda_t + \frac{1}{k} I\right)^{-1}\left(\Sigma_0^{-1} - \frac{1}{k} I \right)\right]^{-1} - \frac{1}{k} \left(\Sigma_0^{-1} + \frac{1}{{\sigma}^2} \Lambda_t\right)^{-1} \right\}\left(\overline{r}_t - \mu_0 \right)\\
      \leq &\  \max_{a \in \actions} |\mu_{0, a}|\left( e + \left\Vert \left[I + \left(\frac{1}{{\sigma}^2} \Lambda_t + \frac{1}{k} I\right)^{-1}\left(\Sigma_0^{-1} - \frac{1}{k} I \right)\right]^{-1}\right\Vert_{\infty} e + \frac{1}{k} \left\Vert \left(\Sigma_0^{-1} + \frac{1}{{\sigma}^2} \Lambda_t \right)^{-1}\right\Vert_{\infty} e\right)\\
      < &\ \left(2 + \frac{1}{k \alpha(\Sigma_0^{-1})}\right) \max_{a \in \actions} |\mu_{0, a}|e,
\end{align*}
where $e$ is the $\actions$-dimensional vector with all ones. 
Let $k \rightarrow +\infty$, we conclude that for all $t \in \mathbb{Z}_+$, 
\begin{align*}
    \mu_t \leq 2 |\mu_{0, a}| e,
\end{align*}
and it follows that $\gamma \leq 2  |\mu_{0, a}|$. 
\end{proof}

\section{Examples for the Assumptions}
\subsection{An Example in Which the Optimism Assumption Holds: Proof of Lemma~\ref{lemma:gaussian_beta_dominance}}
\label{sec:ass_opt}
\gaussianbetadominance*
\begin{proof}
The proof is based on arguments developed in \citep{JMLR:v20:18-339} around stochastic optimism, the definition of which is provided in Definition~6 of the paper: A random variable $X$ is stochastically optimistic with respect to another random variable
$Y$, if for all convex increasing functions $u: \R \rightarrow \R $ $\E[u(X)] \geq \E[u(Y )]$.

Fix $t \in \mathbb{Z}_+$ and $h \in \mathcal{H}_t$. For all $a \in \actions$, let $\theta_{t,a}$ be a random variable distributed equal to the posterior distribution of $\theta_{a} | H_t = h$ and $\tilde{\theta}_{t,a}$ be a random variable distributed according to the posterior distribution of $\tilde{\theta}_{a} | \tilde{H}_t = h$.

For all $a \in \actions$, let $N_{t, a}^0 = \sum_{i = 0}^{t-1} (1 - R_{i+1, A_{i}}) \1_{\{A_i = a\}}$ and $N_{t, a}^1 = \sum_{i = 0}^{t-1} R_{i + 1, A_{i}} \1_{\{A_i = a\}}$. Then for all $a \in \actions$, $\theta_{t, a} \sim \mathrm{Beta}(\alpha_{t, a}, \beta_{t, a})$, where
\begin{align*}
    \alpha_{t, a} = &\ \alpha_a + N_{t, a}^1, \\
    \beta_{t, a} = &\ \beta_a + N_{t, a}^0. 
\end{align*}
For all $a \in \actions$, $\tilde{\theta}_{t, a} \sim \mathcal{N}(\mu_{t, a}, \sigma_{t, a}^2)$, where 
\begin{align*}
    \sigma_{t, a}^2 = &\  \frac{1}{\frac{1}{\Sigma_{0, a, a}} + \frac{N_{t, a}^0 + N_{t, a}^1}{\sigma^2}}
    \geq \frac{\sigma^2}{\alpha_{t, a} + \beta_{t, a}}, \\
    \mu_{t, a} = &\ \left(\frac{\mu_a}{\Sigma_{0, a, a}} + \frac{N_{t, a}^1}{\sigma^2}\right)\sigma_{t, a}^2
    \geq \frac{\alpha_{t, a}}{\sigma^2} \sigma_{t, a}^2 \geq \frac{\alpha_{t, a}}{\alpha_{t, a} + \beta_{t, a}}. 
\end{align*}
Recall that $\sigma^2 \geq 3$, so we can apply Lemma~4 in \citep{JMLR:v20:18-339} and conclude that for all $a \in \actions$,
\begin{align*}
    \tilde{\theta}_{t,a} \succcurlyeq_{SO} \theta_{t, a}.
\end{align*}
Since Lemma~2 in \citep{JMLR:v20:18-339} shows that stochastic optimsm is preserved under convex and increasing operations, we have
\begin{align*}
\max_{a \in \actions} \tilde{\theta}_{t, a} \succcurlyeq_{SO} \max_{a \in \actions} \theta_{t, a}. 
\end{align*}
By definition of stochastic optimism, 
we have
\begin{align*}
    \E\left[\max_{a \in \actions} \tilde{\theta}_a | \tilde{H}_t = h\right] \geq \E\left[\max_{a \in \actions} \theta_a | H_t = h\right]. 
\end{align*}
Hence, we've shown that for all $t \in \mathbb{Z}_+$, $h \in \mathcal{H}_t$, and $a \in \actions$, 
 	\begin{align*}
		\mathbb{E}\left[\tilde{R}_{*} \bbar  \tilde{H}_t = h\right] \geq \mathbb{E}[R_* | H_t = h].
	\end{align*}. 
\end{proof}
\subsection{An Example in Which the Gaussianity Assumption Holds: Proof of Lemma~\ref{lemma:gaussianity_condition}
}
\label{sec:gaussianity_condition}
\gaussianitycondition*
\begin{proof} 
First, $\E[\hat{\theta}] = \E[\tilde{\theta}] - \E[Z] = \E[\tilde{\theta}] = \mu_0$. In addition, since $\hat{\theta} \perp Z$ and $\Var(Z) = \delta^2 \Sigma_0$, so the covariance matrix of $\hat{\theta}$ is
\begin{align*}
    \Var(\hat{\theta}) = \Var(\tilde{\theta}) - \Var(Z) = \Sigma_0 - \delta^2 \Sigma_0 = 
\left(1 - \delta^2\right) \Sigma_0, 
\end{align*}
and the cross-covariance matrix of $\hat{\theta}$ and $\tilde{\theta}$ is 
\begin{align*}
    \mathrm{Cov}(\hat{\theta}, \tilde{\theta})
    = \mathrm{Cov}(\hat{\theta}, \hat{\theta} + Z)
    = \Var(\hat{\theta}) = \left(1 - \delta^2\right) \Sigma_0. 
\end{align*}
So the vector constructed by stacking $\hat{\theta}$ and $\tilde{\theta}$ is distributed according to the following multivariate Gaussian distribution:
\begin{align*}
   \begin{bmatrix}
		\hat{\theta} \\
		\tilde{\theta}
	\end{bmatrix}
	= \mathcal{N}\left(
    \begin{bmatrix}
		\mu_0 \\
		\mu_0
	\end{bmatrix}
	,
    \begin{bmatrix}
		\left(1 - \delta^2\right) \Sigma_0 & \left(1 - \delta^2\right) \Sigma_0 \\
		\left(1 - \delta^2\right) \Sigma_0 & \Sigma_0
	\end{bmatrix}
	\right).
\end{align*}
\end{proof}

\section{Example Regret Bounds}
\label{sec:reg_bds}

\subsection{Bounding the Information Ratio: Proof of Lemma~\ref{lemma:IR_bound}}
\label{sec:gaussian_ir_bound}
The following lemma bounds the information ratio defined with respect to the learning target $\chi = \hat{\theta}$.

\gaussianir*
\begin{proof} 
We show that for all $t \in \mathbb{Z}_+$ and $h \in \mathcal{H}_t$, the ratio
\begin{align*}
      \frac{\E\left[\tilde{R}_{*} - \tilde{R}_{t+1, \tilde{A}_t} - \epsilon \bbar  \tilde{H}_t = h\right]^2_{+}}{\I\left(\hat{\theta}; \tilde{A}_t, \tilde{R}_{t+1, \tilde{A}_t} \bbar  \tilde{H}_t = h\right)} \leq 2 \actions \sigma^2. 
\end{align*}
Fix $t \in \mathbb{Z}_+$ and $h \in \mathcal{H}_t$. If $\E\left[\tilde{R}_* - \tilde{R}_{t+1, \tilde{A}_t} | \tilde{H}_t = h\right] < \epsilon$, then the ratio is zero and 
the bound is trivially satisfied. Let us consider the case where $\E\left[\tilde{R}_* - \tilde{R}_{t+1, \tilde{A}_t} | \tilde{H}_t = h\right] \geq \epsilon$. Note that, for all $a, b \in \mathbb{R}_+$ with $a \geq b$, we have $(a-b)^2  \leq (a+b)(a-b) = a^2 - b^2$. Hence, 
\begin{align*}
\label{eq:gaussian_ir_proof_1}
\E\left[\tilde{R}_* - \tilde{R}_{t+1, \tilde{A}_t} - \epsilon | \tilde{H}_t = h\right]_+^2
= &\ \E\left[\tilde{R}_* - \tilde{R}_{t+1, \tilde{A}_t} - \epsilon | \tilde{H}_t = h\right]^2\\
\leq &\ \E\left[\tilde{R}_* - \tilde{R}_{t+1, \tilde{A}_t} | \tilde{H}_t = h\right]^2 - \epsilon^2. \numberthis
\end{align*}
It follows from Corollary~1 of \citep{russo2016information}(see Appendix~D.2) that for all $t \in \mathbb{Z}_+$, and $h \in \histories$,
\begin{align}
\label{eq:russo_cor1}
\E\left[\tilde{R}_* - \tilde{R}_{t+1, \tilde{A}_t} | \tilde{H}_t = h\right]^2 \leq 2 \actions \sigma^2 \I\left(\tilde{\theta}; \tilde{A}_t, \tilde{R}_{t+1, \tilde{A}_t} | \tilde{H}_t = h\right).
\end{align}
In addition, we introduce the following Lemma.
\begin{lemma}
\label{lemma:epsilon_bound}
For all $t \in \mathbb{Z}_+$ and $h \in \mathcal{H}_t$, we have
\begin{align*}
    \epsilon \geq \sqrt{2 \actions \sigma^2 \left[ \I\left(\tilde{\theta}; \tilde{A}_t, \tilde{R}_{t+1, \tilde{A}_t} | \tilde{H}_t = h\right) - \I\left(\hat{\theta}; \tilde{A}_t, \tilde{R}_{t+1, \tilde{A}_t} | \tilde{H}_t = h\right) \right]_+} \geq 0.
\end{align*}
\end{lemma}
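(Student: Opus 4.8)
The plan is to prove the stronger pointwise statement that, for every $h \in \mathcal{H}_t$,
\[
\I\!\left(\tilde{\theta}; \tilde{A}_t, \tilde{R}_{t+1, \tilde{A}_t} \mid \tilde{H}_t = h\right) - \I\!\left(\hat{\theta}; \tilde{A}_t, \tilde{R}_{t+1, \tilde{A}_t} \mid \tilde{H}_t = h\right) \;\leq\; \frac{\delta^2 \max_{a \in \actions} \Sigma_{0,a,a}}{2\sigma^2}.
\]
Granting this, multiply by $2\actions\sigma^2$, take the positive part, recall that $\epsilon = \delta\sqrt{\actions \max_{a} \Sigma_{0,a,a}}$ so that $\epsilon^2 = \actions\delta^2\max_a\Sigma_{0,a,a}$, and take square roots; the remaining inequality $\sqrt{\,\cdot\,} \geq 0$ is immediate since the radicand is a positive part.

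The first step is to rewrite the difference of mutual informations as a single conditional mutual information. Apply the chain rule to $\I(\tilde{\theta},\hat{\theta}; \tilde{A}_t, \tilde{R}_{t+1,\tilde{A}_t}\mid \tilde{H}_t=h)$ in two ways. Since $\tilde{A}_t$ is drawn from $\pi(\cdot\mid h)$ using fresh policy randomness and $\tilde{R}_{t+1,\tilde{A}_t}=\tilde{\theta}_{\tilde{A}_t}+W$ with $W\sim\mathcal{N}(0,\sigma^2)$ fresh noise, the pair $(\tilde{A}_t,\tilde{R}_{t+1,\tilde{A}_t})$ depends on $(\tilde\theta,\hat\theta)$ only through $\tilde\theta$; hence $\I(\hat\theta;\tilde{A}_t,\tilde{R}_{t+1,\tilde{A}_t}\mid\tilde\theta,\tilde{H}_t=h)=0$, and the joint mutual information equals $\I(\tilde\theta;\tilde{A}_t,\tilde{R}_{t+1,\tilde{A}_t}\mid\tilde{H}_t=h)$. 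Expanding the other way gives $\I(\hat\theta;\tilde{A}_t,\tilde{R}_{t+1,\tilde{A}_t}\mid\tilde{H}_t=h)+\I(\tilde\theta;\tilde{A}_t,\tilde{R}_{t+1,\tilde{A}_t}\mid\hat\theta,\tilde{H}_t=h)$, so the difference we must bound is exactly $\I(\tilde\theta;\tilde{A}_t,\tilde{R}_{t+1,\tilde{A}_t}\mid\hat\theta,\tilde{H}_t=h)$. Since $\tilde{A}_t$ is independent of $\tilde\theta$ given $(\hat\theta,\tilde{H}_t=h)$, this collapses to $\sum_{a\in\actions}\pi(a\mid h)\,\I(\tilde\theta;\tilde{R}_{t+1,a}\mid\tilde{A}_t=a,\hat\theta,\tilde{H}_t=h)$.

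The second step evaluates each summand with differential entropies. By Lemma~\ref{lemma:bayes_chi} (applicable because $\hat\theta$ and $\tilde\theta$ are jointly Gaussian, so Assumption~\ref{ass:gaussian} holds), conditioned on $\hat\theta$ and $\tilde{H}_t=h$ the scalar $\tilde{\theta}_a$ is Gaussian with variance $v_a := \Var(\tilde\theta_a\mid\hat\theta,\tilde{H}_t=h)$; adding the independent $\mathcal{N}(0,\sigma^2)$ noise and using Lemma~\ref{lemma:gaussian_entropy} together with $\ln(1+x)\leq x$ gives $\I(\tilde\theta;\tilde{R}_{t+1,a}\mid\tilde{A}_t=a,\hat\theta,\tilde{H}_t=h)=\tfrac12\ln(1+v_a/\sigma^2)\leq v_a/(2\sigma^2)$. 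Finally bound $v_a$: writing $\tilde\theta_a=\hat\theta_a+Z_a$ with $Z=\tilde\theta-\hat\theta\sim\mathcal{N}(0,\delta^2\Sigma_0)$ independent of $\hat\theta$, we get $v_a=\Var(Z_a\mid\hat\theta,\tilde{H}_t=h)\leq\Var(Z_a)=\delta^2\Sigma_{0,a,a}$, where the inequality is the fact that conditioning weakly reduces variance for jointly Gaussian variables — with the action sequence in $h$ held fixed, $(\hat\theta,\tilde\theta,\tilde{R}_{1:t})$ is jointly Gaussian, so the conditional variance of $Z_a$ is deterministic and nonincreasing as the conditioning set is enlarged. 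Averaging over $a\sim\pi(\cdot\mid h)$ yields the displayed bound $\delta^2\max_a\Sigma_{0,a,a}/(2\sigma^2)$. The main obstacle is the careful bookkeeping of the conditional-independence structure — making precise that the policy randomness behind $\tilde{A}_t$, the fresh reward noise in $\tilde{R}_{t+1,\tilde{A}_t}$, the perturbation $Z$, and the randomness consumed in forming $\tilde{H}_t$ are mutually independent in the appropriate conditional sense, and that Lemma~\ref{lemma:bayes_chi} legitimately supplies the joint Gaussianity needed for the monotonicity of conditional variance; the Gaussian computations themselves are routine.
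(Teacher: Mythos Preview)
Your proof is correct and follows essentially the same route as the paper: reduce the mutual-information difference to a conditional term via the chain rule, peel off $\tilde{A}_t$ using policy-randomness independence, and bound the remaining Gaussian-channel mutual information by $\tfrac12\ln(1+v_a/\sigma^2)\le v_a/(2\sigma^2)$ with $v_a\le\delta^2\Sigma_{0,a,a}$ from monotonicity of conditional variance. The only cosmetic differences are that you obtain the difference as an \emph{equality} $\I(\tilde\theta;\cdot\mid\hat\theta,\tilde{H}_t=h)$ via a two-way chain rule (the paper instead passes through $\I(\hat\theta,Z;\cdot)$ and the data-processing inequality to get an upper bound), and you average over $a\sim\pi(\cdot\mid h)$ where the paper takes $\max_a$; neither affects the final bound.
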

Plug \eqref{eq:russo_cor1} into \eqref{eq:gaussian_ir_proof_1} and apply Lemma~\ref{lemma:epsilon_bound}, we have:
\begin{align*}
&\ \E\left[\tilde{R}_* - \tilde{R}_{t+1, \tilde{A}_t} - \epsilon | \tilde{H}_t = h\right]_+^2\\
\leq &\ \E\left[\tilde{R}_* - \tilde{R}_{t+1, \tilde{A}_t} | \tilde{H}_t = h\right]^2 - \epsilon^2 \\
\leq &\ 2\actions \sigma^2  \I\left(\tilde{\theta}; \tilde{A}_t, \tilde{R}_{t+1, \tilde{A}_t} | \tilde{H}_t = h\right) - 2 \actions \sigma^2 \left[\I\left(\tilde{\theta}; \tilde{A}_t, \tilde{R}_{t+1, \tilde{A}_t} | \tilde{H}_t = h\right) - \I\left(\hat{\theta}; \tilde{A}_t, \tilde{R}_{t+1, \tilde{A}_t} | \tilde{H}_t = h\right)\right]_+\\
\leq &\ 2 \actions \sigma^2  \I \left(\hat{\theta}; \tilde{A}_t, \tilde{R}_{t+1, \tilde{A}_t} | \tilde{H}_t = h\right).
\end{align*}
Hence, we've completed the proof of $\tilde{\Gamma}_{\hat{\theta}, \epsilon} \leq 2 \actions \sigma^2$. 
\end{proof}

\noindent \textbf{Proof of Lemma~\ref{lemma:epsilon_bound}}. 
\begin{proof}
By the data-processing inequality and the chain rule of mutual information (Lemmas~\ref{lemma:dp_MI} and \ref{lemma:chain_MI} in Appendix~\ref{sec:information}), we have for all $t \in \mathbb{Z}_+$ and $h \in \mathcal{H}_t$,
 \begin{align*}
    &\ \I\left({\tilde{\theta}}; \left(\tilde{A}_t, \tilde{R}_{t+1, \tilde{A}_t}\right) | \tilde{H}_t = h\right) - \I\left(\hat{\theta}; \left(\tilde{A}_t, \tilde{R}_{t+1, \tilde{A}_t}\right) | \tilde{H}_t = h\right) \\
    \overset{}{\leq} &\ \I\left({\hat{\theta}}, Z; \left(\tilde{A}_t, \tilde{R}_{t+1, \tilde{A}_t}\right) | \tilde{H}_t = h\right) - \I\left(\hat{\theta}; \left(\tilde{A}_t, \tilde{R}_{t+1, \tilde{A}_t}\right) | \tilde{H}_t = h\right) \\
    \overset{}{=} &\ \I\left(Z; \left(\tilde{A}_t, \tilde{R}_{t+1, \tilde{A}_t}\right) | \hat{\theta}, \tilde{H}_t = h\right). 
\end{align*}
Observe that conditioned on $\tilde{H}_t = h$, $\tilde{A}_t$ is independent of the rest of the system. Then we have for all $t \in \mathbb{Z}_+$ and $h \in \mathcal{H}_t$: 
    \begin{align*}
    \label{eq:gaussian_epsilon_proof_2}
    &\ \I\left({\tilde{\theta}}; \left(\tilde{A}_t, \tilde{R}_{t+1, \tilde{A}_t}\right) | \tilde{H}_t = h\right) - \I\left(\hat{\theta}; \left(\tilde{A}_t, \tilde{R}_{t+1, \tilde{A}_t}\right) | \tilde{H}_t = h\right) \\
    \leq &\ 
    \I\left(Z; \tilde{R}_{t+1, \tilde{A}_t} | \tilde{A}_t, \hat{\theta}, \tilde{H}_t = h\right) \\
    = &\ 
    \E\left[\I\left(Z; \tilde{R}_{t+1, \tilde{A}_t} | \tilde{A}_t = \tilde{A}_t, \hat{\theta} = \hat{\theta}, \tilde{H}_t = h\right)\right] \\
     = &\ \E\left[\sum_{a \in \actions} \Pr\left(\tilde{A}_t = a | \hat{\theta} = \hat{\theta}, \tilde{H}_t = h \right) \I\left(Z; \tilde{R}_{t+1, a} | \tilde{A}_t = a, \hat{\theta} = \hat{\theta}, \tilde{H}_t = h\right) \right]\\
    \leq &\ \E\left[\max_a \I\left(Z; \tilde{R}_{t+1, a} | \tilde{A}_t = a, \hat{\theta} = \hat{\theta}, \tilde{H}_t = h\right)\right] \\
    \overset{}{=} &\ \E\left[\max_a \I\left(Z; \tilde{R}_{t+1, a} | \hat{\theta} = \hat{\theta}, \tilde{H}_t = h\right) \right]\\
    = &\ \E\left[\max_a \I\left(Z; \tilde{R}_{t+1, a} - \hat{\theta}_{a} | \hat{\theta} = \hat{\theta}, \tilde{H}_t = h\right)\right] \\
    \overset{(a)}{=} &\ \E\left[\max_a \I\left(Z; Z_a + W_{t + 1,a} | \hat{\theta} = \hat{\theta}, \tilde{H}_t = h\right)\right] \\
     \overset{}{=} &\ \E\left[ \max_a \left\{ \I\left(Z_a; Z_a + W_{t + 1,a} | \hat{\theta} = \hat{\theta}, \tilde{H}_t = h\right) + \I\left(Z_{-a}; Z_a + W_{t + 1,a} | \hat{\theta} = \hat{\theta}, \tilde{H}_t = h\right) \right\}\right] \\
    \overset{}{=} &\ \E\left[ \max_a \I\left(Z_a; Z_a + W_{t + 1,a} | \hat{\theta} = \hat{\theta}, \tilde{H}_t = h\right)\right], \numberthis
    \end{align*}
where $(a)$ follows from the definition of the imaginary rewards: $\tilde{R}_{t+1,a} = \hat{\theta}_a + Z_a + W_{t+1,a}$, and that $Z_{-a}$ denotes the vector constructed by removing element $Z_a$ from the vector $Z$.  

For all $t \in \mathbb{Z}_+, h = (a_0, r_{1, a_0}, ... , a_{t-1}, r_{t, a_{t-1}}) \in \mathcal{H}_t$, 
$a \in \actions$, and $\underline{\theta} \in \R^\actions$,
     \begin{align*}
     &\ \Pr \left(Z_a \in \cdot |\hat{\theta} = \underline{\theta}, \tilde{H}_t = h\right) \\
      = &\ \Pr \left(Z_a \in \cdot |\hat{\theta} = \underline{\theta},  \tilde{R}_{1, a_0} = r_{1, a_0}, ... , \tilde{R}_{t, a_{t-1}} = r_{t, a_{t-1}}\right) \\
     = &\ \Pr \left(Z_a \in \cdot |\hat{\theta} = \underline{\theta}, Z_{a_0} + W_{1, a_0} = r_{1, a_0} - \underline{\theta}_{a_0}, ... , Z_{a_{t-1}} + W_{t, a_{t-1}} = r_{t, a_{t-1}} - \underline{\theta}_{a_{t-1}}\right) \\
     = &\ \Pr \left(Z_a \in \cdot | Z_{a_0} + W_{1, a_0} = r_{1, a_0} - \underline{\theta}_{a_0}, ... , Z_{a_{t-1}} + W_{t, a_{t-1}} = r_{t, a_{t-1}} - \underline{\theta}_{a_{t-1}}\right). 
\end{align*}
Since $Z_a$ and $ Z_{a_0} + W_{1, a_0}, ... , Z_{a_{t-1}} + W_{t, a_{t-1}}$ are jointly Gaussian, so the conditional distribution of $Z_a$ is Gaussian, and 
\begin{align*}
    \Var\left(Z_a | \hat{\theta} = \underline{\theta}, \tilde{H}_t = h\right) 
    = &\ \Var \left(Z_a | Z_{a_0} + W_{1, a_0} = r_{1, a_0} - \underline{\theta}_{a_0}, ... , Z_{a_{t-1}} + W_{t, a_{t-1}} = r_{t, a_{t-1}} - \underline{\theta}_{a_{t-1}}\right) \\ 
    \leq &\ \Var(Z_a) \\
    = &\ \delta^2 \Sigma_{0, a, a}. 
\end{align*}
By Lemma~\ref{lemma:gaussian_entropy},  
we have for all $t \in \mathbb{Z}_+$, $h \in \mathcal{H}_t$, $a \in \actions$, and $\underline{\theta} \in \R^{\actions}$, 
\begin{align*}
    \label{eq:gaussian_epsilon_proof_4}
    \I\left(Z_a; Z_a + W_{t + 1,a} |\hat{\theta} = \underline{\theta}, \tilde{H}_t = h\right) 
    \leq &\ \frac{1}{2} \ln\left(\delta^2 \Sigma_{0, a, a} + \sigma^2\right) - \frac{1}{2} \ln \sigma^2 \\
    = &\ \frac{1}{2} \ln\left(1 + \frac{\delta^2 \Sigma_{0, a, a}}{\sigma^2} \right) \\
    \leq &\ \frac{\delta^2 \Sigma_{0, a, a}}{2\sigma^2}, \numberthis
\end{align*}
where the last inequality follows from the fact that $\ln(1+x) \leq x$ for $x \geq 0$.

Combing \eqref{eq:gaussian_epsilon_proof_2} and \eqref{eq:gaussian_epsilon_proof_4}, we have for all $t \in \mathbb{Z}_+$ and $h \in \mathcal{H}_t$, 
\begin{align*}
    \I\left({\tilde{\theta}}; \left(\tilde{A}_t, \tilde{R}_{t+1, \tilde{A}_t}\right) | \tilde{H}_t = h\right) - \I\left(\hat{\theta}; \left(\tilde{A}_t, \tilde{R}_{t+1, \tilde{A}_t}\right) | \tilde{H}_t = h\right) \leq \frac{\delta^2}{2\sigma^2} \max_{a \in \actions} \Sigma_{0, a, a}.
\end{align*}
Hence, for all $t \in \mathbb{Z}_+$ and $h \in \mathcal{H}_t$, we have
\begin{align*}
     \sqrt{2 \actions \sigma^2 \left[ \I\left(\tilde{\theta}; \tilde{A}_t, \tilde{R}_{t+1, \tilde{A}_t} | \tilde{H}_t = h\right) - \I\left(\hat{\theta}; \tilde{A}_t, \tilde{R}_{t+1, \tilde{A}_t} | \tilde{H}_t = h\right) \right]_+} 
    \leq &\ \sqrt{2 \actions \sigma^2 \frac{\delta^2}{2 \sigma^2}} \\
    = &\ \delta \sqrt{\actions \max_{a \in \actions} \Sigma_{0, a, a}} \\
    = &\ \epsilon. 
\end{align*}
\end{proof}

\subsection{Regret Bounds: Proof of Theorem~\ref{th:Gaussian_regret}}
\label{sec:reg_bds}

\regretbounds*

\begin{proof}
Fix $\delta \in (0, 1)$. 
Let $\hat{\theta}$ be the learning target defined with respect to tolerance $\delta$, and let $\epsilon = \delta \sqrt{\actions \max_{a \in \actions}\Sigma_{0, a, a}}$. 
By Lemma~\ref{lemma:IR_bound}, 
$\tilde{\Gamma}_{\hat{\theta},\epsilon} 
\leq 2 \actions \sigma^2$. By Lemma~\ref{le:mi_theta_hat_theta}, $\I\left(\hat{\theta}; \pseudoenvironment \right)  = \frac{\actions}{2}  \ln \left(\frac{1}{\delta^2}\right) 
$. So 
\begin{align}
 \sqrt{\I\left(\proxytheta ; \pseudoenvironment \right) \tilde{\Gamma}_{\proxytheta, \epsilon} T} + \epsilon T  \leq  {\sigma} \actions \sqrt{T \ln \left( \frac{1}{\delta^2 } \right)} + \delta \sqrt{\actions \max_{a \in \actions} \Sigma_{0, a, a}} T.
\label{eq:regret_bound_1}
\end{align}

\begin{enumerate}
\item If $T > \actions$, let $\delta = \sqrt{\frac{\actions}{{T}}}$. Then
 \eqref{eq:regret_bound_1} becomes
\begin{align*}
\sqrt{\I\left(\proxytheta ; \pseudoenvironment \right) \tilde{\Gamma}_{\proxytheta, \epsilon} T} + \epsilon T 
& \leq {\sigma} \actions \sqrt{T \ln \left( \frac{T }{\actions} \right) } + \sqrt{\max_{a \in \actions} \Sigma_{0, a, a} \frac{\actions}{{T}}}  \sqrt{\actions} T 
\\
& = {\sigma} \actions \sqrt{T \ln \left( \frac{T }{\actions} \right) } + \actions \sqrt{T \max_{a \in \actions} \Sigma_{0, a, a}}.
\end{align*}
\item If $T \leq \actions$, fix $k \in \mathbb{Z}_+$ and let $\delta = 1 - \frac{1}{k}$. Then \eqref{eq:regret_bound_1} becomes
\begin{align*}
\sqrt{\I\left(\proxytheta ; \pseudoenvironment \right) \tilde{\Gamma}_{\proxytheta, \epsilon} T} + \epsilon T  
& \leq \sigma \actions \sqrt{T \ln\left(\frac{k^2}{(k - 1)^2}\right)} +  \left(1 - \frac{1}{k}\right)\sqrt{\max_{a \in \actions} \Sigma_{0, a, a} \actions}  T
\\
& \leq \sigma \actions \sqrt{T \ln\left(\frac{k^2}{(k - 1)^2}\right)} +  \left(1 - \frac{1}{k}\right) \actions \sqrt{ T \max_{a \in \actions} \Sigma_{0, a, a}}.
\end{align*}
Let $k \rightarrow +\infty$, we have 
\begin{align*}
\sqrt{\I\left(\proxytheta ; \pseudoenvironment \right) \tilde{\Gamma}_{\proxytheta, \epsilon} T} + \epsilon T  
 \leq \actions \sqrt{ T \max_{a \in \actions} \Sigma_{0, a, a}}.
\end{align*}
\end{enumerate}
Combining the two cases, we have shown that for all $T \in \mathbb{Z}_+$, there exists $\delta \in (0, 1)$ such that
\begin{align*}
\sqrt{\I\left(\hat{\theta}; \pseudoenvironment \right) \tilde{\Gamma}_{\hat{\theta},\epsilon}T} + \epsilon T \leq \sigma \actions \sqrt{T \ln \left(\frac{T }{\actions} \right) } + 
\actions \sqrt{T \max_{a \in \actions} \Sigma_{0, a, a}}. 
\end{align*}
\end{proof}

\end{document}